\documentclass[11pt,letterpaper]{article}

\usepackage[T1]{fontenc}
\usepackage[utf8]{inputenc}
\usepackage{lmodern}
\usepackage[margin=1in]{geometry}
\usepackage[round,authoryear]{natbib}

\usepackage{amsmath,amssymb,amsfonts,bm,mathtools,mathrsfs}
\usepackage{amsthm}
\usepackage[dvipsnames]{xcolor}
\usepackage{graphicx}
\usepackage{booktabs}
\usepackage{microtype}
\usepackage{nicefrac}
\usepackage{enumitem}
\usepackage[colorlinks=true,allcolors=MidnightBlue]{hyperref}
\usepackage[capitalize,nameinlink]{cleveref}

\allowdisplaybreaks
\providecommand{\coloneq}{\coloneqq}

\newcommand{\E}{\mathbb{E}}
\newcommand{\PB}{\mathbb{P}}
\newcommand{\RB}{\mathbb{R}}
\newcommand{\ind}{\mathbf{1}}
\newcommand{\prn}[1]{\left(#1\right)}
\newcommand{\brk}[1]{\left[#1\right]}
\newcommand{\brc}[1]{\left\{#1\right\}}
\newcommand{\abs}[1]{\left|#1\right|}
\newcommand{\norm}[1]{\left\|#1\right\|}

\newcommand{\wtilde}[1]{\widetilde{#1}}
\newcommand{\diag}{\operatorname{diag}}
\newcommand{\Var}{\mathsf{Var}}

\newcommand{\TV}{\mathrm{TV}}

\def\gS{{\mathcal S}}
\def\gF{{\mathcal F}}
\def\gG{{\mathcal G}}

\def\bB{{\bm B}}

\def\bD{{\bm D}}
\def\be{{\bm e}}
\def\bI{{\bm I}}

\def\bP{{\bm P}}
\def\bPhi{{\bm\Phi}}
\def\bp{{\bm p}}
\def\bQ{{\bm Q}}

\def\br{{\bm r}}
\def\bv{{\bm v}}
\def\bu{{\bm u}}
\def\bq{{\bm q}}
\def\bV{{\bm V}}
\def\bY{{\bm Y}}
\def\bU{{\bm U}}
\def\bDelta{{\bm\Delta}}
\def\bmu{{\bm\mu}}
\def\bone{{\bm 1}}
\def\tmix{t_{\operatorname{mix}}}
\def\mumin{\mu_{\min}}

\newcommand{\TD}{{\texttt{TD}(0)}}

\hypersetup{
  pdftitle={Sharp Statistical Rates for Asynchronous TD Learning with Markovian Data},
  pdfauthor={Yang Peng}
}

\newtheorem{theorem}{Theorem}[section]
\newtheorem{lemma}[theorem]{Lemma}
\newtheorem{proposition}[theorem]{Proposition}
\newtheorem{corollary}[theorem]{Corollary}
\newtheorem{assumption}[theorem]{Assumption}
\theoremstyle{remark}
\newtheorem{remark}[theorem]{Remark}

\title{Sharp Statistical Rates for Asynchronous\\
TD Learning with Markovian Data}

\author{Yang Peng\\
Yau Mathematical Sciences Center, Tsinghua University\\
\texttt{yang-peng@mail.tsinghua.edu.cn}}
\date{}

\begin{document}
\maketitle

\begin{abstract}
We study the last iterate of standard tabular temporal-difference (TD) learning
from a single trajectory of a finite Markov reward process.
For discount factor $\gamma$, write $H=(1-\gamma)^{-1}$, and let
$\mumin$ and $\tmix$ denote the minimum stationary probability
and total-variation mixing time.
We prove that last-iterate TD achieves sup-norm error at most
$\varepsilon$ with high probability using
\[
\wtilde O\prn{
  \frac{H^3}{\mumin\varepsilon^2}
  +\frac{\tmix}{\mumin}
}
\]
transitions, for $0<\varepsilon\leq1$.
This rate holds both for a constant step size selected for the target
accuracy and for a decreasing schedule independent of the target accuracy
and terminal time. The latter gives a simultaneous guarantee over all
times beyond an explicit transient threshold.
The statistical term retains the cubic effective-horizon dependence
of synchronous TD, and the additive mixing transient has no extra
horizon factor.
The result allows non-reversible chains, arbitrary initial state
distributions, and bounded rewards that may depend on the next state.
The proof uses an anchored local Poisson equation in reverse time to
control stochastic fluctuations without a mixing-time factor, and a
hitting-time compensation identity to bound initialization error.
The latter also yields a finer transient in terms of the worst expected
reverse hitting time.
A bound on the expected cumulative propagation mass extends this argument
to decreasing step sizes.
A three-state construction with known deterministic rewards gives
matching minimax lower bounds for the statistical and mixing terms,
up to logarithms, over specified model classes in a slow-mixing parameter regime.
\end{abstract}

\section{Introduction}
\label{Section:intro}
Policy evaluation asks how much discounted reward a fixed policy will collect from each starting state.
It is a basic component of reinforcement learning, both as a task in its own right and as a step in improving a policy.
Temporal-difference learning estimates these values by updating the current estimate after each observed transition \citep{sutton1988learning}.
In its standard asynchronous form, TD updates only the coordinate corresponding to the current state and stores only the value estimate, making it well suited to online learning from a single trajectory.

The statistical analysis depends on how those samples are obtained.
Under synchronous generative sampling, one can request an independent transition from every state in each round.
Under trajectory sampling, the next state is sampled from the transition distribution at the current state.
Some states are visited less frequently than others, and consecutive updates are dependent.
These differences introduce two parameters in addition to the effective horizon $H=(1-\gamma)^{-1}$: the minimum stationary probability $\mumin$, which measures state coverage, and the mixing time $\tmix$, which measures how quickly the chain loses information about its starting state.

For synchronous tabular TD, \citet[Theorem~1]{li2024q} established the sharp statistical dependence $H^3/\varepsilon^2$ in the number of samples per state.
The total number of transitions is therefore $\wtilde O(dH^3/\varepsilon^2)$, where $d$ is the number of states.
For a single Markov trajectory, their asynchronous Q-learning theorem specializes to TD when there is one action and gives
\begin{equation}\label{eq:intro_existing}
\wtilde O\prn{\frac{H^4}{\mumin\varepsilon^2}
                   +\frac{\tmix H}{\mumin}}.
\end{equation}
This bound has additive dependence on the mixing time, but leaves a factor-$H$ gap with the synchronous statistical rate.

We prove that ordinary asynchronous TD retains the cubic statistical term and has a smaller additive mixing transient.
For any terminal time fixed in advance, a suitable constant step size gives high-probability sup-norm accuracy $\varepsilon$ with
\begin{equation}\label{eq:intro_rate}
\wtilde O\prn{\frac{H^3}{\mumin\varepsilon^2}
                   +\frac{\tmix}{\mumin}}
\end{equation}
transitions.
The algorithm is the standard last-iterate recursion.
The result holds for any initial state distribution and for finite, irreducible, aperiodic chains, including non-reversible chains.
We allow bounded random rewards whose distribution depends on the current and next state.
An explicit choice of the step size and a sufficient integer sample size are given in Theorem~\ref{theorem:main}.

The constant step size in Theorem~\ref{theorem:main} depends on the
target accuracy.
Theorem~\ref{theorem:decay} attains the same transition count with a
decreasing schedule that depends on neither the target accuracy nor the
terminal time. It gives one high-probability event on which the error
bound holds at all times beyond an explicit transient threshold.
The schedule uses the effective horizon, a lower bound on the minimum
stationary probability, and the confidence level, but no mixing-time input.

\paragraph{The role of dependence.}
The proof separates stochastic fluctuations from initialization error while retaining their exact random propagation matrices.
For the stochastic term, a local Poisson equation in reverse time and a variance identity for discounted returns give a uniform expected-energy bound, which we convert into an exponential tail bound.
For initialization, we represent the remaining propagation mass as the survival probability of an auxiliary particle.
A compensation identity for its successive hitting costs avoids paying a separate mixing-time cost for each effective-horizon contraction.
These two estimates yield the statistical term and additive transient in \eqref{eq:intro_rate}.
For decreasing steps, an expected accumulated-mass bound replaces the
time-homogeneous particle-lifetime equations. This bound preserves the
additive hitting-time cost even as the step size changes.
Section~\ref{Section:overview} gives a proof overview.

\paragraph{Comparison with existing guarantees.}
For variance-reduced asynchronous TD, \citet[Theorem~4]{li2022async} establish a guarantee with the same cubic statistical term but an additive $\tmix H/\mumin$ transient.
Our result applies to the ordinary TD recursion and also removes this extra horizon factor from the available mixing-time bound.
Table~\ref{table:comparison} summarizes the comparison.
The cited tabular results use deterministic rewards and, for the displayed effective-horizon comparison, $\gamma\geq1/2$; the table compares guarantees on this common subclass, with $0<\varepsilon\leq1$.
Our theorem also covers the bounded Markov reward model specified in Section~\ref{Section:setup}.

\begin{table}[t]
\centering
\small
\caption{Sufficient total transition counts, up to logarithmic factors. The first row uses a synchronous generative model; the other rows use a single Markov trajectory. The final row covers both constant and decreasing step sizes.}
\label{table:comparison}
\begin{tabular}{@{}ll@{}}
\toprule
Method and source & Total transition count \\
\midrule
Synchronous TD \citep[Theorem~1]{li2024q}
  & $dH^3/\varepsilon^2$ \\
Asynchronous TD \citep[Theorem~4, one action]{li2024q}
  & $H^4/(\mumin\varepsilon^2)+\tmix H/\mumin$ \\
Variance-reduced TD \citep[Theorem~4, one action]{li2022async}
  & $H^3/(\mumin\varepsilon^2)+\tmix H/\mumin$ \\
Asynchronous TD (Theorems~\ref{theorem:main}, \ref{theorem:decay})
  & $H^3/(\mumin\varepsilon^2)+\tmix/\mumin$ \\
\bottomrule
\end{tabular}
\end{table}

\paragraph{Markovian stochastic approximation.}
Classical work establishes the convergence of asynchronous stochastic approximation and Q-learning \citep{tsitsiklis1994asynchronous}.
Finite-sample analyses include moment bounds for linear recursions \citep{srikant2019finite} and Lyapunov bounds for contractive recursions \citep{chen2024lyapunov}.
\citet{mou2024optimal} establish last-iterate squared-error bounds and instance-dependent guarantees for averaged iterates that match a local minimax benchmark, with applications to TD($\lambda$).
\citet{samsonov2024improved} derive high-probability bounds for linear TD with Polyak--Ruppert tail averaging under both independent and Markovian observations, using exponential stability of random matrix products to control initialization and fluctuations.
Our target is high-probability sup-norm control of the tabular last iterate, with explicit worst-case dependence on $H$, $\mumin$, and $\tmix$.
Our analysis controls Bellman innovations jointly with their random propagation weights and exploits the resulting cancellations.

\citet{huo2022bias} analyze the stationary distribution, stepsize-dependent bias, and extrapolation of constant-step Markovian linear stochastic approximation.
Their bias analysis already uses the reverse kernel $P^\star$ and the resolvent $(I-P^\star+\Pi)^{-1}$, where $\Pi$ is stationary averaging.
Our Poisson equation exploits a source supported on edges entering a single state.
Anchoring the solution at that state bounds its range independently of mixing and gives a variance compensation identity.
Combined with the return-variance identity, this yields the cubic statistical term; a separate compensation of hitting costs yields the initialization transient.

\paragraph{Why both terms are necessary.}
We first give a two-state example illustrating the coverage obstruction: until a rare state is visited, two models with different value functions can generate identical observations.
We then construct a three-state family with fixed, known rewards, varying only the transitions within a pair of rare states.
Within this family, widely separated transition parameters yield a coverage lower bound, whereas nearby parameters yield a statistical lower bound through the relative entropy of the observed trajectory.
Both testing pairs have the same stationary probabilities and mixing-time scale.
This gives a matching joint lower bound in the regime $H\geq64$ and $\tmix\gtrsim H$ represented by that family.
Corollary~\ref{corollary:budget_minimax} states the corresponding minimax result using a common mixing-time budget, so that the upper and lower bounds refer to exactly the same model class.

\paragraph{Organization.}
Sections~\ref{Section:setup} and \ref{Section:results} give the model and main guarantees, followed by a proof overview in Section~\ref{Section:overview}.
Section~\ref{Section:reverse} develops the reverse-time identities and energy estimates, and Section~\ref{Section:concentration} proves concentration of the stochastic convolution.
Section~\ref{Section:initialization} introduces the killed-particle representation and proves the initialization bound.
Section~\ref{Section:main_proof} combines the upper-bound ingredients, and Section~\ref{Section:lower_bounds} proves the matching lower bounds with their precise parameter ranges.
Appendices~\ref{Appendix:reverse}--\ref{Appendix:hitting_details} give
the reverse reward-kernel construction, the auxiliary exponential
inequalities, and the hitting-time identities used in the
constant-step analysis.
Appendix~\ref{Appendix:decay} proves interval bounds for nonincreasing
step sizes and derives Theorem~\ref{theorem:decay}.

\section{Problem Formulation}
\label{Section:setup}
Let $\gS=[d]$ be a finite state space.
A fixed policy induces a time-homogeneous Markov reward process $\{(S_t,R_t,S_{t+1})\}_{t\geq0}$.
More precisely, for the history
\[
\gF_t\coloneq\sigma(\bV_0,S_0,R_0,S_1,\ldots,R_{t-1},S_t),
\]
we assume that
\begin{equation}\label{eq:data_model}
\PB(R_t\in dr,S_{t+1}=x\mid\gF_t)
=\bP(S_t,x)K(dr\mid S_t,x).
\end{equation}
Here $\bP$ is a transition matrix, and $K$ is a conditional reward distribution.
Thus the reward and the next state may be dependent given the current state.
The full-history condition in \eqref{eq:data_model} ensures that no additional past information changes the law of the next reward-transition pair.
On an edge $\bP(y,x)=0$, choose $K(\cdot\mid y,x)$ arbitrarily; this choice has no effect on any trajectory law.

\begin{assumption}[Rewards and initialization]\label{assumption:bounded}
The rewards satisfy $0\leq R_t\leq1$ almost surely.
The discount factor satisfies $\gamma\in(0,1)$, and $H\coloneq(1-\gamma)^{-1}$.
The initial estimate satisfies $\bm0\leq\bV_0\leq H\bone$ entrywise.
The initial state has an arbitrary distribution $\nu$ on $\gS$.
\end{assumption}

Writing $r(s)\coloneq\E[R_t\mid S_t=s]$, the value function is
\begin{equation}\label{eq:bellman_equation}
V^\pi(s)\coloneq\E_s\brk{\sum_{k\geq0}\gamma^kR_k},
\qquad
\bV^\pi=\br+\gamma\bP\bV^\pi.
\end{equation}
Here $\E_s$ denotes expectation for the process started at state $s$, regardless of the actual initial distribution $\nu$.
In particular, $\bm0\leq\bV^\pi\leq H\bone$.
Throughout, inequalities between vectors are entrywise, $\norm{\cdot}_\infty$ is the maximum absolute coordinate, and $\norm{\cdot}_{\infty\to\infty}$ is its induced matrix norm.

\begin{assumption}[State chain]\label{assumption:mixing}
The matrix $\bP$ is irreducible and aperiodic, with stationary distribution $\bmu$.
Let
\begin{equation}\label{eq:mixing_profile}
\begin{split}
\mumin&\coloneq\min_{s\in\gS}\mu(s)>0,\\
d_{\mathrm{TV}}(k)&\coloneq\max_{s\in\gS}
\norm{\bP^k(s,\cdot)-\bmu}_{\TV},\qquad
\tmix\coloneq\min\brc{k\geq1:d_{\mathrm{TV}}(k)\leq1/4}.
\end{split}
\end{equation}
\end{assumption}

We use $\norm{\lambda-\lambda'}_{\TV}=\frac12\sum_s\abs{\lambda(s)-\lambda'(s)}$.
On a finite state space these assumptions imply uniform geometric ergodicity.
Reversibility is not required.
Defining $\tmix$ over positive integers also covers a one-state chain without a separate convention.

\paragraph{The algorithm.}
For deterministic step sizes $0<\eta_t\leq1$, asynchronous \TD\ performs the update
\begin{equation}\label{eq:td_update_vector}
\bV_{t+1}=\bV_t+\eta_t\be_{S_t}
\prn{R_t+\gamma V_t(S_{t+1})-V_t(S_t)}.
\end{equation}
Only the currently visited coordinate changes.
The output after $T$ observed transitions is $\bV_T$.
We consider both constant steps $\eta_t\equiv\eta$ and a nonincreasing
schedule that does not depend on the target precision or terminal time.
Since $1+\gamma H=H$, induction in \eqref{eq:td_update_vector} shows that $\bm0\leq\bV_t\leq H\bone$ for all $t$.

\paragraph{Exact error propagation.}
Define the error and the true-value Bellman innovation by
\begin{equation}\label{eq:innovation}
\bDelta_t\coloneq\bV_t-\bV^\pi,
\qquad
\beta(y,r,x)\coloneq r+\gamma V^\pi(x)-V^\pi(y).
\end{equation}
The Bellman equation implies $\E[\beta(S_t,R_t,S_{t+1})\mid\gF_t]=0$.
For each source state $y$, this innovation lies in the interval $[-V^\pi(y),H-V^\pi(y)]$, of width $H$.
Set
\begin{equation}\label{eq:random_matrix}
\widehat\bB_t\coloneq\bI-\eta_t\be_{S_t}\be_{S_t}^{\top}
                +\eta_t\gamma\be_{S_t}\be_{S_{t+1}}^{\top},
\end{equation}
and, for $0\leq s<t$, define
\begin{equation}\label{eq:random_product}
\widehat\bPhi_{t,s}\coloneq\widehat\bB_{t-1}\cdots\widehat\bB_s,
\qquad \widehat\bPhi_{s,s}\coloneq\bI.
\end{equation}
Every matrix in \eqref{eq:random_matrix} is nonnegative with row sums at most one.
Subtracting the true value from \eqref{eq:td_update_vector} gives the exact identities
\begin{align}
\bDelta_{t+1}
&=\widehat\bB_t\bDelta_t
 +\eta_t\be_{S_t}\beta(S_t,R_t,S_{t+1}),\label{eq:exact_recursion}\\
\bDelta_T&=\widehat\bPhi_{T,0}\bDelta_0+\bY_T,\label{eq:error_decomposition}\\
\bY_T&\coloneq\sum_{t=0}^{T-1}\eta_t
\widehat\bPhi_{T,t+1}\be_{S_t}\beta(S_t,R_t,S_{t+1}).\label{eq:stochastic_convolution}
\end{align}
The first term in \eqref{eq:error_decomposition} is the remaining initial error.
The second accumulates the transition and reward noise through the same random matrices that propagate that error.
Although each Bellman innovation is centered in forward time, its coefficient in \eqref{eq:stochastic_convolution} depends on future states.
This dependence is the reason for the reverse-time analysis below.

All logarithms are natural unless a base is displayed.
The notation $\wtilde O$ suppresses polylogarithmic factors in the problem parameters, including the confidence level.

\section{Main Results}
\label{Section:results}
We give explicit guarantees for ordinary asynchronous TD with constant
and decreasing step sizes.
Relative to the one-action specialization of \citet[Theorem~4]{li2024q}, the bound improves the statistical dependence from $H^4$ to $H^3$ and removes a factor of $H$ from the mixing transient.

\subsection{Constant step size}

\begin{theorem}[Last-iterate TD with a constant step size]\label{theorem:main}
Under Assumptions~\ref{assumption:bounded} and \ref{assumption:mixing}, let $0<\varepsilon\leq1$ and $0<\delta<1$.
Define
\begin{equation}\label{eq:main_stepsize}
L\coloneq\log\frac{8d}{\delta\mumin},
\qquad
\eta\coloneq\frac{\varepsilon^2}{256H^2L^2},
\qquad
\tau\coloneq\tmix\left\lceil\log_2\frac{2}{\mumin}\right\rceil,
\end{equation}
and take the integers
\begin{equation}\label{eq:main_budget}
b\coloneq\left\lceil\frac8{\mumin}\prn{\frac H\eta+2\tau}\right\rceil,
\qquad
k\coloneq\left\lceil\log_2\frac{4dH}{\delta\varepsilon\mumin}\right\rceil,
\qquad T_\star\coloneq bk.
\end{equation}
For every integer terminal time $T\geq T_\star$ fixed in advance, the iterates of \eqref{eq:td_update_vector} with $\eta_t\equiv\eta$ satisfy
\begin{equation}\label{eq:main_guarantee}
\PB_\nu\prn{\norm{\bV_T-\bV^\pi}_\infty\leq\varepsilon}\geq1-\delta.
\end{equation}
Consequently, a sufficient total transition count is
\begin{equation}\label{eq:sample_complexity}
\wtilde O\prn{\frac{H^3}{\mumin\varepsilon^2}
                  +\frac{\tmix}{\mumin}}.
\end{equation}
\end{theorem}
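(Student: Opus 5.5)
We start from the exact decomposition \eqref{eq:error_decomposition}, $\bDelta_T=\widehat\bPhi_{T,0}\bDelta_0+\bY_T$, and bound the two terms separately on events of probability at least $1-\delta/2$ each. The targets are $\norm{\widehat\bPhi_{T,0}\bDelta_0}_\infty\leq\varepsilon/2$ and $\norm{\bY_T}_\infty\leq\varepsilon/2$. A union bound then gives \eqref{eq:main_guarantee}. The sample-complexity claim \eqref{eq:sample_complexity} is bookkeeping: $H/\eta=256H^3L^2/\varepsilon^2$, $\tau=\wtilde O(\tmix)$ and $k=\wtilde O(1)$, so $T_\star=bk=\wtilde O(H^3/(\mumin\varepsilon^2)+\tmix/\mumin)$.

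\emph{Initialization term.} Since $\norm{\bDelta_0}_\infty\leq H$ and $\widehat\bPhi_{T,0}$ is nonnegative with row sums at most one, it suffices to show $\max_s(\widehat\bPhi_{T,0}\bone)(s)\leq\varepsilon/(2H)$ with probability $1-\delta/2$. I interpret $(\widehat\bPhi_{T,0}\bone)(s)$ as the survival probability of an auxiliary particle killed at rate $\eta(1-\gamma)$ per update of its current coordinate.

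The plan is to split the horizon into $k$ blocks of length $b$. In each block the particle must be visited about $H/\eta$ times to lose a constant fraction of its mass. The $2\tau$ slack in $b$ pays the hitting cost: after $\tau=\tmix\lceil\log_2(2/\mumin)\rceil$ steps the chain is within $\mumin/2$ in total variation of $\bmu$, so every state is reached with probability bounded below. The compensation identity for successive hitting costs is what lets this cost be paid once per block rather than once per contraction.

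With this, Markov's inequality on the expected surviving mass in each block gives a conditional factor $1/2$. Chaining over $k$ blocks, using the Markov property at block boundaries, gives expected remaining mass at most $2^{-k}\leq\delta\varepsilon\mumin/(4dH)$. A final Markov inequality and a union over $d$ coordinates then finish this term. Here I use the killed-particle representation and the hitting-time identities from the initialization section.

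\emph{Stochastic term.} For each coordinate $s$, $Y_T(s)$ is a sum of innovations $\beta_t$ weighted by $\eta\,\widehat\bPhi_{T,t+1}(s,S_t)$. These weights depend on the future, so I reverse time. I fix $s$, view the weights as generated backward from time $T$, and use the anchored local Poisson equation for the reverse chain to turn $Y_T(s)$ into a martingale plus bounded boundary terms; the anchored solution has range $O(H)$ independent of $\tmix$.

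The return-variance identity should then give a uniform expected-energy bound: the predictable quadratic variation is at most $\eta H^2$ up to constants. This uses the total-variance identity $\sum\gamma^{2k}\Var\lesssim H^3$, normalized by the weights of total mass about $\eta\cdot H/\eta$. Increments are bounded by $\eta H$. I then convert the energy bound into an exponential tail, using Freedman's inequality or the auxiliary exponential inequalities in the appendix. This yields $|Y_T(s)|\lesssim H\sqrt{\eta}\,L+\eta HL$, and with $\eta=\varepsilon^2/(256H^2L^2)$ this is at most $\varepsilon/2$. A union over $s$ is absorbed by $L=\log(8d/(\delta\mumin))$.

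\emph{Main obstacle.} I expect the main obstacle to be this step: turning the expected energy into a high-probability bound when the energy itself is random. I would first try an exponential supermartingale in reverse time with a stopping argument on the accumulated energy, invoking the reverse-time energy lemma for the moment generating function bound.
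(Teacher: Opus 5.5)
Your proposal follows the paper's proof: the same split $\bDelta_T=\widehat\bPhi_{T,0}\bDelta_0+\bY_T$ with failure budget $\delta/2$ for each term, the killed-particle lifetime bound with Markov chaining over $k$ blocks (using $\mathfrak h_\star\leq2\tau/\mumin$, so $b\geq b_\star$), and the reverse-time anchored Poisson martingale with a Bellman-variance energy bound for $\bY_T$. Each of these is transferred from a stationary start to $\nu$ at the cost of the factor $1/\mumin$ built into $L$ and $k$, a step you leave implicit. Two details differ from your sketch without affecting the conclusion: the telescoping with changing weights leaves, besides the boundary terms, a correction sum controlled by the mass budget, for a total deterministic correction of $2\eta H^2$, which at the chosen $\eta$ still gives $\frac38\varepsilon+\frac{\varepsilon^2}{128L^2}<\frac\varepsilon2$; and the obstacle you anticipate is resolved not by a stopping argument but by the discrete Khasminskii lemma applied to the augmented chain $(\bp_n,X_n)$, followed by Freedman's inequality with a deterministic variance threshold, and this works because the $7H^2$ energy bound (from the return-variance identity together with the anchored square-potential compensation) is uniform over restart pairs.
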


The two terms correspond to stochastic fluctuations and the time needed to remove the initial error, respectively.
The algorithm remains the ordinary constant-step-size recursion; the auxiliary block length $b$ in \eqref{eq:main_budget} is used only in the proof.
In particular, the result requires no extra condition coupling $\eta$ and $\tmix$.

\paragraph{Parameter knowledge.}
A known lower bound on $\mumin$ suffices for selecting the step size; selecting the deterministic sample budget additionally requires an upper bound on $\tmix$.
Replacing $\mumin$ and $\tmix$ by these bounds in \eqref{eq:main_stepsize}--\eqref{eq:main_budget}, with the mixing-time upper bound rounded up to an integer, gives valid conservative choices, as in the proof of Corollary~\ref{corollary:budget_minimax}.
Adaptive selection of these quantities is not analyzed here.

\paragraph{A finer measure of the transient.}
The mixing-time bound follows from a more precise result in terms of a hitting time.
Let
\begin{equation}\label{eq:main_hitting_parameter}
P^\star(x,i)\coloneq\frac{\mu(i)P(i,x)}{\mu(x)},
\qquad
\mathfrak h_\star\coloneq\max_{x,i}\E_x^\star\tau_i,
\qquad
\tau_i\coloneq\inf\{n\geq0:X_n=i\},
\end{equation}
where $\E_x^\star$ refers to the reverse state chain with transition matrix $\bP^\star$, started at $x$.
Thus the hitting time is zero when the starting state is already the target.
The parameter $\mathfrak h_\star$ measures the largest expected time to reach a specified state in this reverse chain.

\begin{corollary}[Hitting-time refinement]\label{corollary:hitting_refinement}
In Theorem~\ref{theorem:main}, replace $b$ by
\begin{equation}\label{eq:main_hitting_budget}
b_\star\coloneq\left\lceil8\prn{
\frac{H}{\eta\mumin}+\mathfrak h_\star}\right\rceil.
\end{equation}
The same guarantee holds at each deterministic integer $T\geq b_\star k$.
Its sufficient transition count is
\begin{equation}\label{eq:hitting_sample_complexity}
\wtilde O\prn{\frac{H^3}{\mumin\varepsilon^2}+\mathfrak h_\star}.
\end{equation}
\end{corollary}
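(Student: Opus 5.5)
The plan is to rerun the proof of Theorem~\ref{theorem:main} with one change: the block length. In that proof, the error splits as $\bDelta_T=\widehat\bPhi_{T,0}\bDelta_0+\bY_T$ by \eqref{eq:error_decomposition}, and the two terms are handled separately. The stochastic convolution $\bY_T$ is controlled by the reverse-time anchored Poisson equation together with the energy and concentration estimates. That part does not involve the block length at all. With $\eta=\varepsilon^2/(256H^2L^2)$ it yields $\norm{\bY_T}_\infty\leq\varepsilon/2$ with probability at least $1-\delta/2$. This half of the argument therefore carries over verbatim. What remains is to show that the initialization term $\norm{\widehat\bPhi_{T,0}\bDelta_0}_\infty\leq H\norm{\widehat\bPhi_{T,0}\bone}_\infty$ is at most $\varepsilon/2$ when $T\geq b_\star k$. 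The only input from $b$ is the one-block contraction statement, which I recall next.

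Step one is to isolate that statement. Use the killed-particle representation: $(\widehat\bPhi_{t,s}\bone)(x)$ is the survival probability of an auxiliary particle. The particle sits at a state and is touched whenever the trajectory visits that state. At each touch it is killed with probability $1-\gamma$ (after the $\eta$-thinning), and otherwise it jumps to the next trajectory state. Viewed in reverse time, the particle is carried by the reverse chain $\bP^\star$. Its successive "hitting costs" are the waiting times until the trajectory next visits the particle's current location. The compensation identity for these costs bounds the expected number of transitions needed to accumulate killing mass. It gives roughly $H/(\eta\mumin)$ for the effective-horizon contraction, plus a single additive hitting cost. In the mixing-time version that additive cost was bounded by $\tau/\mumin$, since $\max_{x,i}\E_x^\star\tau_i\lesssim\tau/\mumin$ via $\tmix\lceil\log_2(2/\mumin)\rceil$ steps giving probability at least $\mumin/2$ of being at $i$. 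In the refined version I stop before that last comparison and keep $\mathfrak h_\star$ itself. The resulting claim is the following. For any block of length $b_\star$ and any conditioning on the past,
\[
\E\brk{(\widehat\bPhi_{s+b_\star,s}\bone)(x)\,\middle|\,\gF_s}\leq\tfrac12\qquad\text{for all }x.
\]
This follows from Markov's inequality once the expected survival-time bound is at most $\frac{b_\star}{8}\cdot 4$, which is why the factor $8$ appears in $b_\star$.

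Step two iterates this over $k$ blocks. Submultiplicativity and nonnegativity give $\widehat\bPhi_{T,0}\bone\leq\widehat\bPhi_{T,T-kb_\star}\bone$ entrywise. The tower property over the block filtrations then gives $\E\norm{\widehat\bPhi_{T,0}\bone}_\infty\leq d\,2^{-k}$. Markov's inequality converts this to a bound on $\PB\big(H\norm{\widehat\bPhi_{T,0}\bone}_\infty>\varepsilon/2\big)$. By the choice of $k=\lceil\log_2(4dH/(\delta\varepsilon\mumin))\rceil$, this probability is at most $\delta/2$, with the extra factor $1/\mumin$ as slack. A union bound with the stochastic term finishes the guarantee.

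The sample count is then $b_\star k=\wtilde O(H/(\eta\mumin)+\mathfrak h_\star)=\wtilde O(H^3/(\mumin\varepsilon^2)+\mathfrak h_\star)$.

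The main obstacle is step one. I need to confirm that, in the paper's hitting-time analysis, the arbitrary initial distribution enters only through a worst-case expected reverse hitting time, so that $\tau/\mumin$ can be replaced by $\mathfrak h_\star$ with no other use of mixing. The point to check is that the compensation identity charges the hitting cost once per block rather than once per contraction. It must also remain valid when conditioned on $\gF_s$, which is where the worst case over starting states $x$ in $\mathfrak h_\star$ is used.
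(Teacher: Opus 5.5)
\textbf{The gap is the direction of conditioning in your Steps one and two.} Your treatment of the stochastic convolution is fine. So is the use of the lifetime bound $4\bigl(H/(\eta\mumin)+\mathfrak h_\star\bigr)$ with Markov's inequality to explain the factor $8$ in $b_\star$, and so is the final transition count.

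The particle-lifetime lemma bounds $\E_{i,x}\zeta$ uniformly over the particle position $i$ and the starting state $x$ of the \emph{reverse} driver, which runs with kernel $\bP^\star$. Through the representation $\norm{\bp_N}_1=\PB(\zeta>N\mid X_0,\ldots,X_N)$ with $X_0=S_T$, that uniformity is over the forward state at the \emph{right} end of a block, under the stationary path law.

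Your claim $\E[(\widehat\bPhi_{s+b_\star,s}\bone)(x)\mid\gF_s]\leq1/2$ conditions instead on the \emph{left} endpoint $S_s$. Given $S_s=z$, the reversed block is not the $\bP^\star$-chain. Its transitions are $P^{b_\star-n-1}(z,y)P(y,x)/P^{b_\star-n}(z,x)$, a time-inhomogeneous $h$-transform. The lemma says nothing about this process. The worst case over starting states in $\mathfrak h_\star$ is a worst case over the reverse chain's start, so it cannot absorb this.

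Even granting your claim, the forward tower argument does not iterate. Conditioning on $\gF_{T-b_\star}$ gives only $\E[(\widehat\bPhi_{T,0}\bone)(x)]\leq\frac12\E\norm{\widehat\bPhi_{T-b_\star,0}\bone}_\infty$. Continuing requires control of an expected \emph{maximum} over rows. A per-row bound controls that maximum only through a sum over the $d$ rows, which costs a factor $d/2$ per block instead of gaining $1/2$. Keeping the averaged weights instead would need a bound conditional on both endpoints of the preceding block.

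\textbf{How the paper avoids both problems.} It works in reverse time under stationarity. With $S_0\sim\bmu$, the reversed path is the $\bP^\star$-chain started from $S_T\sim\bmu$. The pair consisting of the particle position and the driver state is Markov for the reverse filtration. Hence $\sup_{i,x}\PB_{i,x}(\zeta>b_\star)\leq1/2$ iterates to $\sup_{i,x}\PB_{i,x}(\zeta>mb_\star)\leq2^{-m}$ for a single scalar survival event. This gives $\E_\mu[\be_j^\top\widehat\bPhi_{T,0}\bone]\leq2^{-k}$. Markov's inequality and a union bound over the $d$ rows are then applied once, at the end.

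The arbitrary initial distribution enters only after that, through $\PB_\nu(A)\leq\PB_\mu(A)/\mumin$ for events determined by the state path. That is exactly what the factor $\mumin$ inside $k$ pays for; it is not slack. This also answers the question in your last paragraph: the initial distribution does not enter through $\mathfrak h_\star$ at all.
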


We prove $\mathfrak h_\star\leq2\tau/\mumin$ in Section~\ref{Section:initialization}, using forward mixing without assuming that the forward and reverse mixing times coincide.
This yields \eqref{eq:sample_complexity}.
The hitting-time formulation gives a substantially smaller bound for some chains, as illustrated in Section~\ref{Section:discussion}.

\subsection{Decreasing step sizes}

The constant step size above is chosen for a prescribed accuracy.
The next theorem uses a single decreasing schedule that depends on
$H$, $\mumin$, and the confidence level, without requiring a target
accuracy, terminal time, or mixing-time input.

\begin{theorem}[Last-iterate TD with decreasing step sizes]\label{theorem:decay}
Under Assumptions~\ref{assumption:bounded} and \ref{assumption:mixing},
fix $0<\delta<1$, set $A\coloneq4096$, and define, for every $t\geq0$,
\begin{equation}\label{eq:decay_schedule}
L_t\coloneq\log\frac{16dH(t+8)}{\delta\mumin},
\qquad
\eta_t\coloneq\min\brc{\frac1H,
                  \frac{AH L_t}{\mumin(t+8)}}.
\end{equation}
With probability at least $1-\delta$ under any initial state
distribution, simultaneously for every integer $T\geq8$ satisfying
\begin{equation}\label{eq:decay_transient}
T\geq300L_T\prn{\frac{H^2}{\mumin}+\mathfrak h_\star},
\end{equation}
the iterates of \eqref{eq:td_update_vector} satisfy
\begin{equation}\label{eq:decay_rate}
\norm{\bV_T-\bV^\pi}_\infty
\leq\frac{H}{T+8}
 +12\sqrt{2A}\sqrt{\frac{H^3L_T^3}{\mumin(T+8)}}
 +\frac{4AH^3L_T}{\mumin(T+8)}.
\end{equation}
There is a universal constant $C$ such that, on this same event, for
every $0<\varepsilon\leq1$ and every integer $T\geq8$ with
\begin{equation}\label{eq:decay_sample}
T\geq C\prn{\frac{H^3L_T^3}{\mumin\varepsilon^2}
                         +\mathfrak h_\star L_T},
\end{equation}
we have $\norm{\bV_T-\bV^\pi}_\infty\leq\varepsilon$.
\end{theorem}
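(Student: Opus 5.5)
The argument starts from the exact decomposition \eqref{eq:error_decomposition}, $\bDelta_T=\widehat\bPhi_{T,0}\bDelta_0+\bY_T$, now with time-varying $\eta_t$. The two terms are bounded separately on one event that is uniform over $T$.

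Because $\eta_t$ depends on $t$, the constant-step block argument of Theorem~\ref{theorem:main} cannot be reused directly. It will be replaced by interval bounds for nonincreasing steps, proved in Appendix~\ref{Appendix:decay}.

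\emph{Stochastic term.} Fix $T$ and apply the anchored reverse-time Poisson equation of Section~\ref{Section:reverse} to the weights $\eta_t\widehat\bPhi_{T,t+1}\be_{S_t}$. Each coordinate of $\bY_T$ is then a martingale in reverse time. Its predictable quadratic variation is controlled, through the return-variance identity, by roughly $H^2\sum_t\eta_t^2\cdot(\text{propagation mass})$, and the cumulative mass is about $H/\eta$ at the relevant scale.

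With $\eta_t\approx AHL_t/(\mumin t)$, the last effective window of length about $H/\eta_T$ contributes variance of order $H^3\eta_T/\mumin^{0}\cdot(\ldots)$. I expect this to yield the $\sqrt{H^3L_T^3/(\mumin T)}$ term once the exponential tail bound of Section~\ref{Section:concentration} is applied with confidence $\delta/(2d(T+8)^2)$. Summing these failure probabilities over $T\ge8$ is what forces the $\log(t+8)$ factor inside $L_t$.

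The terms $H/(T+8)$ and $4AH^3L_T/(\mumin(T+8))$ will come from two sources: the Bernstein range term ($\eta_T H$ times the range) and the bias incurred by freezing the step size across a window in which $\eta_t$ varies by a factor $1+O(\eta H)$.

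\emph{Initialization term.} I would bound $\|\widehat\bPhi_{T,0}\|_{\infty\to\infty}$ using the killed-particle representation of Section~\ref{Section:initialization}. A particle at state $i$ is killed with probability $(1-\gamma)\eta_t$ per visit and otherwise moves along the reverse chain. The constant-step lifetime equations are replaced by a bound on the expected cumulative mass $\E\sum_t\eta_t\ind\{\text{alive}\}$. That bound comes from the hitting-time compensation identity: each state is reached at additive cost $\mathfrak h_\star$, not at multiplicative cost per contraction.

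On the event where the accumulated $\sum_t\eta_t\ind\{S_t=i\}$ exceeds $kH$ for every $i$, which holds after time $\gtrsim L_T(H^2/\mumin+\mathfrak h_\star)$, a Markov inequality plus blocking argument makes the survival mass at most $e^{-k}$. The choice of $k$ then makes $H e^{-k}\le H/(T+8)$. This is where the threshold \eqref{eq:decay_transient} enters, and I expect it to be the main obstacle. The blocks have nonconstant length, so the compensation identity has to be applied to a time-inhomogeneous killing rate while the hitting costs remain additive; monotonicity of $\eta_t$ is needed to compare each block with a constant-step block at its smallest rate.

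Finally, a union bound over $T$ and the coordinates gives one event of probability at least $1-\delta$. On that event \eqref{eq:decay_rate} holds for all admissible $T$. For the last claim, plug in \eqref{eq:decay_sample} with $C$ large: this makes the middle term at most $\varepsilon/3$ and the others at most $\varepsilon/3$ each, using $\varepsilon\le1$, $H\ge1$ and $H^3L_T/(\mumin T)\le\varepsilon^2/C\le\varepsilon/C$. The same condition implies \eqref{eq:decay_transient}.
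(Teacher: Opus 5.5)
Your ingredients are the right ones, but there is a genuine gap: the stochastic term, as you set it up, does not close.

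\textbf{The stochastic term.} You analyze the full convolution $\bY_T$ from time $0$. There the cap in \eqref{eq:decay_schedule} is active: $\eta_t=1/H$ for all $t$ up to order $AH^2L/\mumin$. Every estimate available for the reverse martingale on an interval scales with the \emph{largest} step $\overline a$ on that interval:
\begin{itemize}
\item the increments are bounded by $\overline aH$;
\item the predictable quadratic variation is bounded by $\overline a\sum_na_n\bp_n^\top\bv(X_n)$;
\item the Poisson correction is bounded by $2\overline aH^2$.
\end{itemize}
With $\overline a=1/H$ these give $O(\sqrt H\,\ell+H)$, which does not decay in $T$. Your heuristic that only a last window of length about $H/\eta_T$ matters would require showing that the reverse weight surviving into the early, large-step region is small. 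That holds only with high probability, and you give no mechanism for feeding it into Freedman's inequality.

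The missing idea is a restart at $s=\lfloor T/2\rfloor$. Write $\bV_T-\bV^\pi=\widehat\bPhi_{T,s}(\bV_s-\bV^\pi)+\bY_{T,s}$ and use the \emph{pathwise} bound $\norm{\bV_s-\bV^\pi}_\infty\leq H$, which holds because the iterates stay in $[0,H]^d$. All earlier noise and the initial error are then absorbed into a bounded term that is never conditioned on. On $[s,T)$:
\begin{itemize}
\item the noise is governed by the largest step $\eta_s\leq2AHL_T/(\mumin(T+8))$ (Proposition~\ref{prop:decay_noise});
\item forgetting is governed by the smallest step $\eta_{T-1}$ over $T-s\geq T/2$ steps (Proposition~\ref{prop:decay_forget});
\item the arbitrary law of $S_s$ costs only a factor $1/\mumin$, via \eqref{eq:decay_suffix_density}.
\end{itemize}
This also corrects your bookkeeping. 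The term $H/(T+8)$ is $H\rho_T$ with $\rho_T=1/(T+8)$, coming from forgetting. The term $4AH^3L_T/(\mumin(T+8))$ is the Poisson correction $2\eta_sH^2$. Freedman's range term is absorbed into the square-root term, and no step-freezing bias arises.

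\textbf{The forgetting mechanism.} An event defined by per-state cumulative step weights $\sum_t\eta_t\ind\{S_t=i\}\geq kH$ cannot force survival mass $e^{-k}$. Take a constant step $\eta$ and two states $j,x$. Let the path self-loop at $j$ about $kH/\eta$ times, then alternate between $j$ and $x$ about $1/\eta$ times, then self-loop at $x$ about $kH/\eta$ times. Both states carry weight at least $kH$. Yet the $j$th row sum of $\widehat\bPhi_{T,0}$ stays bounded below independently of $k$ (near $1/2$ for large $H$). Requiring the weight to be spread over many blocks repairs this, but it reintroduces the multiplicative cover-time cost you want to avoid.

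The paper instead proves a uniform bound on the \emph{unweighted} reverse mass, $\E\sum_{n<N}q_n\leq7q_0(H/(\underline a\mumin)+\mathfrak h_\star)$ (Lemma~\ref{lemma:decay_integrated}). The proof uses the anchored potential $w_i(x)=h_i^\star(x)-c(i)\ind_{\{x\neq i\}}$ and the compensation identity. Applying this bound conditionally halves the mass on each block of length $b=\lceil14(H/(\eta_{T-1}\mumin)+\mathfrak h_\star)\rceil$.

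The transient check is also where $A$ must be large. Once the cap is inactive, $H/(\eta_{T-1}\mumin)=(T+7)/(AL_{T-1})$, so roughly $AL_T/28$ blocks fit in $[s,T)$. These must outnumber $k_T\leq5L_T$. Together with \eqref{eq:decay_transient}, the choice $A=4096$ gives $bk_T<T/2$.
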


The schedule is nonincreasing and eventually has order
$H\log(t)/ (\mumin t)$.
The constants are convenient proof choices and have not been optimized.
Condition~\eqref{eq:decay_sample} gives the same sufficient transition
counts \eqref{eq:hitting_sample_complexity} and
\eqref{eq:sample_complexity}, up to logarithmic factors.
In particular, its statistical term absorbs the
$H^2L_T/\mumin$ threshold for $0<\varepsilon\leq1$.
A known positive lower bound on $\mumin$ can replace it throughout the
schedule and guarantee. Choosing a sufficient deterministic stopping
time additionally requires an upper bound on $\mathfrak h_\star$ or
$\tmix$; running the schedule does not.

\citet[Remark~2]{li2024q} give runtime-oblivious learning rates for
synchronous TD, whereas their asynchronous Markov-data
Theorem~4 uses a constant step size.
Theorem~\ref{theorem:decay} establishes a decreasing-step guarantee for
the single-trajectory setting considered here.
Its proof in Appendix~\ref{Appendix:decay} treats general deterministic
nonincreasing step sizes on a fixed interval, then applies those bounds
to the last half of the trajectory.
This restriction is only part of the analysis: the algorithm uses every
transition and does not restart or average its iterates.
Schedules based on random state-visit counts are not covered by this
deterministic-step argument.

\subsection{Optimality and terminal-time scope}

\paragraph{Matching lower bounds.}
Section~\ref{Section:lower_bounds} first gives a two-state coverage obstruction and then a three-state construction with the same known deterministic rewards in every model.
The latter yields both the statistical and mixing terms on a common parameter family.
More formally, for $H\geq64$, $0<p\leq1/4$, and a mixing-time budget $\mathsf t\geq16H$, consider three-state models with $\mumin\geq p$ and $\tmix\leq\mathsf t$, common known rewards $\br=(0,1,0)^\top$, and initial state $S_0=0$.
Corollary~\ref{corollary:budget_minimax} shows that their minimax transition complexity is
\begin{equation}\label{eq:main_minimax}
\wtilde\Theta\prn{\frac{H^3}{p\varepsilon^2}+\frac{\mathsf t}{p}},
\qquad 0<\varepsilon\leq1,\quad0<\delta\leq1/4.
\end{equation}
Thus ordinary TD, with either of the two step-size choices above,
attains the optimal order on these parameter classes.
This is a worst-case statement over a class of chains, not a lower bound for every individual chain.
The upper bound applies to all chains in our assumptions; the joint lower-bound argument is established in the stated regime of mixing budgets.

\paragraph{Terminal-time scope.}
For the constant-step Theorem~\ref{theorem:main},
\eqref{eq:main_guarantee} holds at each fixed $T$ beyond its threshold;
it does not assert simultaneous accuracy of all subsequent iterates.
The decreasing-step Theorem~\ref{theorem:decay} instead gives an event
of probability at least $1-\delta$ on which \eqref{eq:decay_rate} holds at
every time satisfying \eqref{eq:decay_transient}.
This stronger time quantifier follows from decreasing fluctuations
and a summable allocation of failure probabilities.

\section{Proof Overview}
\label{Section:overview}
The error decomposition \eqref{eq:error_decomposition} separates the accumulated noise $\bY_T$ from the propagated initial error.
We control the noise through a local Poisson equation and the initial error through hitting times.
We first describe the constant-step argument, then explain the changes
needed for decreasing steps.

\paragraph{Reverse-time propagation weights.}
A stationary-mean decomposition of the update matrix creates an additional fluctuation term involving both the current state and the current TD error.
Our argument instead works directly with \eqref{eq:stochastic_convolution}.
Fixing a terminal coordinate $j$ and reading the trajectory backwards makes the relevant propagation weights available before the next reverse transition.
If $\bp_n$ is the resulting nonnegative weight vector, its total mass decreases according to
\begin{equation}\label{eq:overview_mass}
\norm{\bp_{n+1}}_1=\norm{\bp_n}_1-\eta(1-\gamma)p_n(X_{n+1}),
\qquad
\eta\sum_{n<N}p_n(X_{n+1})\leq H.
\end{equation}
The identity holds pathwise and bounds the total weight assigned to noise without requiring the state chain to mix during the summation.

\paragraph{A Poisson equation with a local source.}
Forward centering of the Bellman innovation does not make it centered in reverse time.
We correct this using a Poisson equation for each source state $i$.
The source is supported on reverse transitions that arrive at $i$.
Consequently, its solution can be expressed as the expected innovation on the first reverse transition that enters $i$.
Unlike a general Poisson solution, this representation does not sum a bounded observable over an entire hitting time.
Its range is therefore bounded by the range of a single innovation, independently of the mixing time.
Normalizing the solution by $U_i(i)=0$ gives a reverse martingale plus a correction of size at most $2\eta H^2$.

\paragraph{Why the variance has the right horizon dependence.}
Bounding every martingale increment by $\eta H$ is not enough: summing that bound over $T$ would lose the desired statistical rate.
Instead, we use the variance of the discounted return.
Its Bellman identity first bounds the expected weighted sum of squared innovations by $6H^2$.
The same normalization $U_i(i)=0$ then gives an exact compensation identity for the reverse martingale's conditional variance.
A nonnegative square-potential term has a favorable sign, yielding a bound of $7H^2$ for the expected total predictable energy.
Both bounds hold uniformly over the current reverse state and every nonnegative starting weight vector of mass at most one.

This uniformity allows an exponential-moment argument.
Applied to the augmented Markov chain consisting of the current weight vector and reverse state, it converts the expected-energy bound into an exponential tail bound.
Freedman's martingale inequality then yields, at any fixed terminal time,
\begin{equation}\label{eq:overview_noise}
\norm{\bY_T}_\infty
\lesssim H\sqrt\eta\log\frac{Cd}{\delta\mumin}+\eta H^2.
\end{equation}
The additional $\log(1/\mumin)$ comes from transferring a stationary path bound to an arbitrary initial state distribution.
No mixing-time factor appears in \eqref{eq:overview_noise}.
The correction $\eta H^2$ is small enough at the step size in \eqref{eq:main_stepsize} to prove Theorem~\ref{theorem:main}.

\paragraph{A particle representation of the remaining initial error.}
A row sum of the propagation matrix measures how much of the initial error can survive at a chosen terminal coordinate.
We represent that row sum by the survival probability of an auxiliary particle, conditional on the reverse state path.
The particle has a current coordinate $i$. When the reverse chain next enters $i$, the particle stays with probability $1-\eta$, moves to the preceding state of the chain with probability $\eta\gamma$, and is killed with probability $\eta(1-\gamma)$.
These probabilities reproduce exactly the transpose update of the propagation weights.

The time to the next activation is a reverse hitting time.
A bound that charges the worst hitting time after every relocation would introduce an extra factor $H$.
Instead, the average relocation cost is the sum of a return-time contribution and a difference of potentials.
The potential terms cancel through the discounted lifetime equations, giving
\begin{equation}\label{eq:overview_lifetime}
\sup_{i,x}\E_{i,x}\zeta
\leq4\prn{\frac{H}{\eta\mumin}+\mathfrak h_\star},
\end{equation}
where $\zeta$ is the particle lifetime and $\mathfrak h_\star$ is defined in \eqref{eq:main_hitting_parameter}.
The condition $\eta H\leq1$, already needed for the stochastic bound, suffices for this estimate.
Uniformity in the starting states of the particle and reverse chain turns \eqref{eq:overview_lifetime} into a geometric survival tail.
Together with the conditional particle representation, this bounds the propagation norm with high probability.

Forward mixing supplies the final comparison
$\mathfrak h_\star\leq2\tau/\mumin$, where
$\tau=\tmix\lceil\log_2(2/\mumin)\rceil$.
Together with \eqref{eq:overview_noise} and the chosen step size, this yields \eqref{eq:intro_rate}.
The hitting-time estimate is sharper for some individual chains and also explains why a worst-case mixing lower bound need not hold pointwise.

\paragraph{Decreasing steps.}
On a deterministic interval $[s,T)$ of length $N=T-s$,
forward nonincreasing steps become
the reverse nondecreasing sequence $a_n=\eta_{T-n-1}$.
The mass budget becomes $\sum_{n<N}a_np_n(X_{n+1})\leq H$.
Summation by parts adds a total-variation term to the Poisson correction;
its size is controlled by the largest step on the interval.
For the nonnegative square potential, the additional variation term is
nonpositive, so the $7H^2$ energy bound is preserved.

The time-homogeneous particle-lifetime equations require constant steps.
For changing steps, Lemma~\ref{lemma:decay_integrated} instead uses an
anchored hitting potential to bound the expected cumulative remaining
mass, where $q_n=\norm{\bp_n}_1$:
\[
\E\sum_{n<N}q_n
\leq7q_0\prn{\frac{H}{\underline a\mumin}+\mathfrak h_\star},
\qquad \underline a\coloneq\eta_{T-1},
\]
provided $\eta_s\leq H^{-1}$.
This bound is uniform over every reverse initial state, nonnegative
initial weight vector of mass at most one, and finite interval.
Conditional application on successive intervals therefore gives
geometric decay of the remaining mass, with an additive
$\mathfrak h_\star$ cost.

To prove Theorem~\ref{theorem:decay}, we apply these interval bounds to
$[\lfloor T/2\rfloor,T)$.
The maximum step there controls noise, and the minimum step controls
forgetting of the error at its left endpoint.
Allocating a summable failure probability to each integer $T$ gives
the simultaneous guarantee. Appendix~\ref{Appendix:decay} gives the
complete argument, including the treatment of the random left-endpoint
estimate.

\paragraph{Lower bounds on the same parameter scale.}
The lower-bound family has one state of stationary mass $1-2p$ and two rare states, each of stationary mass $p$.
An occasional reset controls the time needed to reach the rare pair and hence the mixing scale $1/\alpha$.
A separate parameter controls the transitions inside that pair.
Changing this internal parameter leaves the stationary distribution and the mixing-time order unchanged, but changes the value function at the rare states.
One testing pair forces a waiting cost of order $1/(p\alpha)$; a second pair forces a statistical cost of order $H^3/(p\varepsilon^2)$.
Because both pairs lie in the same family, every estimator that works uniformly on that family must incur both costs.

\section{Reverse-Time Identities and Expected Energy}
\label{Section:reverse}
Throughout Sections~\ref{Section:reverse}--\ref{Section:main_proof},
the step size is constant: $\eta_t\equiv\eta$.
Appendix~\ref{Appendix:decay} extends the estimates needed here to
deterministic nonincreasing step sizes.

\subsection{Reading the random propagation backward}

The matrices in the stochastic convolution depend on observations that occur after the innovation they multiply. Consequently, this convolution is not a forward martingale transform. We analyze it by reading the same trajectory backward. The resulting weights are predictable, and their total mass has a deterministic budget. A local Poisson equation accounts for the fact that an innovation centered conditional on its forward source state need not remain centered under time reversal.

For now, suppose that $S_0\sim\bmu$ and fix a terminal time $T$ and a coordinate $j\in\gS$. Set
\begin{equation}\label{eq:reverse_path}
    X_n\coloneq S_{T-n},\qquad
    \rho_{n+1}\coloneq R_{T-n-1},\qquad
    \bp_n\coloneq\widehat{\bPhi}_{T,T-n}^{\top}\be_j,
\end{equation}
for $0\leq n\leq T$, with the reward defined for $n<T$. Thus $\bp_0=\be_j$. The reverse filtration is
\begin{equation}\label{eq:reverse_filtration}
    \gG_n\coloneq
    \sigma(X_0,\rho_1,X_1,\ldots,\rho_n,X_n).
\end{equation}
Let $K(dr\mid y,x)$ denote the conditional reward law on a forward transition from $y$ to $x$. The stationary reverse transition matrix is
\begin{equation}\label{eq:reverse_transition}
    P^\star(x,y)\coloneq\frac{\mu(y)P(y,x)}{\mu(x)}.
\end{equation}
Lemma~\ref{lemma:reverse_kernel} proves that, conditionally on $\gG_n$ and $X_n=x$, the pair $(X_{n+1},\rho_{n+1})$ has law
\begin{equation}\label{eq:reverse_reward_law}
    P^\star(x,y)K(dr\mid y,x).
\end{equation}
In particular, the reverse reward retains the conditional law of the corresponding forward edge. This remains true when the reward and next state are dependent.

\begin{lemma}[Reverse weights and their mass budget]\label{lemma:reverse_weights}
For $x=X_n$ and $y=X_{n+1}$, the weights in~\eqref{eq:reverse_path} satisfy
\begin{equation}\label{eq:reverse_weight_update}
    \bp_{n+1}
    =\bp_n+\eta p_n(y)(\gamma\be_x-\be_y).
\end{equation}
More generally, start this recursion from any fixed vector $\bp_0=\bp\geq\bm0$ with $\norm{\bp}_1\leq1$ and any reverse initial state $X_0=x$. If $0<\eta\leq1$, then $\bp_n$ is $\gG_n$-measurable and
\begin{equation}\label{eq:reverse_mass_budget}
    \bp_n\geq\bm0,\qquad \norm{\bp_n}_1\leq1,
    \qquad
    \eta\sum_{n=0}^{N-1}p_n(X_{n+1})
    =H\prn{\norm{\bp_0}_1-\norm{\bp_N}_1}
    \leq H
\end{equation}
for every integer $N\geq0$.
\end{lemma}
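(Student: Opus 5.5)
We derive the recursion directly from the product structure and then read off the mass identity. By definition $\bp_{n+1}=\widehat\bPhi_{T,T-n-1}^\top\be_j=\widehat\bB_{T-n-1}^\top\widehat\bPhi_{T,T-n}^\top\be_j=\widehat\bB_{T-n-1}^\top\bp_n$, since $\widehat\bPhi_{T,T-n-1}=\widehat\bPhi_{T,T-n}\widehat\bB_{T-n-1}$. The matrix $\widehat\bB_{T-n-1}$ uses $S_{T-n-1}=X_{n+1}=y$ and $S_{T-n}=X_n=x$. Its transpose is
\[
\widehat\bB_{T-n-1}^\top=\bI-\eta\be_y\be_y^\top+\eta\gamma\be_x\be_y^\top .
\]
Applying this to $\bp_n$ gives $\bp_n-\eta p_n(y)\be_y+\eta\gamma p_n(y)\be_x$, which is \eqref{eq:reverse_weight_update}. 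For the generalized statement we simply take \eqref{eq:reverse_weight_update} as the definition of the recursion from an arbitrary $\bp_0=\bp$ and $X_0=x$.

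Measurability follows by induction. The update from $\bp_n$ to $\bp_{n+1}$ is a deterministic function of $\bp_n$, $X_n$ and $X_{n+1}$, and $\bp_0$ is fixed. Hence $\bp_{n+1}$ is $\gG_{n+1}$-measurable, and so $\bp_n$ is $\gG_n$-measurable for every $n$.

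Nonnegativity also goes by induction, with $0<\eta\leq1$. Coordinate $y$ becomes $(1-\eta)p_n(y)\geq0$, plus $\eta\gamma p_n(y)$ if $x=y$. Coordinate $x\neq y$ only increases, and all other coordinates are unchanged. The case $x=y$ needs a separate check, but it only adds a nonnegative term, so nothing breaks there.

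For the mass, summing coordinates of a nonnegative vector gives
\[
\norm{\bp_{n+1}}_1=\norm{\bp_n}_1-\eta(1-\gamma)p_n(X_{n+1}),
\]
which is exactly \eqref{eq:overview_mass}. This shows the mass is nonincreasing, so $\norm{\bp_n}_1\leq\norm{\bp_0}_1\leq1$. Telescoping from $n=0$ to $N-1$ and multiplying by $H=(1-\gamma)^{-1}$ yields
\[
\eta\sum_{n<N}p_n(X_{n+1})=H\bigl(\norm{\bp_0}_1-\norm{\bp_N}_1\bigr)\leq H,
\]
using $\norm{\bp_N}_1\geq0$.

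The only point requiring any care is the case $x=y$ in the nonnegativity step, which we handled above; the rest is routine algebra.
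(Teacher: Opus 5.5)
Your proposal is correct and follows essentially the same route as the paper: you obtain $\bp_{n+1}=\widehat\bB_{T-n-1}^{\top}\bp_n$ from the product order, read off the update, check nonnegativity coordinatewise, and telescope the exact mass identity, with measurability by induction. The only cosmetic difference is that for $x=y$ the paper describes the update as multiplying that coordinate by $1-\eta(1-\gamma)$, which is exactly your $(1-\eta)p_n(y)+\eta\gamma p_n(y)$.
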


\begin{proof}
By the order of multiplication in the propagator,
\begin{equation*}
    \bp_{n+1}
    =\widehat{\bB}_{T-n-1}^{\top}\bp_n.
\end{equation*}
The forward transition at time $T-n-1$ is from $y$ to $x$, which gives~\eqref{eq:reverse_weight_update}. If $x\neq y$, this operation removes $\eta p_n(y)$ from coordinate $y$ and adds $\eta\gamma p_n(y)$ to coordinate $x$. If $x=y$, it multiplies that coordinate by $1-\eta(1-\gamma)$. Both operations preserve nonnegativity. Summing the coordinates gives the exact identity
\begin{equation*}
    \norm{\bp_{n+1}}_1
    =\norm{\bp_n}_1-\eta(1-\gamma)p_n(y).
\end{equation*}
Telescoping proves~\eqref{eq:reverse_mass_budget}. Measurability follows by induction from the recursion.
\end{proof}

The coordinate of the stochastic convolution now takes the form
\begin{equation}\label{eq:reverse_convolution}
    Y_T(j)
    =\eta\sum_{n=0}^{T-1}p_n(X_{n+1})\beta_{n+1},
    \qquad
    \beta_{n+1}\coloneq
       \beta(X_{n+1},\rho_{n+1},X_n).
\end{equation}
We will prove bounds for the more general recursion in Lemma~\ref{lemma:reverse_weights}, uniformly over its initial pair $(\bp,x)$ and its duration $N$. We write $\E_{\bp,x}$ for expectation under this abstract reverse process. This uniformity permits restarting at intermediate reverse times to obtain an exponential tail bound from the expectation bound.

\subsection{A local Poisson equation with a bounded solution}

Consider a measurable transition observable $h(i,r,x)$ that is centered at each forward source state:
\begin{equation}\label{eq:source_centering}
    \sum_{x\in\gS}P(i,x)
       \int h(i,r,x)K(dr\mid i,x)=0,
    \qquad i\in\gS.
\end{equation}
Assume that, conditionally on source state $i$, its values belong almost surely to an interval $[a_i,b_i]$, where
\begin{equation}\label{eq:source_range}
    a_i\leq0\leq b_i,\qquad b_i-a_i\leq b.
\end{equation}
Define
\begin{equation}\label{eq:local_source}
    \bar h_i(x)\coloneq\int h(i,r,x)K(dr\mid i,x),
    \qquad
    g_i(x)\coloneq P^\star(x,i)\bar h_i(x).
\end{equation}
When $P(i,x)=0$, we set $\bar h_i(x)=0$. This convention does not change $g_i$, because then $P^\star(x,i)=0$.

For a generic centered function, the size of a Poisson solution can depend on the mixing time. The source $g_i$ above has additional structure: in reverse time it is generated only by transitions that enter state $i$. Stopping at the first such transition therefore gives a bounded solution directly.

\begin{lemma}[Anchored local Poisson equation]\label{lemma:local_poisson}
For each $i\in\gS$, there is a solution of
\begin{equation}\label{eq:local_poisson_equation}
    U_i-\bP^\star U_i=g_i
\end{equation}
such that
\begin{equation}\label{eq:poisson_anchor_range}
    U_i(i)=0,\qquad
    U_i(x)\in[a_i,b_i]\quad\text{for every }x\in\gS.
\end{equation}
In particular, $\norm{U_i}_\infty\leq b$ and
$\max_xU_i(x)-\min_xU_i(x)\leq b$. These bounds do not depend on the mixing time.
\end{lemma}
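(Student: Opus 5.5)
Define $U_i$ explicitly by a stopping-time formula and then verify the equation and the range. Let $(X_n)$ be the reverse chain with kernel $\bP^\star$, started at $X_0=x$. Let $\sigma_i\coloneq\inf\{n\geq1:X_n=i\}$ be the first time a reverse transition enters $i$; irreducibility of $\bP$ (hence of $\bP^\star$) gives $\E_x^\star\sigma_i<\infty$. Set
\[
U_i(x)\coloneq\E_x^\star\brk{\bar h_i(X_{\sigma_i-1})}\quad (x\neq i),\qquad U_i(i)\coloneq0.
\]
This is the expected forward-centered innovation on the first reverse step into $i$: the reverse step $X_{\sigma_i-1}\to X_{\sigma_i}=i$ is the forward transition from $i$ to $X_{\sigma_i-1}$. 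Since $\bar h_i(y)$ is an average of values in $[a_i,b_i]$ (or $0$ when $P(i,y)=0$, and $0\in[a_i,b_i]$), we get $\bar h_i(y)\in[a_i,b_i]$. Hence $U_i(x)\in[a_i,b_i]$ for every $x$, and $U_i(i)=0$ also lies there. The bounds $\norm{U_i}_\infty\leq b$ and range $\leq b$ then follow immediately from $a_i\leq0\leq b_i$ and $b_i-a_i\leq b$.

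Next, verify $U_i-\bP^\star U_i=g_i$ by first-step analysis. Take $x\neq i$. Conditioning on $X_1=y$: if $y=i$, then $\sigma_i=1$ and the contribution is $\bar h_i(x)$; if $y\neq i$, the Markov property gives $U_i(y)$. Thus
\[
U_i(x)=P^\star(x,i)\bar h_i(x)+\sum_{y\neq i}P^\star(x,y)U_i(y)=g_i(x)+(\bP^\star U_i)(x),
\]
where the last step uses $U_i(i)=0$ to add back the $y=i$ term for free. This is the point where the anchor normalization makes the equation close.

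The main check is the anchor row $x=i$, where we need $0-(\bP^\star U_i)(i)=g_i(i)$. The same first-step argument applied from $x=i$ gives
\[
\E_i^\star\brk{\bar h_i(X_{\sigma_i-1})}=g_i(i)+(\bP^\star U_i)(i),
\]
so it suffices to show that the left side vanishes. For this I would use the cycle (renewal) identity for the stationary chain: for any $f$, $\E_i^\star\sum_{n=1}^{\sigma_i}f(X_{n-1},X_n)=\mu(i)^{-1}\sum_{x,y}\mu(x)P^\star(x,y)f(x,y)$. Apply it with $f(x,y)=\ind\{y=i\}\bar h_i(x)$; only the final step of the cycle contributes, giving
\[
\mu(i)^{-1}\sum_x\mu(x)P^\star(x,i)\bar h_i(x)=\sum_xP(i,x)\bar h_i(x)=0
\]
by the source centering \eqref{eq:source_centering}. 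Equivalently, $\sum_x\mu(x)g_i(x)=0$, and the cycle formula identifies the anchor residual with this stationary mean. I expect this anchor-row verification, which relies on the cycle formula and stationarity of $\bP^\star$ with respect to $\bmu$, to be the only nonroutine step.
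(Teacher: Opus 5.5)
Your proof is correct and follows the paper's: the same stopping-time definition of $U_i$, the same range argument, and the same first-step analysis off the anchor. The only difference is the anchor row. The paper avoids the cycle formula there: the residual $U_i-\bP^\star U_i-g_i$ vanishes at every $x\neq i$ and has zero $\bmu$-mean, by stationarity of $\bmu$ for $\bP^\star$ and $\sum_x\mu(x)g_i(x)=\mu(i)\sum_xP(i,x)\bar h_i(x)=0$, so it vanishes at $i$ because $\mu(i)>0$. Your renewal computation amounts to the last-predecessor identity of Lemma~\ref{lemma:reverse_predecessor}; it is valid, just slightly heavier.
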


\begin{proof}
The reverse chain is finite and irreducible, so it hits each state almost surely. For $x\neq i$, let
\begin{equation*}
    \tau_i^\star\coloneq\inf\{k\geq1:X_k=i\},
    \qquad
    U_i(x)\coloneq
       \E_x^\star\brk{\bar h_i(X_{\tau_i^\star-1})},
\end{equation*}
and set $U_i(i)=0$. The random variable inside this expectation is the mean observable on the edge immediately before the reverse chain enters $i$. On every edge with positive probability it belongs to $[a_i,b_i]$, which proves~\eqref{eq:poisson_anchor_range}.

For $x\neq i$, condition on the first reverse step. If it enters $i$ immediately, its contribution is $\bar h_i(x)$. Otherwise the remaining expectation is $U_i(X_1)$. Therefore
\begin{equation*}
    U_i(x)=P^\star(x,i)\bar h_i(x)
       +\sum_{y\neq i}P^\star(x,y)U_i(y)
       =g_i(x)+(\bP^\star U_i)(x).
\end{equation*}
It remains to check the equation at the anchoring state $i$. By stationarity of $\bmu$ for $\bP^\star$,
\begin{equation*}
    \sum_x\mu(x)\prn{U_i(x)-(\bP^\star U_i)(x)}=0.
\end{equation*}
On the other hand, the definition of the reverse kernel and forward centering imply
\begin{equation*}
    \sum_x\mu(x)g_i(x)
    =\mu(i)\sum_xP(i,x)\bar h_i(x)=0.
\end{equation*}
The difference between the two sides of~\eqref{eq:local_poisson_equation} already vanishes at every $x\neq i$. Its stationary mean is zero, and $\mu(i)>0$, so it also vanishes at $i$.
\end{proof}

\subsection{The reverse martingale and its correction}

Write $\bU(x)\coloneq(U_i(x))_{i\in\gS}$. For $x=X_n$, $y=X_{n+1}$, and $r=\rho_{n+1}$, define the vector
\begin{equation}\label{eq:reverse_martingale_difference}
    \bm m_{n+1}
    \coloneq\be_y h(y,r,x)-\brk{\bU(x)-\bU(y)},
\end{equation}
and the corresponding weighted sum
\begin{equation}\label{eq:generic_reverse_sum}
    Y_N(h)\coloneq
       \eta\sum_{n=0}^{N-1}p_n(X_{n+1})
       h(X_{n+1},\rho_{n+1},X_n).
\end{equation}
The anchoring condition $U_i(i)=0$ has two uses. It bounds each martingale increment by the sourcewise range width, and it removes one term when we telescope the changing weights.

\begin{lemma}[Generic reverse martingale correction]\label{lemma:generic_corrector}
Under~\eqref{eq:source_centering}--\eqref{eq:source_range}, the vectors in~\eqref{eq:reverse_martingale_difference} satisfy
\begin{equation}\label{eq:reverse_martingale_properties}
    \E[\bm m_{n+1}\mid\gG_n]=\bm0,
    \qquad \norm{\bm m_{n+1}}_\infty\leq b.
\end{equation}
For every admissible initial pair $(\bp,x)$ and every $N\geq0$,
\begin{align}
    Y_N(h)
    ={}&\eta\sum_{n=0}^{N-1}\bp_n^{\top}\bm m_{n+1}
      +\eta\brk{\bp_0^{\top}\bU(X_0)
                -\bp_N^{\top}\bU(X_N)}\notag\\
     &+\eta^2\gamma\sum_{n=0}^{N-1}
          p_n(X_{n+1})U_{X_n}(X_{n+1}).
       \label{eq:generic_corrector_identity}
\end{align}
In particular,
\begin{equation}\label{eq:generic_corrector_bound}
    \abs{Y_N(h)-\eta\sum_{n=0}^{N-1}
                         \bp_n^{\top}\bm m_{n+1}}
    \leq2\eta Hb,
    \qquad
    \abs{\E_{\bp,x}Y_N(h)}\leq2\eta Hb.
\end{equation}
\end{lemma}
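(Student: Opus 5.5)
The plan is to verify the two properties of $\bm m_{n+1}$ coordinatewise, then derive \eqref{eq:generic_corrector_identity} by summation by parts using the weight recursion of Lemma~\ref{lemma:reverse_weights}, and finally read off \eqref{eq:generic_corrector_bound} from the mass budget \eqref{eq:reverse_mass_budget}.

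\emph{Centering and range.} Fix $x=X_n$. By Lemma~\ref{lemma:reverse_kernel}, conditionally on $\gG_n$ the pair $(X_{n+1},\rho_{n+1})$ has law $P^\star(x,y)K(dr\mid y,x)$. The $i$-th coordinate of the conditional mean of $\be_y h(y,r,x)$ is therefore $P^\star(x,i)\bar h_i(x)=g_i(x)$. The $i$-th coordinate of the conditional mean of $\bU(x)-\bU(X_{n+1})$ is $U_i(x)-(\bP^\star U_i)(x)$, which equals $g_i(x)$ by Lemma~\ref{lemma:local_poisson}. Hence $\E[\bm m_{n+1}\mid\gG_n]=\bm0$.

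For the range, take coordinate $i$ and distinguish two cases.
\begin{itemize}
\item If $i\neq y$, then $m_{n+1}(i)=U_i(y)-U_i(x)$. Both values lie in $[a_i,b_i]$ by \eqref{eq:poisson_anchor_range}, so $\abs{m_{n+1}(i)}\leq b$.
\item If $i=y$, the anchor $U_y(y)=0$ gives $m_{n+1}(y)=h(y,r,x)-U_y(x)$. Both terms lie in $[a_y,b_y]$ almost surely, so again $\abs{m_{n+1}(y)}\leq b$.
\end{itemize}
This case split is exactly where the anchoring matters; without it one would only get the bound $2b$.

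\emph{Summation by parts.} Expanding the definition gives
\[
p_n(y)h(y,r,x)=\bp_n^\top\bm m_{n+1}+\bp_n^\top\bU(X_n)-\bp_n^\top\bU(X_{n+1}).
\]
Write $\bp_n^\top\bU(X_{n+1})=\bp_{n+1}^\top\bU(X_{n+1})-(\bp_{n+1}-\bp_n)^\top\bU(X_{n+1})$. By \eqref{eq:reverse_weight_update},
\[
(\bp_{n+1}-\bp_n)^\top\bU(y)=\eta p_n(y)\bigl(\gamma U_x(y)-U_y(y)\bigr)=\eta\gamma\, p_n(y)U_{X_n}(X_{n+1}),
\]
where the anchor again removes the term $U_y(y)$. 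Summing over $n<N$ telescopes the terms $\bp_n^\top\bU(X_n)-\bp_{n+1}^\top\bU(X_{n+1})$. Multiplying by $\eta$ then yields \eqref{eq:generic_corrector_identity}.

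\emph{Bounds.} Since $\bp_n\geq\bm0$ and $\norm{\bp_n}_1\leq1$, each boundary term satisfies $\abs{\bp^\top\bU(\cdot)}\leq b$. The boundary contribution is therefore at most $2\eta b$. The last term is at most $\eta\gamma b\cdot\eta\sum_n p_n(X_{n+1})\leq\eta\gamma Hb$, by the mass budget. Using $\gamma H=H-1$, the total is at most $\eta b(2+H-1)=\eta b(H+1)\leq2\eta Hb$, because $H\geq1$.

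The expectation bound follows because the martingale sum $\eta\sum_{n<N}\bp_n^\top\bm m_{n+1}$ has mean zero under $\E_{\bp,x}$. Indeed, $\bp_n$ is $\gG_n$-measurable, the increments are bounded, and $N$ is finite.

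The main obstacle is bookkeeping rather than any single estimate: one has to see that the anchor $U_i(i)=0$ is what keeps both the increment bound and the telescoping correction small.
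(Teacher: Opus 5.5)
Your proposal is correct and matches the paper's proof essentially step for step. You use the same ingredients in the same order: coordinatewise centering via the reverse reward law and the local Poisson equation, the two-case increment bound based on the anchor $U_i(i)=0$, the exact telescoping with $(\bp_{n+1}-\bp_n)^\top\bU(X_{n+1})=\eta\gamma\, p_n(X_{n+1})U_{X_n}(X_{n+1})$, and the final estimate $\eta b(2+\gamma H)=\eta b(H+1)\leq 2\eta Hb$ from the mass budget.
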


\begin{proof}
Condition on $\gG_n$, so that $X_n=x$ and $\bp_n$ are known. By the reverse reward law,
\begin{equation*}
    \E[\ind_{\{X_{n+1}=i\}}h(i,\rho_{n+1},x)
         \mid\gG_n]=g_i(x).
\end{equation*}
The conditional mean of $U_i(x)-U_i(X_{n+1})$ is also $g_i(x)$ by~\eqref{eq:local_poisson_equation}. This proves the martingale difference property.

To bound its $i$th coordinate, first suppose that $y=i$. The anchoring condition gives
\begin{equation*}
    m_{n+1}(i)=h(i,r,x)-U_i(x).
\end{equation*}
Both terms belong to $[a_i,b_i]$, so the absolute value of their difference is at most $b$. If $y\neq i$, then
$m_{n+1}(i)=U_i(y)-U_i(x)$, and the same range bound applies. This proves~\eqref{eq:reverse_martingale_properties}.

By the definition of $\bm m_{n+1}$,
\begin{equation*}
    Y_N(h)=\eta\sum_{n<N}\bp_n^{\top}\bm m_{n+1}
       +\eta\sum_{n<N}\bp_n^{\top}
          \brk{\bU(X_n)-\bU(X_{n+1})}.
\end{equation*}
The second sum has changing weights. Its exact telescoping formula is
\begin{align*}
    &\sum_{n<N}\bp_n^{\top}
          \brk{\bU(X_n)-\bU(X_{n+1})}\\
    &\quad=\bp_0^{\top}\bU(X_0)-\bp_N^{\top}\bU(X_N)
       +\sum_{n<N}(\bp_{n+1}-\bp_n)^{\top}\bU(X_{n+1}).
\end{align*}
For $x=X_n$ and $y=X_{n+1}$, the summand in the last line equals
\begin{equation*}
    \eta p_n(y)\prn{\gamma U_x(y)-U_y(y)}
    =\eta\gamma p_n(y)U_x(y).
\end{equation*}
This proves~\eqref{eq:generic_corrector_identity}.

Since $\norm{\bp_n}_1\leq1$ and $\norm{\bU(x)}_\infty\leq b$, the two boundary terms contribute at most $2\eta b$ in absolute value. The mass budget bounds the remaining correction by $\eta\gamma Hb$. Hence the total is at most
\begin{equation*}
    \eta b(2+\gamma H)=\eta b(H+1)\leq2\eta Hb.
\end{equation*}
Finally, the finite martingale sum has expectation zero, because its weights are predictable and its summands are bounded. Taking expectations proves the last assertion.
\end{proof}

For the Bellman innovation $h=\beta$, source state $i$ fixes the subtracted value $V^\pi(i)$, and
\begin{equation}\label{eq:bellman_source_range}
    \beta(i,R,S')\in[-V^\pi(i),H-V^\pi(i)].
\end{equation}
Thus $b=H$ is admissible. From now on, $\bU$ and $\bm m_{n+1}$ denote the Poisson solution and martingale difference associated with $h=\beta$. Lemma~\ref{lemma:generic_corrector} gives a deterministic correction of size $2\eta H^2$. To control the martingale, however, an increment bound alone is insufficient: it would lose one power of the effective horizon. We next use the Bellman variance identity to obtain the smaller accumulated variance that the sharp statistical rate requires.

\subsection{The Bellman variance controls weighted squared innovations}

\begin{lemma}[Return variance identity]\label{lemma:return_variance}
Let
\begin{equation*}
    G_t\coloneq\sum_{k\geq0}\gamma^kR_{t+k},
    \qquad u(s)\coloneq\Var(G_t\mid S_t=s),
    \qquad
    \sigma^2(s)\coloneq
       \E[\beta(S_t,R_t,S_{t+1})^2\mid S_t=s].
\end{equation*}
These conditional quantities are determined by the time-homogeneous reward-transition kernel. Writing $\bu=(u(s))_{s\in\gS}$ and $\bm\sigma^2=(\sigma^2(s))_{s\in\gS}$, they satisfy
\begin{equation}\label{eq:return_variance_bellman}
    \bm0\leq\bu\leq H^2\bone,
    \qquad
    \bu=\bm\sigma^2+\gamma^2\bP\bu.
\end{equation}
Consequently, the observable
\begin{equation}\label{eq:squared_bellman_observable}
    \psi(y,r,x)\coloneq
       \beta(y,r,x)^2+\gamma^2u(x)-u(y)
\end{equation}
is centered at every forward source and has sourcewise range width at most $2H^2$.
\end{lemma}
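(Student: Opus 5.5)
The plan is to derive the Bellman variance identity first, then read off the centering and range properties of $\psi$. Start from the one-step decomposition $G_t=R_t+\gamma G_{t+1}$. Since $V^\pi(S_{t+1})=\E[G_{t+1}\mid\gF_{t+1}]$ by the Markov property and time homogeneity, we can write
\[
G_t-V^\pi(S_t)=\beta(S_t,R_t,S_{t+1})+\gamma\prn{G_{t+1}-V^\pi(S_{t+1})}.
\]
The second summand has conditional mean zero given $\gF_{t+1}$, and $\beta(S_t,R_t,S_{t+1})$ is $\gF_{t+1}$-measurable, so the cross term vanishes. Squaring and conditioning on $S_t=s$ therefore gives
\[
u(s)=\sigma^2(s)+\gamma^2\E[u(S_{t+1})\mid S_t=s],
\]
which is $\bu=\bm\sigma^2+\gamma^2\bP\bu$. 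For the range statement, note that $0\leq G_t\leq H$ because $R_t\in[0,1]$. A random variable supported in an interval of length $H$ has variance at most $H^2/4\leq H^2$, which gives $\bm0\leq\bu\leq H^2\bone$. The one point to check carefully is that the conditional laws depend only on $S_t$; this follows from the full-history Markov condition \eqref{eq:data_model}.

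Centering of $\psi$ at source $y$ is immediate from the identity:
\[
\E[\psi\mid S_t=y]=\sigma^2(y)+\gamma^2(\bP\bu)(y)-u(y)=0.
\]

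For the sourcewise range at a fixed source $y$, $\beta(y,r,x)$ lies in $[-V^\pi(y),H-V^\pi(y)]$, so $\beta^2\in[0,H^2]$. Also $\gamma^2u(x)\in[0,\gamma^2H^2]\subseteq[0,H^2]$. The term $-u(y)$ is a constant once the source is fixed. Hence $\psi(y,\cdot,\cdot)$ takes values in $[-u(y),\,2H^2-u(y)]$, an interval of width $2H^2$. Because it is centered, this interval necessarily contains $0$, so it fits the form required by \eqref{eq:source_range} with $b=2H^2$.

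The main, though mild, obstacle is justifying the vanishing cross term when rewards depend on the next state. This needs $\E[G_{t+1}-V^\pi(S_{t+1})\mid\gF_{t+1}]=0$, which holds because the future reward-transition process after time $t+1$ depends on the past only through $S_{t+1}$, again by \eqref{eq:data_model}. A further minor point is that $\beta$ involves the realized reward $r$ and not $r(s)$; this is harmless because the decomposition uses the realized $R_t$ throughout.
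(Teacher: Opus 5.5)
Your proposal is correct and follows the paper's proof essentially step for step. Both use the same return decomposition $G_t-V^\pi(S_t)=\beta(S_t,R_t,S_{t+1})+\gamma(G_{t+1}-V^\pi(S_{t+1}))$, kill the cross term by conditioning on $\gF_{t+1}$ (which already contains $R_t$ and $S_{t+1}$, so next-state-dependent rewards cause no trouble), and place $\psi$ in the same sourcewise interval with lower endpoint $-u(y)$. The differences are cosmetic: you use the sharper $H^2/4$ variance bound and deduce that $0$ lies in the interval from centering, whereas the paper reads off $-u(y)\leq0\leq(1+\gamma^2)H^2-u(y)$ directly.
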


\begin{proof}
Bounded rewards imply $0\leq G_t\leq H$ and hence $0\leq u(s)\leq H^2$. The return recursion and the definition of the value function give
\begin{equation*}
    G_t-V^\pi(S_t)
    =\beta(S_t,R_t,S_{t+1})
       +\gamma\prn{G_{t+1}-V^\pi(S_{t+1})}.
\end{equation*}
The Markov reward property holds conditionally on the complete observed history. In particular, after conditioning additionally on $(R_t,S_{t+1})$, the future return has conditional mean $V^\pi(S_{t+1})$ and conditional residual second moment $u(S_{t+1})$. Therefore
\begin{equation*}
    \E\brk{G_{t+1}-V^\pi(S_{t+1})
           \mid\gF_t,R_t,S_{t+1}}=0.
\end{equation*}
The cross term in the square of the preceding return decomposition consequently has conditional expectation zero. Conditioning on $S_t=s$ proves the variance equation in~\eqref{eq:return_variance_bellman}. This argument does not require conditional independence between $R_t$ and $S_{t+1}$.

The variance equation proves sourcewise centering of $\psi$. For fixed source $y$, we have $0\leq\beta(y,R,S')^2\leq H^2$ and $0\leq\gamma^2u(S')\leq\gamma^2H^2$. Thus $\psi$ belongs to
\begin{equation*}
    [-u(y),(1+\gamma^2)H^2-u(y)].
\end{equation*}
This interval contains zero and has width at most $2H^2$, as required.
\end{proof}

\begin{lemma}[Weighted Bellman energy]\label{lemma:bellman_energy}
For every admissible initial pair $(\bp,x)$ and every $N\geq0$, the reverse process satisfies the pathwise identity
\begin{align}
    \eta\sum_{n<N}p_n(X_{n+1})\beta_{n+1}^2
    ={}&\bp_0^{\top}\bu-\bp_N^{\top}\bu
       +\eta\gamma(1-\gamma)\sum_{n<N}
             p_n(X_{n+1})u(X_n)
       +Y_N(\psi).
       \label{eq:bellman_energy_identity}
\end{align}
If $\eta H\leq1$, then
\begin{equation}\label{eq:bellman_energy_bound}
    \sup_{\substack{\bp\geq\bm0,\ \norm{\bp}_1\leq1\\x\in\gS,\ N\geq0}}
       \E_{\bp,x}\brk{
          \eta\sum_{n<N}p_n(X_{n+1})\beta_{n+1}^2}
    \leq6H^2.
\end{equation}
\end{lemma}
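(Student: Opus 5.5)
The identity is pure algebra, and I will derive it by telescoping. By the definition of $\psi$ in \eqref{eq:squared_bellman_observable}, each summand satisfies $\beta_{n+1}^2=\psi(X_{n+1},\rho_{n+1},X_n)+u(X_{n+1})-\gamma^2u(X_n)$. Multiplying by $\eta p_n(X_{n+1})$ and summing gives
\[
\eta\sum_{n<N}p_n(X_{n+1})\beta_{n+1}^2
=Y_N(\psi)+\eta\sum_{n<N}p_n(X_{n+1})\prn{u(X_{n+1})-\gamma^2u(X_n)}.
\]
To handle the second sum, I use the weight recursion \eqref{eq:reverse_weight_update}. With $x=X_n$ and $y=X_{n+1}$,
\[
(\bp_{n+1}-\bp_n)^\top\bu=\eta p_n(y)\prn{\gamma u(x)-u(y)}.
\]
Hence
\[
\eta p_n(y)\prn{u(y)-\gamma^2u(x)}
=-(\bp_{n+1}-\bp_n)^\top\bu+\eta p_n(y)\prn{\gamma-\gamma^2}u(x).
\]
Summing over $n<N$, the first term telescopes to $\bp_0^\top\bu-\bp_N^\top\bu$. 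The second term is exactly $\eta\gamma(1-\gamma)\sum_{n<N}p_n(X_{n+1})u(X_n)$. This proves \eqref{eq:bellman_energy_identity} pathwise. It is valid for any admissible starting pair, since only the recursion is used.

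For the bound \eqref{eq:bellman_energy_bound}, I take expectations term by term.
\begin{itemize}[label={}]
\item \emph{Boundary terms.} Since $\bm0\leq\bu\leq H^2\bone$ and $\norm{\bp_0}_1\leq1$ with $\bp_N\geq\bm0$, these contribute at most $H^2$.
\item \emph{Middle term.} Use $u\leq H^2$ and the mass budget $\eta\sum p_n(X_{n+1})\leq H$ from Lemma~\ref{lemma:reverse_weights}. This gives at most $\gamma(1-\gamma)H^2\cdot H=\gamma H^2\leq H^2$.
\item \emph{Term $Y_N(\psi)$.} Lemma~\ref{lemma:return_variance} shows that $\psi$ satisfies \eqref{eq:source_centering}--\eqref{eq:source_range} with $b=2H^2$. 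Lemma~\ref{lemma:generic_corrector} then gives $\abs{\E_{\bp,x}Y_N(\psi)}\leq2\eta H\cdot2H^2=4\eta H^3\leq4H^2$, using $\eta H\leq1$.
\end{itemize}
Adding the three contributions gives $H^2+H^2+4H^2=6H^2$, uniformly in $(\bp,x,N)$.

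The only delicate point is the term $Y_N(\psi)$. Its weights are not centered in reverse time, so its expectation cannot simply be taken as zero. This is why I route it through the anchored Poisson correction of Lemma~\ref{lemma:generic_corrector}. That step requires verifying the sourcewise centering and the range width $2H^2$ of $\psi$, which Lemma~\ref{lemma:return_variance} already supplies. The hypothesis $\eta H\leq1$ enters only here.
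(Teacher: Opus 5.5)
Your proposal is correct and follows the paper's proof essentially step for step. You obtain the identity by telescoping $(\bp_{n+1}-\bp_n)^\top\bu=\eta p_n(X_{n+1})\bigl(\gamma u(X_n)-u(X_{n+1})\bigr)$ against the definition of $\psi$. You then bound the expectation by $H^2+\gamma H^2+4\eta H^3\leq6H^2$, using the mass budget for the middle term and Lemma~\ref{lemma:generic_corrector} with range width $2H^2$ for $Y_N(\psi)$.
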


\begin{proof}
Take the inner product of~\eqref{eq:reverse_weight_update} with $\bu$ and telescope. With $x=X_n$ and $y=X_{n+1}$, this gives
\begin{equation*}
    \bp_0^{\top}\bu-\bp_N^{\top}\bu
    =\eta\sum_{n<N}p_n(y)\prn{u(y)-\gamma u(x)}.
\end{equation*}
Also, the definition of $\psi$ gives
\begin{equation*}
    Y_N(\psi)
    =\eta\sum_{n<N}p_n(y)
       \prn{\beta_{n+1}^2+\gamma^2u(x)-u(y)}.
\end{equation*}
Adding these two identities and the term
$\eta\gamma(1-\gamma)\sum_{n<N}p_n(y)u(x)$
cancels every term involving $u$, proving~\eqref{eq:bellman_energy_identity}.

Apply Lemma~\ref{lemma:generic_corrector} to $h=\psi$, using its own local Poisson solution and its range width $b=2H^2$. It yields
\begin{equation*}
    \abs{\E_{\bp,x}Y_N(\psi)}\leq4\eta H^3.
\end{equation*}
In~\eqref{eq:bellman_energy_identity}, the first two terms are bounded above by $H^2$, because $\bp_N^{\top}\bu\geq0$. The mass budget gives
\begin{equation*}
    \eta\gamma(1-\gamma)\sum_{n<N}p_n(X_{n+1})u(X_n)
    \leq\gamma(1-\gamma)H^3=\gamma H^2.
\end{equation*}
Taking expectations and using $\eta H\leq1$ proves
\begin{equation*}
    \E_{\bp,x}\brk{
       \eta\sum_{n<N}p_n(X_{n+1})\beta_{n+1}^2}
    \leq(1+\gamma)H^2+4\eta H^3\leq6H^2.
\end{equation*}
Every bound is uniform in the initial pair and in $N$.
\end{proof}

The elementary bound $\beta_{n+1}^2\leq H^2$, together with the mass budget, would only give $H^3$. Lemma~\ref{lemma:bellman_energy} saves a factor of $H$ because the squared Bellman innovation is coupled to the variance of the remaining return. The next lemma transfers this saving to the conditional variance of the reverse martingale.

\subsection{Compensating the reverse martingale variance}

For the Poisson solution associated with $h=\beta$, define
\begin{equation}\label{eq:reverse_conditional_variance}
    v_i(x)\coloneq
       \E\brk{m_{n+1}(i)^2\mid X_n=x},
    \qquad \bv(x)\coloneq(v_i(x))_{i\in\gS}.
\end{equation}
Here the conditional expectation is taken with the reverse reward law~\eqref{eq:reverse_reward_law}; equivalently, it defines $v_i(x)$ for every state $x$, independently of the initial distribution or of $n$. Let
\begin{equation}\label{eq:square_potential}
    F_n\coloneq\bp_n^{\top}\bU^{\odot2}(X_n),
    \qquad
    \bU^{\odot2}(x)\coloneq(U_i(x)^2)_{i\in\gS}.
\end{equation}
Nonnegativity of the weights and the bound $|U_i|\leq H$ imply $0\leq F_n\leq H^2$.

\begin{lemma}[Reverse variance compensation]\label{lemma:variance_compensation}
For every $i,x\in\gS$,
\begin{equation}\label{eq:variance_compensation_coordinate}
    v_i(x)
    =\E\brk{\ind_{\{X_{n+1}=i\}}\beta_{n+1}^2
              \mid X_n=x}
       +(\bP^\star U_i^2)(x)-U_i(x)^2.
\end{equation}
Consequently,
\begin{align}
    \bp_n^{\top}\bv(X_n)
    ={}&\E\brk{p_n(X_{n+1})\beta_{n+1}^2
                  \mid\gG_n}
       +\E[F_{n+1}-F_n\mid\gG_n]\notag\\
      &-\eta\gamma\E\brk{
          p_n(X_{n+1})U_{X_n}(X_{n+1})^2\mid\gG_n}
       \label{eq:variance_compensation_exact}\\
    \leq{}&\E\brk{p_n(X_{n+1})\beta_{n+1}^2
                  \mid\gG_n}
       +\E[F_{n+1}-F_n\mid\gG_n].
       \label{eq:variance_compensation_inequality}
\end{align}
If $\eta H\leq1$, then the nonnegative predictable energy satisfies
\begin{equation}\label{eq:uniform_predictable_energy}
    \sup_{\substack{\bp\geq\bm0,\ \norm{\bp}_1\leq1\\x\in\gS,\ N\geq0}}
       \E_{\bp,x}\brk{
          \eta\sum_{n<N}\bp_n^{\top}\bv(X_n)}
    \leq7H^2.
\end{equation}
\end{lemma}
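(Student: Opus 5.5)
We prove the coordinate identity by expanding the square of the martingale difference, then contract it against the weights, and finally telescope using the bounds already established in Lemma~\ref{lemma:bellman_energy}.

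\textbf{Step 1: the coordinate identity.} Fix $X_n=x$ and write $y=X_{n+1}$. By \eqref{eq:reverse_martingale_difference},
\[
m_{n+1}(i)=\ind_{\{y=i\}}\beta_{n+1}-U_i(x)+U_i(y).
\]
On $\{y=i\}$ the anchor $U_i(i)=0$ kills the last term. It is convenient to write $m_{n+1}(i)=Z-U_i(x)$ with $Z\coloneq\ind_{\{y=i\}}\beta_{n+1}+U_i(y)$. Since $\E[m_{n+1}(i)\mid X_n=x]=0$, we have $\E[Z\mid X_n=x]=U_i(x)$, and therefore $v_i(x)=\E[Z^2\mid X_n=x]-U_i(x)^2$.

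Now expand $Z^2$. The cross term $2\ind_{\{y=i\}}\beta_{n+1}U_i(y)$ vanishes because $U_i(i)=0$. Hence
\[
Z^2=\ind_{\{y=i\}}\beta_{n+1}^2+U_i(y)^2,
\]
whose conditional mean is $\E[\ind_{\{X_{n+1}=i\}}\beta_{n+1}^2\mid X_n=x]+(\bP^\star U_i^2)(x)$. This proves \eqref{eq:variance_compensation_coordinate}. The only inputs are the reverse reward law \eqref{eq:reverse_reward_law} and the anchoring.

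\textbf{Step 2: weighted version.} Multiply \eqref{eq:variance_compensation_coordinate} by $p_n(i)$, which is $\gG_n$-measurable, and sum over $i$. The first term becomes $\E[p_n(X_{n+1})\beta_{n+1}^2\mid\gG_n]$. The remaining terms give $\E[\bp_n^\top\bU^{\odot2}(X_{n+1})\mid\gG_n]-F_n$.

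To compare with $F_{n+1}$, use \eqref{eq:reverse_weight_update}:
\[
F_{n+1}=\bp_n^\top\bU^{\odot2}(X_{n+1})+\eta p_n(y)\bigl(\gamma U_x(y)^2-U_y(y)^2\bigr).
\]
The anchor gives $U_y(y)=0$, so $\bp_n^\top\bU^{\odot2}(X_{n+1})=F_{n+1}-\eta\gamma p_n(y)U_x(y)^2$. Substituting yields the exact identity \eqref{eq:variance_compensation_exact}. The inequality \eqref{eq:variance_compensation_inequality} follows by dropping the nonpositive last term.

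\textbf{Step 3: energy bound.} Sum \eqref{eq:variance_compensation_inequality} over $n<N$, multiply by $\eta$, and take $\E_{\bp,x}$. By the tower property the $F$ terms telescope to $\eta\,\E[F_N-F_0]\le\eta H^2$, since $0\le F_n\le H^2$. Lemma~\ref{lemma:bellman_energy} bounds the $\beta^2$ sum by $6H^2$. Using $\eta\le1$ gives the total bound $7H^2$, uniformly over the initial pair and over $N$.

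Integrability is not an issue, because every quantity involved is bounded. The main point needing care is Step 2: the weights change between $n$ and $n+1$, so $F$ does not telescope directly against $\bP^\star U^2$. The anchor $U_y(y)=0$ is exactly what makes the resulting discrepancy have a favorable sign.
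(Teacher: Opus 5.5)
Your proposal is correct and follows the paper's proof essentially step for step. The anchor kills the cross term in the squared martingale difference, and the weight update together with the anchor turns $\bp_n^\top\bU^{\odot2}(X_{n+1})$ into $F_{n+1}$ minus a nonnegative term; telescoping with Lemma~\ref{lemma:bellman_energy} then gives $7H^2$. The only difference is cosmetic: you get the coordinate identity from the variance formula for $Z$, using the already-established centering of $\bm m_{n+1}$, whereas the paper expands the full square and applies the Poisson equation directly.
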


\begin{proof}
Fix $i,x$ and abbreviate $A\coloneq\ind_{\{X_{n+1}=i\}}\beta_{n+1}$. The martingale difference has the form
\begin{equation*}
    m_{n+1}(i)=A-U_i(x)+U_i(X_{n+1}).
\end{equation*}
The anchor implies the pathwise equality
$A U_i(X_{n+1})=0$: either the indicator is zero, or $X_{n+1}=i$ and $U_i(i)=0$. Also, the local Poisson equation gives
\begin{equation*}
    \E[A\mid X_n=x]=g_i(x)
       =U_i(x)-(\bP^\star U_i)(x).
\end{equation*}
Expanding the square and conditioning on $X_n=x$ therefore gives
\begin{align*}
    v_i(x)
    ={}&\E[A^2\mid X_n=x]+U_i(x)^2
       +(\bP^\star U_i^2)(x)\\
      &-2U_i(x)g_i(x)-2U_i(x)(\bP^\star U_i)(x)\\
    ={}&\E[A^2\mid X_n=x]
       +(\bP^\star U_i^2)(x)-U_i(x)^2.
\end{align*}
This proves~\eqref{eq:variance_compensation_coordinate}.

Next set $x=X_n$ and $y=X_{n+1}$. The weight recursion gives the pathwise identity
\begin{align*}
    F_{n+1}-F_n
    &=\bp_n^{\top}\brk{\bU^{\odot2}(y)-\bU^{\odot2}(x)}
       +(\bp_{n+1}-\bp_n)^{\top}\bU^{\odot2}(y)\\
    &=\bp_n^{\top}\brk{\bU^{\odot2}(y)-\bU^{\odot2}(x)}
       +\eta\gamma p_n(y)U_x(y)^2.
\end{align*}
The omitted source term is $-\eta p_n(y)U_y(y)^2=0$. Multiply~\eqref{eq:variance_compensation_coordinate} by $p_n(i)$ and sum over $i$. Conditional expectation of the preceding display then gives~\eqref{eq:variance_compensation_exact}. Dropping the nonpositive final term gives~\eqref{eq:variance_compensation_inequality}.

Multiply~\eqref{eq:variance_compensation_inequality} by $\eta$, sum over $n<N$, and take expectations. The square potential telescopes, so Lemma~\ref{lemma:bellman_energy} yields
\begin{align*}
    \E_{\bp,x}\brk{\eta\sum_{n<N}\bp_n^{\top}\bv(X_n)}
    &\leq\E_{\bp,x}\brk{
          \eta\sum_{n<N}p_n(X_{n+1})\beta_{n+1}^2}
       +\eta\E_{\bp,x}[F_N-F_0]\\
    &\leq6H^2+\eta H^2\leq7H^2.
\end{align*}
We used $F_0\geq0$, $F_N\leq H^2$, and $\eta\leq1$, the latter following from $\eta H\leq1$. Uniformity follows from the corresponding uniform bound in Lemma~\ref{lemma:bellman_energy}.
\end{proof}

The preceding lemmas use irreducibility to define the reverse hitting representation, but no quantitative mixing bound. The random propagation weights, the local Poisson anchor, and the Bellman variance identity together control the stochastic energy. Mixing will enter separately when we bound how long it takes the algorithm to forget its initialization.

\section{Concentration of the Stochastic Convolution}
\label{Section:concentration}
\subsection{From expected energy to a tail bound}
\label{subsec:convolution_concentration}

The uniform expected-energy estimate yields an exponential tail bound
for the stochastic convolution through the augmented reverse Markov process.

\begin{proposition}[Stochastic convolution at a fixed terminal time]
\label{prop:stochastic_convolution}
Under Assumptions~\ref{assumption:bounded} and \ref{assumption:mixing},
suppose that $0<\eta\le H^{-1}$, and let $T$ be a deterministic nonnegative
integer. For the stochastic convolution
\[
 \bm Y_T
 =\eta\sum_{t=0}^{T-1}
   \widehat{\bPhi}_{T,t+1}\be_{S_t}
   \beta(S_t,R_t,S_{t+1}),
\]
any initial state distribution $\nu$, and any $q\in(0,1)$, set
\[
 \ell_q\coloneq\log\frac{4d}{q\mumin}.
\]
Then
\begin{equation}
 \PB_\nu\!\left(
   \norm{\bm Y_T}_\infty
   >6H\sqrt\eta\,\ell_q+2\eta H^2
 \right)\le q.
 \label{eq:convolution_tail}
\end{equation}
\end{proposition}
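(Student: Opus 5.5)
We first prove the bound under the stationary start $S_0\sim\bmu$ and then transfer it to an arbitrary $\nu$. Fix a coordinate $j$ and read the trajectory backward as in \eqref{eq:reverse_path}, so that $X_0=S_T\sim\bmu$ and $\bp_0=\be_j$. By \eqref{eq:reverse_convolution} and Lemma~\ref{lemma:generic_corrector} with $h=\beta$ and $b=H$, we have
\[
Y_T(j)=M_T+C_T,\qquad M_T\coloneq\eta\sum_{n<T}\bp_n^\top\bm m_{n+1},\qquad \abs{C_T}\le 2\eta H^2.
\]
This deterministic correction accounts for the term $2\eta H^2$ in \eqref{eq:convolution_tail}. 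It therefore suffices to show that $\PB(\abs{M_T}>6H\sqrt\eta\,\ell_q)\le q/d$ after the transfer to $\nu$, and then take a union bound over $j$.

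The martingale $M_T$ has increments $\eta\bp_n^\top\bm m_{n+1}$. These are bounded by $\eta H$, since $\norm{\bp_n}_1\le1$ and $\norm{\bm m_{n+1}}_\infty\le H$. Their conditional variances satisfy $\E[(\bp_n^\top\bm m_{n+1})^2\mid\gG_n]\le \bp_n^\top\bv(X_n)$ by Jensen's inequality, because $\bp_n$ is a subprobability vector. Define the predictable energy $W_T\coloneq\eta^2\sum_{n<T}\bp_n^\top\bv(X_n)$. Lemma~\ref{lemma:variance_compensation} gives $\E_{\bp,x}W_N\le 7\eta H^2$, uniformly over the initial pair $(\bp,x)$ and over $N$.

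The main step is to upgrade this expectation bound to an exponential tail for $W_T$. The pair $(\bp_n,X_n)$ is a Markov chain. Its state at any intermediate time is again admissible, because $\bp_n\ge\bm0$ and $\norm{\bp_n}_1\le1$, so the uniform bound applies after restarting there. By the standard Khas'minskii/Kac argument, a uniform expected additive functional $\sup\E W\le \kappa\coloneq7\eta H^2$ with nonnegative increments yields $\E[e^{\lambda W}]\le (1-\lambda\kappa)^{-1}$ for $\lambda<1/\kappa$. Alternatively, one can iterate the Markov inequality over stopping times: $\PB(W_T>2k\kappa)\le 2^{-k}$, where each increment is at most $\eta^2H^2\ll\kappa$. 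Either way, $\PB(W_T>c\,\eta H^2\ell)\le e^{-\ell}$. Freedman's inequality is then applied on the event $\{W_T\le c\eta H^2\ell\}$. With increment bound $\eta H$ and variance level $c\eta H^2\ell$, it gives $\abs{M_T}\lesssim H\sqrt{\eta}\,\ell+\eta H\ell$ with probability at least $1-2e^{-\ell}$. The second term is absorbed into the first because $\eta\le1$. We expect the main obstacle to be keeping the constants within $6$; we would choose the threshold splitting carefully, for instance by using a single combined exponential supermartingale $\exp(\lambda M-\phi(\lambda)W)$ together with the moment generating function bound for $W$ through Cauchy--Schwarz.

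Finally, we transfer to an arbitrary $\nu$. The event in question is a function of the forward path $(S_0,R_0,\dots,S_T)$, and its conditional probability given $S_0=s$ does not depend on $\nu$. Since $\PB_{\bmu}(\cdot)\ge\mu(s)\PB_s(\cdot)\ge\mumin\PB_s(\cdot)$, any event with stationary probability at most $q'$ has probability at most $q'/\mumin$ under every $\nu$. We therefore run the stationary bound with failure level $q\mumin/(2d)$ per coordinate. This yields $\ell=\log(2d/(q\mumin))$ plus the factor $2$ from the two Freedman/energy events, which is at most $\ell_q=\log(4d/(q\mumin))$. Taking a union bound over $j\in\gS$ proves \eqref{eq:convolution_tail}.
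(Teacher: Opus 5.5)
Your proposal follows the paper's proof essentially step for step: the stationary start read backward, the $2\eta H^2$ Poisson correction from Lemma~\ref{lemma:generic_corrector}, the weighted Cauchy--Schwarz bound $\eta^2\bp_n^\top\bv(X_n)$ on the conditional variance, the uniform $7H^2$ energy bound of Lemma~\ref{lemma:variance_compensation} upgraded by a Khasminskii/Kac moment argument on the augmented chain $(\bp_n,X_n)$, Freedman's inequality on the good event, a union bound over coordinates, and the $1/\mumin$ change of initial law. The constant you flag as the main obstacle closes with the natural choice: Kac's bound with $\lambda=1/(2\kappa)$ and $\kappa=7\eta H^2$ gives $\PB(W_T>14\eta H^2\ell)\le2e^{-\ell}$; Freedman with $v=14\eta H^2\ell$ then gives $\abs{M_T}\le(\sqrt{28}+2/3)H\sqrt\eta\,\ell<6H\sqrt\eta\,\ell$ outside a further event of probability $2e^{-\ell}$, and these two tails are exactly the factor $4$ in $\ell_q$ (your $2^{-k}$ stopping-time variant, by contrast, would be too lossy to stay under $6$).
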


\begin{proof}
The case $T=0$ is immediate. First suppose that $S_0$ has distribution
$\mu$, and fix a terminal coordinate $j$. We use the reverse process and
filtration from \cref{lemma:reverse_kernel}, with
$\bp_0=\be_j$ and $X_0=S_T$.

The augmented process
\[
 Z_n\coloneq(\bp_n,X_n)
 \quad\text{takes values in}\quad
 \mathcal D\times\gS,
 \qquad
 \mathcal D\coloneq
 \{\bp\in\RB_+^d:\bone^\top\bp\le1\}.
\]
Given $(\bp_n,X_n)=(\bp,x)$, it draws $y$ from $P^\star(x,\cdot)$
and moves to
\[
 \left(\bp+\eta p(y)(\gamma\be_x-\be_y),\ y\right).
\]
The nonnegativity and mass identity for the reverse weights show that
this transition preserves $\mathcal D\times\gS$. The reverse reward
does not enter the weight update. Thus $Z_n$ is a time-homogeneous
Markov process, also relative to the reverse filtration containing the
rewards. In particular, its future law conditional on $\gG_n$ depends
only on $(\bp_n,X_n)$.

Define the nonnegative function
\[
 f(\bp,x)\coloneq\eta\bp^\top\bv(x).
\]
The uniform estimate in \cref{lemma:variance_compensation} gives
\begin{equation}
 \sup_{(\bp,x)\in\mathcal D\times\gS}\sup_{N\ge1}
 \E_{\bp,x}\sum_{n=0}^{N-1}f(Z_n)\le7H^2.
 \label{eq:uniform_energy_for_exponential}
\end{equation}
Applying the discrete exponential-moment lemma
\cref{lemma:discrete_khasminskii} with $K=7H^2$, we obtain, for every
$\ell>0$,
\begin{equation}
 \PB_\mu\!\left(
   \eta\sum_{n=0}^{T-1}\bp_n^\top\bv(X_n)>14H^2\ell
 \right)\le2e^{-\ell}.
 \label{eq:energy_tail}
\end{equation}
The bound is valid conditional on any value of $X_0$, so it also
holds for its stationary distribution.

Next consider the scalar reverse martingale
\[
 M_N\coloneq\eta\sum_{n=0}^{N-1}
     \bp_n^\top\bm m_{n+1},\qquad 0\le N\le T.
\]
The vector $\bp_n$ is $\gG_n$-measurable, and
$\E[\bm m_{n+1}\mid\gG_n]=0$. The increment bound in
\cref{lemma:generic_corrector} gives
$\abs{M_{n+1}-M_n}\le\eta H$.
Moreover, weighted Cauchy--Schwarz and $\norm{\bp_n}_1\le1$ imply
\begin{align}
 \E\!\left[
   (M_{n+1}-M_n)^2\mid\gG_n
 \right]
 &\le\eta^2\E\!\left[
   \sum_{i\in\gS}p_n(i)m_{n+1}(i)^2\mid\gG_n
 \right]
 \notag\\
 &=\eta^2\bp_n^\top\bv(X_n).
 \label{eq:martingale_variation_comparison}
\end{align}
Consequently, outside an event of probability at most $2e^{-\ell}$,
the predictable quadratic variation satisfies
\[
 \langle M\rangle_T
 \coloneq\sum_{n=0}^{T-1}
    \E[(M_{n+1}-M_n)^2\mid\gG_n]
 \le14\eta H^2\ell.
\]
Set $v\coloneq14\eta H^2\ell$ and
$a\coloneq\sqrt{2v\ell}+(2/3)\eta H\ell$.
The scalar martingale bound in \cref{lemma:scalar_freedman} gives
\[
 \PB_\mu(\abs{M_T}>a)
 \le\PB_\mu(\langle M\rangle_T>v)
   +\PB_\mu(\abs{M_T}>a,\langle M\rangle_T\le v)
 \le4e^{-\ell}.
\]
Thus, outside an event of probability at most $4e^{-\ell}$,
\[
 \abs{M_T}
 \le\sqrt{28}\,H\sqrt\eta\,\ell
      +\frac23\eta H\ell
 \le6H\sqrt\eta\,\ell.
\]
The last inequality uses $\eta\le1$ and
$\sqrt{28}+2/3<6$. Finally,
\cref{lemma:generic_corrector} with sourcewise range width $H$
implies
\[
 \abs{Y_T(j)-M_T}\le2\eta H^2.
\]
For each fixed coordinate, the displayed stochastic bound therefore
fails with probability at most $4e^{-\ell}$. A union bound over the
$d$ coordinates and the choice $\ell=\ell_q$ give
\begin{equation}
 \PB_\mu\!\left(
   \norm{\bm Y_T}_\infty
   >6H\sqrt\eta\,\ell_q+2\eta H^2
 \right)\le4de^{-\ell_q}=q\mumin.
 \label{eq:stationary_convolution_budget}
\end{equation}

It remains to change the initial state distribution. All the preceding
events are measurable with respect to the finite forward trajectory
$(S_0,R_0,S_1,\ldots,R_{T-1},S_T)$. Its conditional law given $S_0$
is the same under $\PB_\nu$ and $\PB_\mu$. Hence, for any such event
$A$,
\begin{equation}
 \PB_\nu(A)
 =\E_\mu\!\left[
    \frac{\nu(S_0)}{\mu(S_0)}\ind_A
  \right]
 \le\frac{\PB_\mu(A)}{\mumin},
 \label{eq:initial_distribution_change}
\end{equation}
where the density is well defined because $\mu(s)>0$ for every state.
Combining \cref{eq:stationary_convolution_budget} with
\cref{eq:initial_distribution_change} proves the claim.
\end{proof}

The only cost of an arbitrary initial state distribution in
\cref{eq:convolution_tail} is the logarithm of $1/\mumin$.
The bound also has no logarithmic dependence on $T$.

\section{Forgetting the Initialization}
\label{Section:initialization}
The stochastic estimate controls the noise accumulated by TD. We now bound
how long the algorithm retains its initial error. By
\eqref{eq:error_decomposition}, this amounts to controlling the largest row
sum of $\widehat\bPhi_{T,0}$. Reading these row sums backward gives a useful
interpretation: each is the survival probability of an auxiliary particle
whose position follows the random propagation weights. We will show that
its expected lifetime is at most a constant times
\[
    \frac{H}{\eta\mumin}+\mathfrak h_\star,
\]
where $\mathfrak h_\star$ is a worst-case expected hitting time for the
reverse chain. The hitting-time cost therefore enters additively.

\subsection{Hitting times and a compensation identity}

For a chain started at $x$, let
\[
    \tau_i\coloneq\inf\{n\geq0:X_n=i\},
    \qquad
    \tau_i^+\coloneq\inf\{n\geq1:X_n=i\}.
\]
The notation $\E_x$ below refers to the state chain with kernel $\bP$,
whereas $\E_x^\star$ refers to the reverse kernel in
\eqref{eq:reverse_transition}. Define
\begin{equation}\label{eq:initialization_hitting_times}
\begin{split}
    h_i(x)&\coloneq\E_x\tau_i,
    \qquad h_i^\star(x)\coloneq\E_x^\star\tau_i,\\
    h_i^{\star,+}(x)&\coloneq\E_x^\star\tau_i^+,
    \qquad
    \mathfrak h_\star\coloneq\max_{i,x\in\gS}h_i^\star(x).
\end{split}
\end{equation}
Thus $h_i(i)=h_i^\star(i)=0$. In contrast,
$h_i^{\star,+}(i)$ is the mean time to return to $i$ after at least one
step. This distinction matters when the particle and the chain start at
the same state.

The following quantities describe the average and largest waiting costs:
\begin{equation}\label{eq:hitting_potential}
\begin{split}
    a(i)&\coloneq\sum_x\mu(x)h_i^\star(x),
    \qquad a_{\max}\coloneq\max_i a(i),\\
    r_\star&\coloneq\max_{i,x}h_i^{\star,+}(x),
    \qquad c(i)\coloneq\frac1{\mu(i)}.
\end{split}
\end{equation}
Here $a(i)$ is a hitting-time potential, distinct from the interval
endpoints $a_i$ used for the generic observable in
\eqref{eq:source_range}. Write $\bm a=(a(i))_{i\in\gS}$ and
$\bm c=(c(i))_{i\in\gS}$.

\begin{lemma}[Last predecessor in a reverse return excursion]
\label{lemma:reverse_predecessor}
Start the reverse chain at $i$ and let $J=X_{\tau_i^+-1}$. Then
\begin{equation}\label{eq:reverse_return_predecessor}
    h_i^{\star,+}(i)=\frac1{\mu(i)},
    \qquad \PB_i^\star(J=j)=P(i,j),\qquad j\in\gS.
\end{equation}
The forward chain has the same mean return time:
$\E_i\tau_i^+=1/\mu(i)$.
\end{lemma}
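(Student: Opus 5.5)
Three facts need proof here: Kac's formula for the reverse chain, the law of the last predecessor $J$, and Kac's formula for the forward chain. The mean return times are the standard Kac formula, applied to any irreducible finite chain with stationary law $\bmu$. Since $\bP^\star$ has the same stationary distribution (indeed $\sum_x\mu(x)P^\star(x,y)=\mu(y)\sum_xP(y,x)=\mu(y)$) and is irreducible, we get $h_i^{\star,+}(i)=1/\mu(i)$ and $\E_i\tau_i^+=1/\mu(i)$. To stay self-contained, I will derive Kac from the cycle-trick occupation measure. Set $\lambda(x)\coloneq\E_i\sum_{k=0}^{\tau_i^+-1}\ind_{\{X_k=x\}}$. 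One checks $\lambda P=\lambda$ by the usual one-step decomposition of the excursion, and $\lambda(i)=1$. Uniqueness of the invariant measure then gives $\lambda=\mu/\mu(i)$. Summing over $x$ yields the mean return time. This argument is the same for $\bP$ and $\bP^\star$.

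For the predecessor law, I decompose on the excursion time. For $j\in\gS$,
\[
\PB_i^\star(J=j)=\sum_{k\geq1}\PB_i^\star\prn{X_{k-1}=j,\ X_1,\ldots,X_{k-1}\neq i,\ X_k=i},
\]
with the convention that the case $k=1$ requires $j=i$. By the Markov property at time $k-1$, each summand equals $\PB_i^\star(X_{k-1}=j,\ k-1<\tau_i^+)\,P^\star(j,i)$. Summing over $k$ gives
\[
\PB_i^\star(J=j)=\lambda^\star(j)\,P^\star(j,i),
\]
where $\lambda^\star(j)=\E_i^\star\sum_{m=0}^{\tau_i^+-1}\ind_{\{X_m=j\}}$ is the reverse-chain excursion occupation measure. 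The Kac step identifies $\lambda^\star(j)=\mu(j)/\mu(i)$. Substituting the definition \eqref{eq:reverse_transition}, $P^\star(j,i)=\mu(i)P(i,j)/\mu(j)$, the product collapses to $P(i,j)$. This is exactly \eqref{eq:reverse_return_predecessor}.

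The main point needing care is the identification $\lambda^\star=\mu/\mu(i)$. It requires invariance of the excursion measure under $\bP^\star$ and uniqueness of the invariant measure up to scaling. Both hold because $\bP^\star$ is irreducible: it has the same edge graph as $\bP$ with every edge reversed, and a strongly connected graph stays strongly connected under reversal. We also need $\tau_i^+<\infty$ almost surely with finite mean, which holds on a finite irreducible chain. The remaining bookkeeping is the boundary case $k=1$ and the index shift between $\tau_i^+$ and the occupation sum over $0\le m<\tau_i^+$; I will check these carefully but expect them to be routine.
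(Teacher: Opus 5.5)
Your proposal is correct and follows the paper's proof essentially step for step. The paper also obtains Kac's formula from the excursion occupation vector $\bm n$, by showing $\bm n^\top\bP^\star=\bm n^\top$ and $n_i=1$ and invoking uniqueness. It then computes $\PB_i^\star(J=j)=n_jP^\star(j,i)=P(i,j)$ by counting the single transition into $i$ during the excursion, which is your sum over $k$ written as one expectation.
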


The appearance of the forward row $P(i,\cdot)$ in
\eqref{eq:reverse_return_predecessor} is useful. When a particle moves
at the end of a reverse return excursion, its destination has this
forward distribution. Moreover, since positive-time and ordinary hitting
times differ only when the chain starts at its target,
\begin{equation}\label{eq:hitting_basic_bounds}
    0\leq a_{\max}\leq\mathfrak h_\star,
    \qquad
    r_\star\leq\mathfrak h_\star+\mumin^{-1}.
\end{equation}

\begin{lemma}[Compensation for reverse hitting costs]
\label{lemma:hitting_compensation}
For every $i,j\in\gS$,
\begin{equation}\label{eq:hitting_reversal_identity}
    h_j^\star(i)=h_i(j)+a(j)-a(i).
\end{equation}
Consequently, if
\[
    q(i)\coloneq\sum_jP(i,j)h_j^\star(i),
    \qquad
    \ell(i)\coloneq\sum_jP(i,j)h_j^{\star,+}(i),
\]
then the vectors $\bq=(q(i))_{i\in\gS}$ and
$\bm\ell=(\ell(i))_{i\in\gS}$ satisfy
\begin{align}
    \bq&=(\bP-\bI)\bm a+\bm c-\bone,
       \label{eq:hitting_compensation}\\
    \ell(i)&=q(i)+\frac{P(i,i)}{\mu(i)},
    \qquad
    \bm\ell\leq(\bP-\bI)\bm a+2\bm c.
       \label{eq:positive_hitting_compensation}
\end{align}
\end{lemma}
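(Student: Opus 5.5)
The proof has three parts. First establish the reversal identity \eqref{eq:hitting_reversal_identity} through the fundamental matrix. Then average it against the forward row $P(i,\cdot)$ and use the forward first-step equation for hitting times to get \eqref{eq:hitting_compensation}. Finally, correct for the diagonal term to get \eqref{eq:positive_hitting_compensation}.

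\textbf{Step 1: the reversal identity.} Let $\Pi=\bone\bmu^\top$ and $\bm Z=\sum_{n\geq0}(\bP^n-\Pi)$. This series converges because $\bP$ is finite, irreducible and aperiodic. Let $\bm Z^\star$ be the analogous matrix for $\bP^\star$. Since $(P^\star)^n(x,y)=\mu(y)P^n(y,x)/\mu(x)$, we have
\[
Z^\star(x,y)=\mu(y)Z(y,x)/\mu(x).
\]
The standard formula $\E_x\tau_i=(Z(i,i)-Z(x,i))/\mu(i)$ holds for any irreducible chain; aperiodicity is not needed if we use the group inverse. Applied to the reverse chain it gives
\[
h_j^\star(i)=\frac{Z^\star(j,j)-Z^\star(i,j)}{\mu(j)}=\frac{Z(j,j)}{\mu(j)}-\frac{Z(j,i)}{\mu(i)}.
\]
Forward, $h_i(j)=Z(i,i)/\mu(i)-Z(j,i)/\mu(i)$. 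Subtracting,
\[
h_j^\star(i)-h_i(j)=\frac{Z(j,j)}{\mu(j)}-\frac{Z(i,i)}{\mu(i)}.
\]
Average the formula for $h^\star_i(x)$ against $\mu(x)$ and use $\bmu^\top\bm Z^\star=0$. This gives $a(i)=Z^\star(i,i)/\mu(i)=Z(i,i)/\mu(i)$, which yields \eqref{eq:hitting_reversal_identity}.

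\textbf{Step 2: averaging over the forward row.} Multiply \eqref{eq:hitting_reversal_identity} by $P(i,j)$ and sum over $j$:
\[
q(i)=\sum_jP(i,j)h_i(j)+(\bP\bm a)(i)-a(i).
\]
By the forward first-step analysis, $\sum_jP(i,j)h_i(j)=\E_i\tau_i^+-1$. By Lemma~\ref{lemma:reverse_predecessor}, this equals $c(i)-1$. This proves \eqref{eq:hitting_compensation}.

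\textbf{Step 3: positive hitting times.} The quantities $h_j^{\star,+}(i)$ and $h_j^\star(i)$ coincide unless $j=i$. In that case $h_i^{\star,+}(i)=1/\mu(i)$ by Lemma~\ref{lemma:reverse_predecessor}, while $h_i^\star(i)=0$. Hence $\ell(i)=q(i)+P(i,i)/\mu(i)$. The inequality then follows from $P(i,i)\leq1$ and $-\bone\leq0$.

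The main obstacle is Step 1, specifically the sign and index bookkeeping under reversal, including the identification $a(i)=Z(i,i)/\mu(i)$. If one prefers to avoid $\bm Z$, there is an alternative route. Show that $\phi(i,j)\coloneq h_j^\star(i)-h_i(j)$ satisfies $\phi(i,j)=\phi(x,j)-\phi(x,i)$ by using the cycle identity $h_i(j)+h_j(i)=h^\star_i(j)+h^\star_j(i)$ (commute times agree under reversal). Then average over $x\sim\bmu$. The matrix route is more direct, so that is the one I would use.
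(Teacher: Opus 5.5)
Your proposal is correct and follows essentially the same route as the paper: a fundamental-matrix formula for hitting times, the transpose relation $Z^\star(x,y)=\mu(y)Z(y,x)/\mu(x)$, and averaging against $\bmu$ to identify $a(i)$, followed by the same first-step analysis of the forward return time and the same diagonal correction. The differences are minor. The paper uses $(\bI-\bP+\bm\Pi)^{-1}$, which equals your deviation matrix plus $\bm\Pi$. This gives $a(i)=Z_{ii}/\mu(i)-1$ in place of your $Z(i,i)/\mu(i)$, and the inverse exists without aperiodicity. The paper also proves the hitting-time formula, via uniqueness of the boundary-value problem, rather than citing it as standard.
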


Complete proofs of both lemmas are given in
Appendix~\ref{Appendix:hitting_details}. The term
$(\bP-\bI)\bm a$ is the expected change of the potential along one
forward transition, which allows successive relocation costs to cancel.

\subsection{An auxiliary particle and its expected lifetime}

Recall the reverse weights $\bp_n$ from \eqref{eq:reverse_path}. On a
reverse transition from $x=X_n$ to $y=X_{n+1}$, their update is
\[
    \bp_{n+1}=\bp_n+\eta p_n(y)(\gamma\be_x-\be_y).
\]
Only the mass currently at $y$ is redistributed: some remains at $y$, some moves to
$x$, and the rest is removed. This is exactly the update of the following
particle distribution.

The driver $X_n$ evolves with kernel $\bP^\star$. While alive, the particle has
position $I_n\in\gS$, and death is absorbing. Given a pair
$(I_n,X_n)=(i,x)$ with the particle alive, first draw $y$ with law $P^\star(x,\cdot)$.
If $y\neq i$, leave the particle at $i$. If $y=i$, use a fresh random
choice with the following probabilities:
\begin{equation}\label{eq:particle_transition}
\begin{array}{c|c}
    \text{Particle action}&\text{Probability}\\ \hline
    \text{Remain at }i&1-\eta\\
    \text{Move to }x&\eta\gamma\\
    \text{Die}&\eta(1-\gamma).
\end{array}
\end{equation}
The next driver state is $y$ in all cases. These random choices are
independent of the driver and of all previous particle choices.
When $x=i$, remaining and moving lead to the same position, but we may
still distinguish the two choices. A step with $y=I_n$ is called an
\emph{activation}. Before its first activation the particle stays in
place.

Let $\zeta\geq1$ denote the step at which the particle dies. Start it at
$I_0=j$. Conditional on a driver path $X_0,\ldots,X_N$, the particle's
subprobability vector on the nonabsorbing states obeys the same recursion as $\bp_n$, with initial
vector $\be_j$. It follows by induction that
\begin{equation}\label{eq:particle_survival_representation}
    \norm{\bp_N}_1
    =\PB(\zeta>N\mid X_0,\ldots,X_N).
\end{equation}
Indeed, the mass at $y$ is split in the proportions in
\eqref{eq:particle_transition}, and all other mass is unchanged. The
driver is independent of the auxiliary random choices, so conditioning
on its full path does not alter these proportions. For the finite
stationary path in \eqref{eq:reverse_path}, this identity represents the
row sum $\be_j^\top\widehat\bPhi_{T,T-N}\bone$. The particle is only a
proof device; TD itself is unchanged.

\begin{lemma}[Uniform expected particle lifetime]\label{lemma:particle_lifetime}
Suppose that $0<\eta H\leq1$. For the particle in
\eqref{eq:particle_transition},
\begin{equation}\label{eq:particle_lifetime_bound}
    M\coloneq\max_{i,x\in\gS}\E_{i,x}\zeta
    \leq4\left(\frac{H}{\eta\mumin}+\mathfrak h_\star\right),
\end{equation}
where $\E_{i,x}$ starts the particle alive at $i$ and the driver at $x$.
\end{lemma}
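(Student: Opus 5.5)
The plan is a Lyapunov (Dynkin) argument on the pair $(I_n,X_n)$, using the compensation identity of Lemma~\ref{lemma:hitting_compensation} so that each relocation of the particle is charged only a potential difference. Charging a full reverse hitting time per relocation would cost $\mathfrak h_\star$ for each of the roughly $\gamma H$ expected relocations, which is a factor $H$ too much.

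\textbf{Step 1: decompose the lifetime into waiting and activations.} Alive, the particle at $i$ waits until the driver enters $i$. The number of steps this takes from driver state $x$ is $\tau_i^+$ under $\PB_x^\star$, with mean $h_i^{\star,+}(x)$. At the activation step with predecessor $x'$, the particle stays at $i$ with the driver now at $i$, or moves to $x'$ with the driver at $i$, or dies. Each activation kills with probability $\eta(1-\gamma)$, so the expected number of activations before death is exactly $1/(\eta(1-\gamma))=H/\eta$. The expected number of moves is $\gamma H$.

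\textbf{Step 2: price the activations.} After an activation in which the particle stays, the driver starts at the particle's position. By Lemma~\ref{lemma:reverse_predecessor}, the next waiting time then has mean $1/\mu(i)\leq\mumin^{-1}$, and the next predecessor has law $P(i,\cdot)$. These ``return'' activations therefore cost $H/(\eta\mumin)$ in total.

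\textbf{Step 3: price the moves via compensation.} After a move from $i$ to $j$ the driver sits at $i$. The next waiting time has mean $h_j^{\star,+}(i)$, which for $j\neq i$ equals $h_j^\star(i)$. By \eqref{eq:hitting_reversal_identity},
\[
h_j^\star(i)=h_i(j)+a(j)-a(i).
\]
The potential terms $a(j)-a(i)=a(I_{\text{new}})-a(I_{\text{old}})$ telescope along the sequence of particle positions. The total telescoped potential is at most $a_{\max}\leq\mathfrak h_\star$, by \eqref{eq:hitting_basic_bounds}.

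The remaining term $h_i(j)$ must be averaged over the move destination. When the destination is drawn from $P(i,\cdot)$ this average is $q(i)$. Relation \eqref{eq:hitting_compensation} rewrites $q(i)$ as $c(i)-1$ plus a forward drift $(\bP\bm a-\bm a)(i)$. That drift is in turn compensated by the potential $a$, since the particle's position moves like one forward step. To organize all of this I would take the candidate Lyapunov function
\[
G(i,x)=h_i^{\star,+}(x)\ \text{(or $h_i^\star(x)$ when $x\neq i$)}+\kappa\,a(i),\qquad \kappa\in\{1,2\},
\]
plus a constant multiple $\Lambda=O\bigl(\tfrac{H}{\eta\mumin}+\mathfrak h_\star\bigr)$ of the survival indicator. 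The aim is
\[
\E[G_{n+1}-G_n\mid\text{alive}]\leq -1,
\]
so that optional stopping at $\zeta$ gives $\E_{i,x}\zeta\leq\sup G\leq 4\bigl(\tfrac H{\eta\mumin}+\mathfrak h_\star\bigr)$. On a non-activation step the hitting time decreases by one, giving drift $-1$. On an activation step, death removes $\Lambda$ with probability $\eta(1-\gamma)$. This must pay for the jump in $G$ incurred by staying ($\leq 1/\mu(i)$) or moving ($\leq \ell(i)+$ potential change). Here the bound $\bm\ell\leq(\bP-\bI)\bm a+2\bm c$ from \eqref{eq:positive_hitting_compensation} is used, and $\eta H\leq1$ keeps the move probability $\eta\gamma$ comparable to the death budget.

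\textbf{Main obstacle.} The hard step is the activation immediately following a move. There the predecessor of the first entrance to $j$ from driver state $i$ is not distributed as $P(j,\cdot)$, so $q(j)$ and $\ell(j)$ do not apply directly. I would first try to absorb this by including in $G$ the exact expected cost to the next activation, $h^{\star,+}_{I}(X)$, so that the Lyapunov drift is only evaluated at activations whose predecessor law is given by Lemma~\ref{lemma:reverse_predecessor}. Failing that, I would bound that single non-return activation by $r_\star\leq\mathfrak h_\star+\mumin^{-1}$. This is affordable only if the $\mathfrak h_\star$ part can be charged to the telescoping potential rather than paid once per move.
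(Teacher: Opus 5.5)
There is a genuine gap. It sits at the step you flag as the main obstacle, and the drift formulation does not get around it. Take $G(i,x)=h_i^{\star,+}(x)+\kappa a(i)+\Lambda$ on alive states and $G=0$ after death. An activation from driver state $x$ has predecessor exactly $x$, so a first-step computation gives
\[
\E\bigl[G_{n+1}-G_n\mid I_n=i,\,X_n=x\bigr]
=-1+P^\star(x,i)\Bigl[(1-\eta)c(i)+\eta\gamma\bigl(h_x^{\star,+}(i)+\kappa a(x)\bigr)-\eta\kappa a(i)-\eta(1-\gamma)\Lambda\Bigr].
\]
A drift of at most $-1$ at every alive state therefore forces $\Lambda\geq\gamma H\bigl(h_x^{\star,+}(i)+\kappa a(x)\bigr)-\kappa Ha(i)$ for every pair with $P(i,x)>0$. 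That is a worst-case relocation cost multiplied by $\gamma H$, exactly the charge you set out to avoid.

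A concrete instance: the three-state family \eqref{eq:lower_joint_family} is reversible. Take $i=0$ and $x=1$. Then $h_1^{\star,+}(0)=h_1(0)\geq1/(2p\alpha)$, while $a(0)\leq4p/\alpha$. So for small $p$ and $\kappa\in\{1,2\}$ you need $\Lambda\gtrsim H/(p\alpha)$, which is $H\mathfrak h_\star$ up to a logarithm.

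The bound $\bm\ell\leq(\bP-\bI)\bm a+2\bm c$ cannot help here. It rests on $\sum_jP(i,j)h_i(j)=c(i)-1$, which averages the predecessor over $P(i,\cdot)$, whereas a pointwise drift condition sees one predecessor at a time. Your first remedy, putting $h_I^{\star,+}(X)$ into $G$, is already built into this $G$. Your fallback is left conditional on an unspecified charging scheme.

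The missing idea is that a non-standard predecessor law costs a worst-case hitting time only if the particle moves again at the activation right after a move (or right after an initial wait with $x\neq i$).

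\begin{itemize}
\item If the particle stays at that activation (probability $1-\eta$), particle and driver coincide, and Lemma~\ref{lemma:reverse_predecessor} restores the predecessor law $P(j,\cdot)$.
\item A second consecutive move has conditional probability only $\eta\gamma$. So the expected number of such non-standard moves is at most $\eta\gamma(1+\gamma H)\leq\gamma$ when $\eta H\leq1$.
\item Each of these can be charged the crude cost $\max_{i,x}h_i(x)\leq2\mathfrak h_\star$ outright, with no help from the potential.
\item For every move, write $h_j^{\star,+}(i)=h_i(j)+a(j)-a(i)+c(i)\ind_{\{j=i\}}$. The potential difference telescopes pathwise along the particle's positions, and only $h_i(j)$ needs the average.
\end{itemize}

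With this observation your Steps~1--3 close as a direct count, and they even recover the constant $4$.

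The paper encodes the same mechanism as a fixed-point argument. After proving $M<\infty$, it combines the first-activation equation $f_i(x)=h_i^{\star,+}(x)+(1-\eta)D(i)+\eta\gamma\E_x^\star[f_J(i)]$ with the exact diagonal relation $D(i)=\frac1{\eta\mu(i)}+\gamma\sum_jP(i,j)f_j(i)$. It bounds $f_j(i)\leq h_j^{\star,+}(i)+(1-\eta)D(j)+\eta\gamma M$, so $M$ enters only through a second consecutive move. With $\lambda=\gamma(1-\eta)$, it controls $(\bI-\lambda\bP)^{-1}(\bP-\bI)\bm a$ by $a_{\max}/\lambda$ through a resolvent identity. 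The argument closes because the resulting coefficient of $M$ is exactly $1-\gamma=1/H$.
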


\begin{proof}
Since $\gamma\in(0,1)$, we have $H>1$. In particular, the assumption
$\eta H\leq1$ implies $0<\eta<1$, so the divisions by $1-\eta$ below
are legitimate.

\paragraph{First establish finiteness.}
A particle still alive at $i$ waits for a positive-time visit of the driver to
$i$ before its first activation. This waiting time has mean at most
$r_\star<\infty$. In a block of length $\lceil2r_\star\rceil$, an
activation therefore occurs with probability at least $1/2$, by Markov's
inequality. At that first activation the fresh random choice kills the
particle with probability $\eta/H$. Hence, from every pair with the particle alive, its
probability of dying during such a block is at least $\eta/(2H)$.
The Markov property gives a geometric upper bound on the number of
blocks survived. Summing this bound proves $M<\infty$.

\paragraph{Write equations at the first activation.}
Set
\[
    f_i(x)\coloneq\E_{i,x}\zeta,
    \qquad D(i)\coloneq f_i(i),
    \qquad D_{\max}\coloneq\max_iD(i).
\]
Let $J$ be the driver state just before its first positive-time visit to
$i$. At this activation the new driver state is $i$. The particle either
remains at $i$, dies, or moves to $J$. Its remaining expected lifetime in
these three cases is $D(i)$, zero, or $f_J(i)$, respectively. The strong
Markov property and the independent random choice therefore give, for
every $i,x$, including $x=i$,
\begin{equation}\label{eq:particle_first_activation}
    f_i(x)=h_i^{\star,+}(x)+(1-\eta)D(i)
                 +\eta\gamma\E_x^\star[f_J(i)].
\end{equation}
The positive-time convention includes the waiting time for a return
when $x=i$. Taking maxima gives
\begin{equation}\label{eq:particle_first_maximum}
    M\leq r_\star+(1-\eta)D_{\max}+\eta\gamma M.
\end{equation}
On the diagonal, Lemma~\ref{lemma:reverse_predecessor} identifies both
the mean waiting time and the law of $J$. Subtracting
$(1-\eta)D(i)$ from \eqref{eq:particle_first_activation} and dividing
by $\eta$ yields the exact relation
\begin{equation}\label{eq:particle_diagonal_equation}
    D(i)=\frac1{\eta\mu(i)}+\gamma\sum_jP(i,j)f_j(i).
\end{equation}

\paragraph{Use the compensation identity before taking a maximum.}
Applying \eqref{eq:particle_first_activation} to $f_j(i)$ bounds it by
$h_j^{\star,+}(i)+(1-\eta)D(j)+\eta\gamma M$. Thus, with
$\bm D=(D(i))_{i\in\gS}$ and $\lambda\coloneq\gamma(1-\eta)\in(0,1)$,
\begin{equation}\label{eq:particle_diagonal_inequality}
    \bm D\leq\frac{\bm c}{\eta}+\gamma\bm\ell
                    +\lambda\bP\bm D+\eta\gamma^2M\bone.
\end{equation}
The inverse $(\bI-\lambda\bP)^{-1}=\sum_{n\geq0}\lambda^n\bP^n$
is entrywise nonnegative, and its row sums are $1/(1-\lambda)$.
Insert \eqref{eq:positive_hitting_compensation} before applying this
inverse. We obtain
\begin{equation}\label{eq:particle_resolvent_bound}
\begin{split}
    \bm D\leq{}&(\bI-\lambda\bP)^{-1}
       \left[\left(\frac1\eta+2\gamma\right)\bm c
                    +\gamma(\bP-\bI)\bm a\right]\\
       &+\frac{\eta\gamma^2M}{1-\lambda}\bone.
\end{split}
\end{equation}
The potential term can be estimated without a factor $1/(1-\lambda)$:
\begin{equation}\label{eq:hitting_resolvent_cancellation}
    (\bI-\lambda\bP)^{-1}(\bP-\bI)
    =\frac1\lambda
       \left[(1-\lambda)(\bI-\lambda\bP)^{-1}-\bI\right].
\end{equation}
Since $\bm0\leq\bm a\leq a_{\max}\bone$, the right side applied
to $\bm a$ is at most $a_{\max}\bone/\lambda$ entrywise.
Using $\gamma/\lambda=1/(1-\eta)$ in
\eqref{eq:particle_resolvent_bound} gives
\begin{equation}\label{eq:particle_diagonal_maximum}
    D_{\max}\leq
       \frac{1/\eta+2\gamma}{(1-\lambda)\mumin}
       +\frac{a_{\max}}{1-\eta}
       +\frac{\eta\gamma^2}{1-\lambda}M.
\end{equation}
\paragraph{Close the lifetime inequality.}
Substituting \eqref{eq:particle_diagonal_maximum} into
\eqref{eq:particle_first_maximum}, multiplying by $1-\lambda$, and using
$\lambda=\gamma(1-\eta)$ gives the following coefficient of $M$:
\begin{equation}\label{eq:particle_coefficient_cancellation}
    (1-\eta\gamma)(1-\lambda)
       -\eta\gamma^2(1-\eta)=1-\gamma=\frac1H.
\end{equation}
Consequently,
\begin{equation}\label{eq:particle_lifetime_precise}
    M\leq H(1-\lambda)(r_\star+a_{\max})
          +\frac{H(1-\eta)(1/\eta+2\gamma)}{\mumin}.
\end{equation}
The step-size assumption implies
$H(1-\lambda)=1+\eta\gamma H\leq2$.
Also $(1-\eta)(1/\eta+2\gamma)\leq2/\eta$ for
$0<\eta<1$ and $0<\gamma<1$. For example, after multiplying by
$\eta$, its left side is bounded by
$(1-\eta)(1+2\eta)=1+\eta-2\eta^2\leq2$.
Using \eqref{eq:hitting_basic_bounds}, we conclude that
\[
    M\leq4\mathfrak h_\star+\frac2{\mumin}
                     +\frac{2H}{\eta\mumin}
      \leq4\left(\mathfrak h_\star+\frac{H}{\eta\mumin}\right).
\]
The last step uses $H/\eta\geq1$.
\end{proof}

In \eqref{eq:particle_lifetime_precise}, the hitting-cost coefficient is
$H(1-\lambda)=1+\eta\gamma H$, which is bounded under the same
step-size condition used for concentration.

\subsection{From lifetime to uniform forgetting}

\begin{proposition}[Forgetting the initialization]\label{proposition:initialization}
Suppose that $0<\eta H\leq1$, $0<\varepsilon\leq1$, and $0<\delta<1$.
Define
\begin{equation}\label{eq:initialization_blocks}
\begin{split}
    b_\star&\coloneq\left\lceil8\left(
            \frac{H}{\eta\mumin}+\mathfrak h_\star\right)\right\rceil,\\
    k&\coloneq\left\lceil
          \log_2\frac{4dH}{\delta\varepsilon\mumin}\right\rceil.
\end{split}
\end{equation}
Under any initial state distribution, with probability at least
$1-\delta/2$,
\begin{equation}\label{eq:initialization_bound}
    \norm{\widehat\bPhi_{T,0}}_{\infty\to\infty}
       \leq\frac{\varepsilon}{2H}
       \qquad\text{for every integer }T\geq b_\star k.
\end{equation}
On the same event, every initialization satisfying
$\norm{\bDelta_0}_\infty\leq H$ obeys
$\norm{\widehat\bPhi_{T,0}\bDelta_0}_\infty\leq\varepsilon/2$
for all such $T$.
\end{proposition}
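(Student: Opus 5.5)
We use the particle representation and Lemma~\ref{lemma:particle_lifetime}. Since $\widehat\bPhi_{T,0}$ is entrywise nonnegative, $\norm{\widehat\bPhi_{T,0}}_{\infty\to\infty}=\max_j\be_j^\top\widehat\bPhi_{T,0}\bone$. Reading the trajectory backward from $T$ with $\bp_0=\be_j$ and $X_0=S_T$, each row sum equals $\norm{\bp_T}_1=\PB(\zeta>T\mid X_0,\ldots,X_T)$ by \eqref{eq:particle_survival_representation}. We first work under a stationary start, which makes the reverse driver a stationary $\bP^\star$ chain, and transfer to $\nu$ at the end via \eqref{eq:initial_distribution_change}. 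That transfer costs a factor $1/\mumin$, which is already built into $k$.

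\textbf{Step 1: geometric survival tail.} By Lemma~\ref{lemma:particle_lifetime} and Markov's inequality, from any alive pair $(i,x)$ the particle survives $b_\star\geq 8(H/(\eta\mumin)+\mathfrak h_\star)\geq 2M$ further steps with probability at most $1/2$. The pair (particle position, driver) is Markov, so the strong Markov property at multiples of $b_\star$ gives
\[
\PB_{j,x}(\zeta>mb_\star)\leq 2^{-m}
\]
uniformly in the start.

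\textbf{Step 2: from the annealed to the quenched bound.} Taking expectation over the driver, $\E[\norm{\bp_{kb_\star}}_1]=\PB(\zeta>kb_\star)\leq 2^{-k}$. Row sums of $\widehat\bPhi_{T,0}$ are nonincreasing in $T$: each $\widehat\bB_t$ has row sums at most one, so $\widehat\bPhi_{T,0}\bone=\widehat\bPhi_{T,T'}(\widehat\bPhi_{T',0}\bone)$. Monotonicity should be used at the level of the row-sum vector $\widehat\bPhi_{T,0}\bone$, in forward time. Indeed $\norm{\widehat\bPhi_{T,0}\bone}_\infty\leq\norm{\widehat\bPhi_{T',0}\bone}_\infty$ for $T\geq T'$. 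Hence it suffices to control the single time $T_0=b_\star k$, and the bound then holds simultaneously for all $T\geq T_0$ on one event. For fixed $j$, the reverse representation at terminal time $T_0$ and Markov's inequality give
\[
\PB_\mu\bigl(\be_j^\top\widehat\bPhi_{T_0,0}\bone>\varepsilon/(2H)\bigr)\leq (2H/\varepsilon)2^{-k}.
\]
A union over $d$ coordinates gives at most $2dH2^{-k}/\varepsilon$. The change of initial distribution multiplies this by $1/\mumin$, and the choice of $k$ makes the total at most $\delta/2$.

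\textbf{Step 3: the final claim.} On this event, $\norm{\widehat\bPhi_{T,0}\bDelta_0}_\infty\leq\norm{\widehat\bPhi_{T,0}}_{\infty\to\infty}H\leq\varepsilon/2$.

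\textbf{Main obstacle.} The delicate point is the uniformity in Step 1. The restart at time $mb_\star$ occurs at a random alive position and an arbitrary driver state, and Lemma~\ref{lemma:particle_lifetime} must cover every such pair. It does, because it is a maximum over $(i,x)$, including $x=i$. One must also check that the particle's auxiliary randomness is independent of the driver, so that the annealed survival probability equals the expected row sum.
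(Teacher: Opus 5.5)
Your proposal is correct and follows the paper's proof essentially step for step. Both arguments combine Markov's inequality on the uniform lifetime bound of Lemma~\ref{lemma:particle_lifetime} with conditional restarts at deterministic block boundaries to get the $2^{-m}$ tail. Both then use the stationary reverse representation with a per-row Markov bound and a union bound at the single time $T_0=b_\star k$, transfer to $\nu$ at a cost of $1/\mumin$, and obtain the bound for all later $T$ because the subsequent propagation factors are sup-norm contractions.
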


\begin{proof}
Write $K_\star\coloneq4(H/(\eta\mumin)+\mathfrak h_\star)$, so that
$b_\star=\lceil2K_\star\rceil$. Lemma~\ref{lemma:particle_lifetime} and
Markov's inequality imply
\[
    \sup_{i,x}\PB_{i,x}(\zeta>b_\star)\leq\frac12.
\]
Because this estimate is uniform over all particle-driver pairs with the particle alive,
we can apply it conditionally at each deterministic block boundary.
If the particle is alive at time $mb_\star$, its current position and driver
state specify such a pair. Induction using the Markov property yields
\begin{equation}\label{eq:particle_lifetime_tail}
    \sup_{i,x}\PB_{i,x}(\zeta>mb_\star)\leq2^{-m},
    \qquad m=0,1,\ldots.
\end{equation}

First start the forward state chain from $\bmu$, and fix a deterministic
$T\geq b_\star k$. Under stationary time reversal the driver begins at
$X_0=S_T\sim\bmu$ and evolves with kernel $\bP^\star$.
For every row $j$, the representation
\eqref{eq:particle_survival_representation}, followed by
\eqref{eq:particle_lifetime_tail}, gives
\[
    \E_\mu\brk{\be_j^\top\widehat\bPhi_{T,0}\bone}
      =\sum_x\mu(x)\PB_{j,x}(\zeta>T)
      \leq2^{-k}.
\]
For a nonnegative matrix the induced sup norm is its largest row sum.
Markov's inequality for each row and a union bound therefore show that
\begin{equation}\label{eq:initialization_stationary_tail}
    \PB_\mu\left(
        \norm{\widehat\bPhi_{T,0}}_{\infty\to\infty}
                   >\frac{\varepsilon}{2H}\right)
      \leq\frac{2dH}{\varepsilon}\,2^{-k}.
\end{equation}

To allow any initial distribution $\nu$, observe that every event $A$
determined by the forward state path satisfies
\[
    \PB_\nu(A)
      =\E_\mu\left[\frac{\nu(S_0)}{\mu(S_0)}\ind_A\right]
      \leq\frac{\PB_\mu(A)}{\mumin}.
\]
The propagation matrices depend only on this state path. Applying the
last inequality to the event in
\eqref{eq:initialization_stationary_tail} makes its probability at most
$2dH\,2^{-k}/(\varepsilon\mumin)\leq\delta/2$.

Apply this fixed-time result once, at $T_0=b_\star k$. Every subsequent
propagation factor is a sup-norm contraction. Hence, on the same event,
\[
    \norm{\widehat\bPhi_{T,0}}_{\infty\to\infty}
      \leq\norm{\widehat\bPhi_{T,T_0}}_{\infty\to\infty}
             \norm{\widehat\bPhi_{T_0,0}}_{\infty\to\infty}
      \leq\frac{\varepsilon}{2H},\qquad T\geq T_0.
\]
This proves the simultaneous assertion without a union bound over
later times. Multiplication by $\bDelta_0$ proves the last statement,
including when the bounded initial estimate is random.
\end{proof}

The initialization cost is therefore
\begin{equation}\label{eq:initialization_cost}
    b_\star k\leq C\left(\frac{H}{\eta\mumin}+\mathfrak h_\star\right)
                    \log\frac{4dH}{\delta\varepsilon\mumin},
\end{equation}
for a universal constant $C$. The ceilings are absorbed because
$H/(\eta\mumin)\geq1$ and the displayed logarithm is at least
$\log4$.

\begin{lemma}[From reverse hitting to forward mixing]
\label{lemma:reverse_hitting_mixing}
Let
\begin{equation}\label{eq:initialization_lag}
    \tau\coloneq\tmix\left\lceil\log_2\frac{2}{\mumin}\right\rceil.
\end{equation}
Then
\begin{equation}\label{eq:reverse_hitting_mixing}
    \mathfrak h_\star\leq\frac{2\tau}{\mumin}.
\end{equation}
\end{lemma}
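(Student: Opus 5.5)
The plan is to convert forward total-variation mixing into a uniform lower bound on the $\tau$-step reverse transition probabilities into any fixed target state. A geometric trials argument then bounds the expected reverse hitting time.

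\textbf{Step 1 (mixing at lag $\tau$).} I will use the standard submultiplicativity fact for the mixing profile in \eqref{eq:mixing_profile}: $d_{\mathrm{TV}}(\ell\,\tmix)\leq 2^{-\ell}$ for every integer $\ell\geq1$. This follows from $\bar d(s+t)\leq\bar d(s)\bar d(t)$ and $d\leq\bar d\leq 2d$ for the pairwise distance $\bar d$ (Levin--Peres). Taking $\ell=\lceil\log_2(2/\mumin)\rceil$ gives
\[
d_{\mathrm{TV}}(\tau)\leq\mumin/2 .
\]
Since a total-variation bound controls each coordinate, $P^\tau(i,x)\geq\mu(x)-\mumin/2\geq\mu(x)/2$ for all $i,x$.

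\textbf{Step 2 (transfer to the reverse chain).} Iterating \eqref{eq:reverse_transition} gives $(P^\star)^\tau(x,i)=\mu(i)P^\tau(i,x)/\mu(x)$. Applying Step~1 to the row $P^\tau(i,\cdot)$, i.e.\ to forward mixing from the target $i$, yields
\[
(P^\star)^\tau(x,i)\geq \mu(i)\,\frac{\mu(x)/2}{\mu(x)}=\frac{\mu(i)}{2}\geq\frac{\mumin}{2}
\]
for every $x,i$. This is where we avoid needing any reverse mixing time: only the forward row from $i$ enters.

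\textbf{Step 3 (geometric trials).} Fix $i$ and a reverse start $x$, and inspect the reverse chain at times $\tau,2\tau,\ldots$. By the Markov property and Step~2, conditionally on $\{\tau_i>m\tau\}$ and on the state $X_{m\tau}$, the chain is at $i$ at time $(m+1)\tau$ with probability at least $\mumin/2$. Induction gives $\PB_x^\star(\tau_i>m\tau)\leq(1-\mumin/2)^m$. Since $n\mapsto\PB_x^\star(\tau_i>n)$ is nonincreasing,
\[
\E_x^\star\tau_i=\sum_{n\geq0}\PB_x^\star(\tau_i>n)\leq\tau\sum_{m\geq0}\PB_x^\star(\tau_i>m\tau)\leq\frac{2\tau}{\mumin}.
\]
Maximizing over $x,i$ proves \eqref{eq:reverse_hitting_mixing}.

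The only step that is not an elementary identity is the halving bound $d_{\mathrm{TV}}(\ell\,\tmix)\leq2^{-\ell}$. If I need a self-contained argument, I would prove it by the usual coupling/pairwise-distance submultiplicativity. That argument holds for non-reversible chains on a finite space, and the convention that $\tmix\geq1$ causes no difficulty.
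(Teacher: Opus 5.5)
Your proposal is correct and follows essentially the same route as the paper: mixing amplification gives $P^\tau(i,x)\geq\mu(x)/2$, time reversal gives $(P^\star)^\tau(x,i)\geq\mu(i)/2$, and a geometric-trials bound at multiples of $\tau$ finishes the argument. The only cosmetic difference is in the last step, where the paper dominates $\tau_i\leq\tau N_i$ pathwise with $N_i$ the first multiple-of-$\tau$ observation time at $i$, whereas you sum the tail $\PB_x^\star(\tau_i>n)$ directly using monotonicity, and both yield $2\tau/\mumin$.
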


\begin{proof}
Lemma~\ref{lemma:mixing_amplification} in
Appendix~\ref{Appendix:hitting_details} gives
$P^\tau(i,x)\geq\mu(x)/2$ for all $i,x$.
Time reversal at $\tau$ steps consequently gives
\[
    (P^\star)^\tau(x,i)
       =\frac{\mu(i)P^\tau(i,x)}{\mu(x)}
       \geq\frac{\mu(i)}2.
\]
For a reverse chain started at any $x$, let
$N_i\coloneq\inf\{m\geq1:X_{m\tau}=i\}$. At each of these
observation times, conditional on the preceding observations, the chance
of seeing $i$ is at least $\mu(i)/2$. Iterating conditional probabilities
gives
\[
    \PB_x^\star(N_i>m)\leq(1-\mu(i)/2)^m,
    \qquad m=0,1,\ldots.
\]
Thus $\E_x^\star N_i\leq2/\mu(i)$. Since the chain can hit $i$
before the first such observation, $\tau_i\leq\tau N_i$ pathwise,
including when $x=i$. Taking expectations and then maximizing over
$x,i$ proves \eqref{eq:reverse_hitting_mixing}.
\end{proof}

This argument uses the forward mixing time directly. It does not require
an equality or comparison between the forward and reverse total-variation
mixing times. Combining \eqref{eq:initialization_cost} with
\eqref{eq:reverse_hitting_mixing} gives an additive cost of order
$\wtilde O(H/(\eta\mumin)+\tmix/\mumin)$. The more precise
hitting-time bound can be smaller on particular chains. Finally,
\eqref{eq:initialization_bound} is simultaneous for all sufficiently
large times, but the stochastic-convolution estimate is a fixed-terminal-time
statement. The constant-step guarantees in Theorem~\ref{theorem:main}
and Corollary~\ref{corollary:hitting_refinement} retain this fixed-time scope.

\section{Proof of the Constant-Step Theorem}
\label{Section:main_proof}
\begin{proof}[Proof of Theorem~\ref{theorem:main} and Corollary~\ref{corollary:hitting_refinement}]
Since $L>1$, $0<\varepsilon\leq1$, and $H>1$, the chosen step size satisfies
\[
0<\eta H=\frac{\varepsilon^2}{256HL^2}\leq1.
\]
It therefore meets the conditions of both the stochastic and initialization bounds.
We first use the finer threshold $b_\star k$ in \eqref{eq:main_hitting_budget}.

Fix a deterministic integer $T\geq b_\star k$.
Apply Proposition~\ref{prop:stochastic_convolution} with $q=\delta/2$.
Then $\ell_q=L$, so, with probability at least $1-\delta/2$,
\begin{align*}
\norm{\bY_T}_\infty
&\leq6H\sqrt\eta L+2\eta H^2\\
&=\frac38\varepsilon+\frac{\varepsilon^2}{128L^2}
\leq\frac{49}{128}\varepsilon
<\frac\varepsilon2.
\end{align*}

The parameters $b_\star,k$ agree with the block length and number of blocks in Proposition~\ref{proposition:initialization}.
Since $\norm{\bDelta_0}_\infty\leq H$, that proposition gives, with probability at least $1-\delta/2$,
\[
\norm{\widehat\bPhi_{T,0}\bDelta_0}_\infty\leq\varepsilon/2.
\]
A union bound and \eqref{eq:error_decomposition} give \eqref{eq:main_guarantee} at this threshold.

Lemma~\ref{lemma:reverse_hitting_mixing} gives
$\mathfrak h_\star\leq2\tau/\mumin$, whence
\[
b_\star\leq
\left\lceil\frac8{\mumin}\prn{\frac H\eta+2\tau}\right\rceil=b.
\]
Therefore $T\geq T_\star=bk$ is also sufficient.
All parameters in both thresholds are fixed before observing the trajectory.

Finally, absorbing the ceiling operations into a universal constant gives
\begin{align}
b_\star k
&\leq C\prn{\frac{H^3L^2}{\mumin\varepsilon^2}
                   +\mathfrak h_\star}
          \log\frac{4dH}{\delta\varepsilon\mumin},
\label{eq:hitting_explicit_sample_bound}\\
T_\star
&\leq\frac C{\mumin}
\prn{\frac{H^3L^2}{\varepsilon^2}+\tau}
\log\frac{4dH}{\delta\varepsilon\mumin}.
\label{eq:explicit_sample_bound}
\end{align}
Using $\tau\leq C\tmix(1+\log(1/\mumin))$ yields the two stated sample-complexity bounds after suppressing logarithmic factors.
\end{proof}

\section{Lower Bounds and the Scope of Optimality}
\label{Section:lower_bounds}
The upper bound contains a statistical term and a term for reaching states that the trajectory visits infrequently.
We first explain the second obstruction with a two-state example.
We then construct a single three-state family in which both terms are necessary, even when the reward function is known.
Throughout this section, an estimator may be randomized and may use the entire observation
$(S_0,R_0,S_1,\ldots,R_{T-1},S_T)$.
The lower bounds therefore apply to every estimation procedure based on the trajectory.

\subsection{A two-state coverage obstruction}\label{subsection:coverage_lower_bound}

Fix $0<p\leq1/2$ and $0<\alpha\leq1/2$.
On the state space $\{0,1\}$, consider
\begin{equation}\label{eq:lower_two_transition}
\bP_{\alpha,p}\coloneq
\begin{pmatrix}
1-\alpha p&\alpha p\\
\alpha(1-p)&1-\alpha(1-p)
\end{pmatrix}.
\end{equation}
The stationary distribution is $\bmu=(1-p,p)^\top$, so $\mumin=p$.
Writing $\bm\Pi\coloneq\bone\bmu^\top$, direct multiplication gives
\[
\bP_{\alpha,p}^t=\bm\Pi+(1-\alpha)^t(\bI-\bm\Pi),
\qquad
d_{\mathrm{TV}}(t)=(1-p)(1-\alpha)^t.
\]
Consequently,
\begin{equation}\label{eq:lower_two_mixing}
\tmix=\left\lceil
\frac{\log(4(1-p))}{-\log(1-\alpha)}
\right\rceil\asymp\alpha^{-1},
\qquad \tmix\leq\frac2\alpha.
\end{equation}
For the last inequality, use $-\log(1-\alpha)\geq\alpha$ and
$\log4+\alpha<2$.

\begin{proposition}[Information-theoretic coverage obstruction]
\label{proposition:coverage_lower_bound}
Consider two models with the same transition matrix \eqref{eq:lower_two_transition},
deterministic rewards $r_0(s)=0$ and $r_1(s)=\ind\{s=1\}$, and initial state $S_0=0$.
Let $\bV_\theta$ be the value function in model $\theta\in\{0,1\}$.
If
\[
2\varepsilon<pH+\frac{1-p}{1-\gamma+\gamma\alpha},
\]
then every estimator $\widehat{\bV}_T$ satisfies
\begin{equation}\label{eq:lower_two_testing}
\max_{\theta\in\{0,1\}}
\PB_\theta\prn{\norm{\widehat{\bV}_T-\bV_\theta}_\infty>\varepsilon}
\geq\frac12(1-\alpha p)^T.
\end{equation}
The estimator may know the transition matrix, the stationary distribution, and both candidate reward functions.
In particular, if $H\geq8$, $0<\varepsilon\leq1$, $\alpha\leq1/(2H)$,
and the estimator has failure probability at most $\delta<1/2$ in both models, then
\begin{equation}\label{eq:lower_two_sample_size}
T\geq\frac1{2\alpha p}\log\frac1{2\delta}
\geq\frac{\tmix}{4\mumin}\log\frac1{2\delta}.
\end{equation}
\end{proposition}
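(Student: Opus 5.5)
The plan is a two-point (Le Cam) argument that couples the two models on the event that state $1$ is never visited. Since both models share the transition matrix and $S_0=0$, the state path has the same law under $\theta=0$ and $\theta=1$. Both rewards vanish at state $0$, and the reward at time $t$ is a deterministic function of $S_t$. So on the event $E_T\coloneq\{S_1=\cdots=S_T=0\}$ (with $S_0=0$ automatic), the full observation $(S_0,R_0,\ldots,R_{T-1},S_T)$ is the same deterministic sequence in both models. The sub-probability measures of the observation restricted to $E_T$ therefore coincide, and $\PB_\theta(E_T)=(1-\alpha p)^T$ in both models.

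Next I compute the separation. Model $0$ has $\bV_0=\bm 0$. For model $1$, the coordinate $V_1(0)$ solves $V(0)=\gamma(1-\alpha p)V(0)+\gamma\alpha pV(1)$ and $V(1)=1+\gamma\alpha(1-p)V(0)+\gamma(1-\alpha(1-p))V(1)$. Using stationarity, the weighted average gives $(1-p)V(0)+pV(1)=pH$. The difference $V(1)-V(0)$ satisfies $(1-\gamma(1-\alpha))(V(1)-V(0))=1$, so $V(1)-V(0)=1/(1-\gamma+\gamma\alpha)$. Then
\[
V(1)=pH+\frac{1-p}{1-\gamma+\gamma\alpha}.
\]
Hence $\norm{\bV_1-\bV_0}_\infty\geq V_1(1)>2\varepsilon$, and no vector $\widehat\bV$ can lie within $\varepsilon$ of both values.

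For the testing bound, let $A_\theta$ be the event that $\norm{\widehat\bV_T-\bV_\theta}_\infty\leq\varepsilon$, with the estimator's internal randomness independent. Then $A_0\cap A_1=\emptyset$. On $E_T$ the joint law of the observation and the internal randomness is the same in both models. This gives
\[
\PB_0(A_0^c\cap E_T)+\PB_1(A_1^c\cap E_T)\geq\PB(E_T)=(1-\alpha p)^T,
\]
so the maximum failure probability is at least half of that, which is \eqref{eq:lower_two_testing}.

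For \eqref{eq:lower_two_sample_size}, first check the separation condition under the stated assumptions. With $\alpha\leq1/(2H)$ we get $1-\gamma+\gamma\alpha\leq 3/(2H)$, so the right side is at least $pH+(1-p)\cdot 2H/3\geq 2H/3\cdot\min(1,\cdot)\geq H/2\geq4>2\geq2\varepsilon$, using $H\geq8$. Then $\delta\geq\frac12(1-\alpha p)^T$. Since $-\log(1-x)\leq 2x$ for $x\leq1/2$, we have $(1-\alpha p)^T\geq e^{-2\alpha pT}$, which gives $T\geq\log(1/(2\delta))/(2\alpha p)$. Finally, \eqref{eq:lower_two_mixing} gives $\tmix\leq2/\alpha$ and $\mumin=p$, so $1/(2\alpha p)\geq\tmix/(4\mumin)$.

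The only delicate point is the handling of randomized estimators and the exact coincidence of the restricted laws; everything else is direct computation.
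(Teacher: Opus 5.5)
Your proposal is correct and follows the paper's proof essentially step for step: the same coupling on $E_T=\{S_0=\cdots=S_T=0\}$, the same disjoint-ball testing argument, and the same closing inequalities. The only departure is that you solve the two Bellman equations directly instead of splitting $\br$ into a constant part and a stationary-mean-zero part. The stray $\min(1,\cdot)$ in your separation check should simply read $pH+(1-p)\cdot 2H/3\geq 2H/3>2\geq2\varepsilon$. This holds for every $p$; the paper instead uses $1-p\geq1/2$ and settles for $V_1(1)\geq H/3$.
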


\begin{proof}
The transition matrix acts as the identity on constant vectors and as multiplication by $1-\alpha$ on vectors with stationary mean zero.
Decomposing the reward vector into these two components yields
\[
\bV_0=\bm0,
\qquad
V_1(1)=pH+\frac{1-p}{1-\gamma+\gamma\alpha}.
\]
Thus the two closed sup-norm balls of radius $\varepsilon$ about the value functions are disjoint.

Consider the common event
\[
E_T\coloneq\{S_0=\cdots=S_T=0\}.
\]
Its probability is $(1-\alpha p)^T$ in both models.
On this event every observed reward is zero, and the full data have the same distribution under the two models.
An estimator receiving these data cannot be accurate for both separated value functions.
More precisely, its two conditional failure probabilities, including any independent randomization, sum to at least one.
Multiplying by the common probability of $E_T$ proves \eqref{eq:lower_two_testing}.

Under the additional parameter restrictions,
$1-\gamma+\gamma\alpha\leq3/(2H)$ and $1-p\geq1/2$.
Hence $V_1(1)\geq H/3>2\varepsilon$.
If both failure probabilities are at most $\delta$, then
\[
T\geq\frac{\log(1/(2\delta))}{-\log(1-\alpha p)}
\geq\frac1{2\alpha p}\log\frac1{2\delta},
\]
where $-\log(1-x)\leq2x$ for $0\leq x\leq1/2$.
Equation~\eqref{eq:lower_two_mixing} completes the proof.
\end{proof}

The event in this proof describes a lack of information about the rare state.
It also gives a direct lower bound for TD: if $r\equiv0$ and $\bV_0=H\bone$, then coordinate $1$ remains exactly $H$ on $E_T$, whatever step-size schedule is used.
For a stationary initial state, the right-hand side of \eqref{eq:lower_two_testing} is multiplied by $1-p$.
Thus the coverage obstruction persists even when the trajectory starts in equilibrium.

The cover-time example in \citet[Appendix A.2]{li2022async} already establishes that a cost of order $\tmix/\mumin$ can be necessary for state coverage, up to logarithms.
The argument above turns this mechanism into an estimation lower bound and makes its confidence dependence explicit.

\subsection{A joint lower bound with known rewards}\label{subsection:joint_lower_bound}

The two-state example leaves the reward at state $1$ unknown until that state is visited.
We now use a fixed, known reward function and vary only transitions.
The construction contains two rare states: entering their set controls the coverage cost, while transitions within the set control the statistical difficulty.

Fix
\begin{equation}\label{eq:lower_joint_parameters}
H\geq64,\qquad 0<p\leq\frac14,\qquad
0<\alpha\leq\frac1{8H},\qquad
0<\varepsilon\leq1,\qquad 0<\delta\leq\frac14.
\end{equation}
On $\{0,1,2\}$, set
\[
\bmu=(1-2p,p,p)^\top,
\qquad \bm\Pi=\bone\bmu^\top,
\qquad \br=(0,1,0)^\top.
\]
For $b\in[0,1/2]$, define
\begin{equation}\label{eq:lower_joint_family}
\bQ_b\coloneq
\begin{pmatrix}
1&0&0\\
0&1-b&b\\
0&b&1-b
\end{pmatrix},
\qquad
\bP_b\coloneq(1-\alpha)\bQ_b+\alpha\bm\Pi.
\end{equation}
All entries contributed by $\alpha\bm\Pi$ are positive.
Every chain is therefore irreducible and aperiodic, and all have stationary distribution $\bmu$ with $\mumin=p$.
The rewards are $R_t=r(S_t)$ with the same known function $r$ in every model.
The initial state is $S_0=0$.

We first relate $\alpha$ to the mixing time uniformly throughout the family.
The identities $\bQ_b\bm\Pi=\bm\Pi\bQ_b=\bm\Pi$ imply
\[
\bP_b^t=(1-\alpha)^t\bQ_b^t+
\bigl(1-(1-\alpha)^t\bigr)\bm\Pi.
\]
The set $A\coloneq\{1,2\}$ is closed under $\bQ_b$.
Starting from either state in $A$, the probability of $A$ exceeds its stationary probability by $(1-2p)(1-\alpha)^t$.
It follows that
\begin{equation}\label{eq:lower_joint_mixing}
\begin{split}
(1-2p)(1-\alpha)^t
&\leq d_{\mathrm{TV},b}(t)\leq(1-\alpha)^t,\\
\frac{\log2}{2\alpha}
&\leq\tmix(\bP_b)\leq\frac2\alpha.
\end{split}
\end{equation}
Indeed, the lower mixing bound uses $1-2p\geq1/2$ and $-\log(1-\alpha)\leq2\alpha$;
the upper bound follows as in \eqref{eq:lower_two_mixing}.
In particular, the family fixes $H$ and $\mumin$ exactly and fixes the order of $\tmix$ up to universal constants.

\begin{theorem}[Joint lower bound in the slow-mixing regime]
\label{theorem:joint_lower_bound}
Under \eqref{eq:lower_joint_parameters}, suppose an estimator based on $T$ transitions satisfies
\[
\sup_{b\in[0,1/2]}
\PB_b\prn{\norm{\widehat{\bV}_T-\bV_b}_\infty>\varepsilon}
\leq\delta,
\qquad
\bV_b\coloneq(\bI-\gamma\bP_b)^{-1}\br.
\]
Then
\begin{equation}\label{eq:lower_joint_sample_size}
T\geq\frac1{18432}
\prn{\frac{H^3}{p\varepsilon^2}+\frac1{p\alpha}}
\log\frac1{2\delta}.
\end{equation}
The estimator may know $H,p,\alpha$, the family \eqref{eq:lower_joint_family}, and the reward function.
\end{theorem}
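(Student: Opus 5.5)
We prove the two terms separately with two-point testing arguments inside the family \eqref{eq:lower_joint_family}. Since $\max(u,v)\geq(u+v)/2$, it suffices to show $T\geq c\,\frac{1}{p\alpha}\log\frac1{2\delta}$ and $T\geq c\,\frac{H^3}{p\varepsilon^2}\log\frac1{2\delta}$ separately, with $c=1/9216$.

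\textbf{Coverage term.} Take $b=0$ and $b'=1/2$. Transitions with $\bQ_b$ act only inside $A=\{1,2\}$, and from state $0$ both kernels agree. On the event that $S_0=\cdots=S_T=0$, the likelihoods coincide. This event has probability $(1-\alpha(1-\bPi_{00}))^T=(1-2\alpha p)^T$ under both models. Next we check that the value functions are separated by more than $2\varepsilon$ at state $1$. We write $\bV_b=(\bI-\gamma(1-\alpha)\bQ_b)^{-1}(\br+\gamma\alpha\bPi\bV_b)$. The mean $\bmu^\top\bV_b=pH$ is the same for both models. The deviation component at state $1$ comes from resolving $\br-p\bone$ through $\bQ_b$:
\begin{itemize}
\item for $b=0$, state $1$ is frozen, giving an excess of order $(1-p)/(1-\gamma+\gamma\alpha)\gtrsim H$ since $\alpha\le1/(8H)$;
\item for $b=1/2$, states $1,2$ are mixed, and the antisymmetric part $(\be_1-\be_2)/2$ decays at rate $1-2b=0$, so the excess is only about half.
\end{itemize}
The gap is therefore of order $H/4\geq16>2\varepsilon$. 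The same argument as in Proposition~\ref{proposition:coverage_lower_bound} then gives $\frac12(1-2\alpha p)^T\le\delta$, hence $T\geq\frac{1}{4\alpha p}\log\frac1{2\delta}$.

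\textbf{Statistical term.} Take $b_0$ and $b_1=b_0+\Delta$, with $b_0$ of order $1/H$ (for instance $b_0=1/(8H)$) so that the internal switching within $A$ happens on the horizon scale. Compute $V_b(1)-V_b(2)$ via the antisymmetric vector $\bm w=\be_1-\be_2$. Since $\bPi\bm w=0$ and $\bQ_b\bm w=(1-2b)\bm w$, the reward part $\frac12\bm w$ yields $V_b(1)-V_b(2)=\frac{1}{1-\gamma(1-\alpha)(1-2b)}$, which is of order $H$ when $b\asymp1/H$. Its derivative in $b$ is of order $H^2$. Choosing $\Delta\asymp\varepsilon/H^2$ therefore separates $V(1)-V(2)$ by more than $4\varepsilon$, and hence separates the sup-norm values by more than $2\varepsilon$. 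Both models keep stationary law $\bmu$ and the constraint $b\le1/2$.

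For the KL divergence of the trajectory laws, the chain rule gives
\[
\mathrm{KL}=\sum_{t<T}\E_{b_0}\bigl[\mathrm{KL}(P_{b_0}(S_t,\cdot)\,\|\,P_{b_1}(S_t,\cdot))\bigr].
\]
Only states in $A$ contribute, and each contributes at most $C\Delta^2/b_0\asymp\varepsilon^2/H^3$ (Bernoulli KL with parameter near $b_0$, the $\alpha$-mixture only helping). The expected number of visits to $A$ from $S_0=0$ is at most $2pT$, because the chain started at $0$ is stochastically below stationarity for $A$: $P^t(0,A)=(1-(1-\alpha)^t)2p\le 2p$. Hence $\mathrm{KL}\le C'pT\varepsilon^2/H^3$. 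The Bretagnolle--Huber inequality, $\max$ error $\ge\frac14e^{-\mathrm{KL}}$, gives $T\gtrsim\frac{H^3}{p\varepsilon^2}\log\frac1{4\delta}$, and since $\delta\le1/4$ this converts to $\log\frac1{2\delta}$ up to a factor.

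The main obstacle is tuning constants in the second pair. The pair must satisfy three things at once: the value gap must exceed $2\varepsilon$, $b_1\le1/2$ must hold (requiring $\varepsilon/H^2\ll1/H$, which is fine since $H\ge64$), and the per-visit KL must be of order $\Delta^2/b_0$, which needs a careful Bernoulli KL bound with $b$ replaced by $(1-\alpha)b+\alpha p$. I would first verify the exact value-difference formula and its derivative bound on $[b_0,2b_0]$, then fix $\Delta=c\varepsilon/H^2$ and track constants to reach $1/18432$.
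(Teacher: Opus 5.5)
Your route is the paper's route. You use the same two testing pairs ($b\in\{0,1/2\}$ for coverage, and $b_0\asymp 1/H$, $\Delta\asymp\varepsilon/H^2$ for the statistical term), the same contrast eigenvector $\be_1-\be_2$, the same chain-rule bound with at most $2pT$ expected visits to $A$, and the same ``max $\geq$ half the sum'' step.

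\textbf{The gap.} The step that fails as written is the last one in the statistical term. Bretagnolle--Huber gives $\delta\geq\frac14e^{-D_{\mathrm{KL}}}$, that is, $D_{\mathrm{KL}}\geq\log\frac1{4\delta}$. This is not comparable to $\log\frac1{2\delta}$ uniformly on $0<\delta\leq1/4$. As $\delta\uparrow1/4$ the first quantity tends to $0$ while the second tends to $\log2$. At $\delta=1/4$ your argument gives no statistical lower bound at all, and near it no fixed constant. The conversion works only for, say, $\delta\leq1/8$, where $\log\frac1{4\delta}\geq\frac12\log\frac1{2\delta}$. The theorem asserts the factor $\log\frac1{2\delta}$ with constant $1/18432$ for every $\delta\leq1/4$, so the range $(1/8,1/4]$ is not covered.

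\textbf{The fix.} Use the paper's step: data processing (the log-sum inequality) on the binary partition $\{B,B^c\}$. Here $B$ is the event that the estimator, with its seed adjoined, lies in the closed $\varepsilon$-ball about $\bV_{b_0}$. From $\mathsf P_{b_0}^{(T)}(B)\geq1-\delta$ and $\mathsf P_{b_1}^{(T)}(B)\leq\delta$ you get
\[
D_{\mathrm{KL}}\geq(1-2\delta)\log\frac{1-\delta}{\delta}\geq\frac12\log\frac1{2\delta}
\qquad(\delta\leq1/4),
\]
because $1-2\delta\geq1/2$ and $\frac{1-\delta}{\delta}\geq\frac1{2\delta}$. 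Alternatively, Pinsker gives $D_{\mathrm{KL}}\geq2(1-2\delta)^2\geq1/2$ on $(1/8,1/4]$.

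\textbf{Constants with your choice.} Take $b_0=1/(8H)$ and, for example, $\Delta=8\varepsilon/H^2$.
\begin{itemize}
\item The per-row bound is $(2+8H)\Delta^2\leq9H\Delta^2$, using $b_1\geq b_0$ in the denominator.
\item The value gap is $\|\bV_{b_1}-\bV_{b_0}\|_\infty=\frac{u\Delta}{D_{b_0}D_{b_1}}\geq\frac{384}{143}\varepsilon>2\varepsilon$, using $u\geq3/4$, $D_{b_0}\leq\frac{11}{8H}$ and $D_{b_1}\leq\frac{13}{8H}$.
\item Hence $D_{\mathrm{KL}}\leq1152\,pT\varepsilon^2/H^3$, and so $T\geq\frac{H^3}{2304\,p\varepsilon^2}\log\frac1{2\delta}$, well within the required $1/9216$.
\end{itemize}

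\textbf{A smaller point.} Your coverage separation is argued only heuristically (``the excess is only about half''). You can close it with your own contrast formula. Since $D_{1/2}=1$,
\[
\bigl(V_0(1)-V_0(2)\bigr)-\bigl(V_{1/2}(1)-V_{1/2}(2)\bigr)=\frac1{D_0}-1\geq\frac{8H}9-1,
\]
so $\|\bV_0-\bV_{1/2}\|_\infty\geq\frac12(\frac{8H}9-1)>2\varepsilon$.
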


\begin{proof}
We use two pairs of models from the same family.
One pair has well-separated value functions but identical observations until a rare state is visited.
The other pair has nearby transition probabilities and quantifies the statistical difficulty of estimating values from visits to the rare states.

First record the dependence of the value on $b$.
Put
\[
u\coloneq\gamma(1-\alpha),
\qquad D_b\coloneq1-u(1-2b).
\]
The contrast $\be_1-\be_2$ is a right eigenvector of $\bP_b$ with eigenvalue $(1-\alpha)(1-2b)$.
The other component of the reward, $(0,1/2,1/2)^\top$, is constant on $A$.
The subspace of vectors constant on $A$ is invariant under $\bP_b$, and the restriction of $\bP_b$ to this subspace is independent of $b$.
Decomposing the reward into these two components therefore gives
\begin{equation}\label{eq:lower_joint_value_contrast}
\bV_b=\bm W+\frac1{2D_b}(\be_1-\be_2),
\end{equation}
where $\bm W$ is independent of $b$.

\paragraph{The coverage term.}
Compare $b=0$ and $b=1/2$.
Since $1-u\leq H^{-1}+\alpha\leq9/(8H)$, their value difference at state $1$ is
\[
\frac12\prn{\frac1{1-u}-1}
\geq\frac12\prn{\frac{8H}{9}-1}>2\varepsilon.
\]
Starting from $0$, the probability of remaining there for all $T$ transitions is
\[
\PB_b(E_T)=(1-2p\alpha)^T,
\qquad E_T\coloneq\{S_0=\cdots=S_T=0\}.
\]
Both the observations and this probability are independent of $b$ on $E_T$.
The disjoint-ball argument from Proposition~\ref{proposition:coverage_lower_bound} thus gives
\[
\delta\geq\frac12(1-2p\alpha)^T,
\qquad
T\geq\frac1{4p\alpha}\log\frac1{2\delta}.
\]
Here we used $-\log(1-2p\alpha)\leq4p\alpha$.

\paragraph{The statistical term.}
Now compare the parameters
\[
b_0\coloneq\frac1{4H},
\qquad b_1\coloneq b_0+\Delta,
\qquad \Delta\coloneq\frac{16\varepsilon}{H^2}.
\]
The restrictions $H\geq64$ and $\varepsilon\leq1$ imply $b_1\leq1/(2H)$.
Also $u\geq3/4$ and
\[
D_{b_j}=1-u+2ub_j
\leq\frac{9}{8H}+\frac1H
\leq\frac9{4H},\qquad j\in\{0,1\}.
\]
Using \eqref{eq:lower_joint_value_contrast}, we obtain
\begin{equation}\label{eq:lower_joint_statistical_gap}
\norm{\bV_{b_1}-\bV_{b_0}}_\infty
=\frac{u\Delta}{D_{b_0}D_{b_1}}
\geq\frac{64}{27}\varepsilon>2\varepsilon.
\end{equation}
Thus distinguishing these nearby transition matrices is necessary for the requested value accuracy.

Write $D_{\mathrm{KL}}(v\|w)$ for relative entropy between probability distributions,
and $\mathsf P_b^{(T)}$ for the law of the full observation under $\bP_b$.
Only the rows indexed by $A$ differ between $\bP_{b_0}$ and $\bP_{b_1}$.
Within each such row, the probabilities of states $1$ and $2$ change by an absolute amount of $(1-\alpha)\Delta\leq\Delta$.
Under $\bP_{b_1}$, the self-transition probability is at least $1/2$, and the probability of the other rare state is at least $1/(8H)$.
The elementary inequality
$D_{\mathrm{KL}}(v\|w)\leq\sum_j(v_j-w_j)^2/w_j$ gives
\begin{equation}\label{eq:lower_joint_row_entropy}
D_{\mathrm{KL}}\prn{\bP_{b_0}(i,\cdot)\|\bP_{b_1}(i,\cdot)}
\leq(2+8H)\Delta^2\leq9H\Delta^2,
\qquad i\in A.
\end{equation}

The process recording membership in $A$ has the same law for every $b$.
Starting from $0$, the formula for $\bP_b^t$ yields
\[
\PB_b(S_t\in A)=2p\bigl(1-(1-\alpha)^t\bigr)\leq2p.
\]
Therefore the chain rule for relative entropy, summed over transitions at times $0,\ldots,T-1$, gives
\begin{align}
D_{\mathrm{KL}}\prn{\mathsf P_{b_0}^{(T)}\|\mathsf P_{b_1}^{(T)}}
&=\sum_{t=0}^{T-1}\E_{b_0}\brk{
D_{\mathrm{KL}}\prn{\bP_{b_0}(S_t,\cdot)\|\bP_{b_1}(S_t,\cdot)}}\notag\\
&\leq2pT\cdot9H\Delta^2
=4608\frac{pT\varepsilon^2}{H^3}.
\label{eq:lower_joint_path_entropy}
\end{align}
The initial state is the same, and rewards are the same deterministic function of the states, so neither contributes additional relative entropy.

To relate this information bound to estimation error, let $B$ be the event that the estimator lies in the closed radius-$\varepsilon$ ball about $\bV_{b_0}$.
The assumed guarantees and \eqref{eq:lower_joint_statistical_gap} imply
\[
\mathsf P_{b_0}^{(T)}(B)\geq1-\delta,
\qquad \mathsf P_{b_1}^{(T)}(B)\leq\delta.
\]
For a randomized estimator, adjoin its independent random seed to the observation; this leaves the relative entropy unchanged.
Applying the log-sum inequality to $B$ and its complement then gives
\begin{align*}
D_{\mathrm{KL}}\prn{\mathsf P_{b_0}^{(T)}\|\mathsf P_{b_1}^{(T)}}
&\geq(1-2\delta)\log\frac{1-\delta}{\delta}\\
&\geq\frac12\log\frac1{2\delta},
\end{align*}
where $\delta\leq1/4$.
Combining this with \eqref{eq:lower_joint_path_entropy} yields
\[
T\geq\frac{H^3}{9216p\varepsilon^2}\log\frac1{2\delta}.
\]

Both necessary bounds concern the same family with the same $H,p,\alpha$.
Their maximum is at least half their sum.
Weakening the larger coefficient of the coverage term to the coefficient of the statistical term proves \eqref{eq:lower_joint_sample_size}.
\end{proof}

\subsection{Minimax optimality on a common parameter class}\label{subsection:budget_minimax}

The preceding family has a common stationary distribution and mixing times comparable to $\alpha^{-1}$.
A mixing-time budget places the upper and lower bounds on a common parameter class.

For $H\geq64$, $0<p\leq1/4$, and $\mathsf t\geq16H$, let
\begin{equation}\label{eq:lower_budget_class}
\mathcal C(H,p,\mathsf t)\coloneq
\left\{\begin{array}{l|l}
\bP\text{ on }\{0,1,2\}&
\begin{array}{l}
\bP\text{ is irreducible and aperiodic},\\
\mumin(\bP)\geq p,\quad \tmix(\bP)\leq\mathsf t
\end{array}
\end{array}\right\}.
\end{equation}
Every model in this class uses discount factor $\gamma=1-H^{-1}$,
the known reward vector $\br=(0,1,0)^\top$, and initial state $S_0=0$.
Define $N_\star(\varepsilon,\delta;\mathcal C)$ as the smallest integer $T$ for which there exists a possibly randomized estimator based on $T$ transitions with
\[
\sup_{\bP\in\mathcal C(H,p,\mathsf t)}
\PB_{\bP}\prn{
\norm{\widehat{\bV}_T-(\bI-\gamma\bP)^{-1}\br}_\infty>\varepsilon}
\leq\delta.
\]
The estimator may use $H,p,\mathsf t$ and the known reward function.

\begin{corollary}[Minimax rate under a mixing-time budget]
\label{corollary:budget_minimax}
Let $H\geq64$, $0<p\leq1/4$, $\mathsf t\geq16H$,
$0<\varepsilon\leq1$, and $0<\delta\leq1/4$.
For the class \eqref{eq:lower_budget_class},
\begin{equation}\label{eq:lower_budget_minimax_bound}
N_\star(\varepsilon,\delta;\mathcal C)
\geq\frac1{36864}
\prn{\frac{H^3}{p\varepsilon^2}+\frac{\mathsf t}{p}}
\log\frac1{2\delta}.
\end{equation}
Ordinary last-iterate TD, with either a constant step size or the
decreasing schedule of Theorem~\ref{theorem:decay}, achieves the
corresponding upper bound up to logarithmic factors. Consequently,
\begin{equation}\label{eq:lower_budget_minimax_rate}
N_\star(\varepsilon,\delta;\mathcal C)
=\widetilde\Theta\prn{\frac{H^3}{p\varepsilon^2}+\frac{\mathsf t}{p}}.
\end{equation}
\end{corollary}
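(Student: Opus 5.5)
The lower bound comes from a subfamily of Theorem~\ref{theorem:joint_lower_bound}, and the upper bound from running TD with conservative parameters.

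\textbf{Lower bound.} Choose $\alpha\coloneq 2/\mathsf t$. Since $\mathsf t\geq16H$, this gives $\alpha\leq1/(8H)$, so the parameter restrictions \eqref{eq:lower_joint_parameters} hold. By \eqref{eq:lower_joint_mixing}, every $\bP_b$ with $b\in[0,1/2]$ satisfies $\tmix(\bP_b)\leq2/\alpha=\mathsf t$. Each such chain is irreducible and aperiodic with $\mumin=p$. The discount, reward vector and initial state agree with those of the class, so the whole family \eqref{eq:lower_joint_family} lies in $\mathcal C(H,p,\mathsf t)$.

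An estimator that is uniformly $(\varepsilon,\delta)$-accurate over $\mathcal C$ is in particular uniformly accurate over this subfamily. Theorem~\ref{theorem:joint_lower_bound} then gives
\[
T\geq\frac1{18432}\prn{\frac{H^3}{p\varepsilon^2}+\frac{\mathsf t}{2p}}\log\frac1{2\delta}.
\]
Bounding $\frac{H^3}{p\varepsilon^2}+\frac{\mathsf t}{2p}\geq\frac12\prn{\frac{H^3}{p\varepsilon^2}+\frac{\mathsf t}{p}}$ yields \eqref{eq:lower_budget_minimax_bound} with constant $1/36864$.

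\textbf{Upper bound.} For every $\bP\in\mathcal C$ we have $d=3$, bounded deterministic rewards and a fixed initial state, so Assumptions~\ref{assumption:bounded} and \ref{assumption:mixing} hold. We run constant-step TD with $\mumin$ replaced by $p$ and $\tmix$ replaced by $\mathsf t$ in \eqref{eq:main_stepsize}--\eqref{eq:main_budget}. This choice is valid because every step of the proof of Theorem~\ref{theorem:main} survives the substitution:
\begin{itemize}[leftmargin=*]
\item Proposition~\ref{prop:stochastic_convolution} only used $\mumin$ through the density bound $\nu(S_0)/\mu(S_0)\leq1/\mumin\leq1/p$. Making $L$ larger keeps the tail bound valid, and $\eta H\leq1$ still holds.
\item In Proposition~\ref{proposition:initialization}, replacing $\mumin$ by $p$ only enlarges $b_\star$ and $k$.
\item Lemma~\ref{lemma:reverse_hitting_mixing} goes through with the enlarged lag $\mathsf t\lceil\log_2(2/p)\rceil$. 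The reason is that Lemma~\ref{lemma:mixing_amplification} needs only a lag that is a multiple of some $k$ with $d_{\mathrm{TV}}(k)\leq1/4$, applied enough times. Taking $k=\tmix$ and noting that $\lceil\log_2(2/p)\rceil\geq\lceil\log_2(2/\mumin)\rceil$ covers this.
\end{itemize}
Some care is needed because $\mathsf t$ may not be a multiple of $\tmix$. The cleanest route is to bound $\mathfrak h_\star\leq 2\tau/\mumin\leq 2\mathsf t\lceil\log_2(2/p)\rceil/p$ directly and then use Corollary~\ref{corollary:hitting_refinement} with the deterministic budget built from this upper bound. Either way, the resulting sufficient $T$ is $\wtilde O\prn{H^3/(p\varepsilon^2)+\mathsf t/p}$.

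For the decreasing schedule, we use Theorem~\ref{theorem:decay} with $p$ in place of $\mumin$, as the remark after that theorem permits. Condition~\eqref{eq:decay_sample} with the same bound on $\mathfrak h_\star$ gives the same order.

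\textbf{Conclusion and main obstacle.} TD is one particular estimator, so $N_\star$ is at most its sample size, and combining with the lower bound gives \eqref{eq:lower_budget_minimax_rate}. The main step to justify carefully is the conservative-parameter substitution: every place where $\mumin$ or $\tmix$ enters the upper-bound proof must be monotone in the right direction. The delicate one is the mixing amplification with a budget $\mathsf t$ that is not an exact multiple of $\tmix$, which I resolve by passing through $\mathfrak h_\star$ rather than re-deriving the lag.
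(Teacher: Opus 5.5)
Your proposal is correct and follows essentially the same route as the paper. For the lower bound, both embed the family \eqref{eq:lower_joint_family} with $\alpha=2/\mathsf t$ and halve the constant of Theorem~\ref{theorem:joint_lower_bound}; for the upper bound, both rerun the constant-step and decreasing-step analyses with $\mumin$ replaced by $p$ and $\mathfrak h_\star$ replaced by the class-uniform bound $\mathfrak h_\star\leq 2\tau/\mumin\leq 2\mathsf t\lceil\log_2(2/p)\rceil/p$. The differences are minor: you avoid rounding $\mathsf t$ by going through $\mathfrak h_\star$ (which makes your third bullet about lags that are multiples of $\tmix$ unnecessary), whereas the paper uses $\lceil\mathsf t\rceil\lceil\log_2(2/p)\rceil$ and, instead of citing the remark, explicitly checks that the decreasing-step interval bounds survive the substitution, via the cap identity for $H/(\eta_{T-1}\mumin)$.
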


\begin{proof}
For the lower bound, take $\alpha=2/\mathsf t$ in \eqref{eq:lower_joint_family}.
The assumption $\mathsf t\geq16H$ ensures $\alpha\leq1/(8H)$.
All models in this family have $\mumin=p$ and $\tmix\leq2/\alpha=\mathsf t$ by \eqref{eq:lower_joint_mixing}, so they belong to $\mathcal C(H,p,\mathsf t)$.
Theorem~\ref{theorem:joint_lower_bound} gives
\[
T\geq\frac1{18432}
\prn{\frac{H^3}{p\varepsilon^2}+\frac{\mathsf t}{2p}}
\log\frac1{2\delta},
\]
which implies \eqref{eq:lower_budget_minimax_bound}.

The upper-bound parameters can be chosen uniformly over this class.
Set
\[
\overline L\coloneq\log\frac{24}{\delta p},
\qquad
\overline\eta\coloneq\frac{\varepsilon^2}{256H^2\overline L^2},
\qquad
\overline\tau\coloneq
\lceil\mathsf t\rceil\left\lceil\log_2\frac2p\right\rceil.
\]
Use $\bV_0=\bm0$ and the constant step size $\overline\eta$.
In the proof of Theorem~\ref{theorem:main}, replace $\mumin$ by its lower bound $p$ and the amplified mixing interval by its upper bound $\overline\tau$.
The logarithm in the stochastic estimate is at most $\overline L$, and both replacements enlarge the sufficient initialization time.
Define
\begin{equation}\label{eq:lower_budget_upper_time}
\overline T\coloneq
\left\lceil\frac8p\prn{\frac H{\overline\eta}+2\overline\tau}\right\rceil
\left\lceil\log_2\frac{12H}{\delta\varepsilon p}\right\rceil.
\end{equation}
For any deterministic terminal time $T\geq\overline T$, the TD estimate has failure probability at most $\delta$, uniformly over $\mathcal C(H,p,\mathsf t)$.
Substitution of $\overline\eta$ and $\overline\tau$ proves the claimed upper order.

For decreasing steps, set
\[
\overline L_t\coloneq\log\frac{48H(t+8)}{\delta p},
\qquad
\eta_t\coloneq\min\brc{\frac1H,
                 \frac{AH\overline L_t}{p(t+8)}},
\qquad A=4096.
\]
This is the schedule in Theorem~\ref{theorem:decay} with $d=3$ and
$\mumin$ replaced by the known lower bound $p$.
To apply the interval bounds, note that every chain in the class has
$\mumin\geq p$, so
\[
\frac{H}{\eta_{T-1}\mumin}
\leq\frac{H}{\eta_{T-1}p}
=\max\brc{\frac{H^2}{p},
           \frac{T+7}{A\overline L_{T-1}}}.
\]
The density factor satisfies $1/\mumin\leq1/p$, and for any interval
failure probability $q\in(0,1)$,
$\log(4d/(q\mumin))\leq\log(4d/(qp))$.
Thus the interval noise and forgetting bounds used in the proof of
Theorem~\ref{theorem:decay} apply with these upper bounds in terms of $p$.
Every chain in the class also satisfies
$\mathfrak h_\star\leq2\overline\tau/p$.
Consequently, \eqref{eq:decay_sample} holds uniformly over the class
whenever
\[
T\geq C\prn{\frac{H^3\overline L_T^3}{p\varepsilon^2}
                    +\frac{2\overline\tau\,\overline L_T}{p}},
\]
which gives the same upper order.
\end{proof}

The corollary gives a joint minimax statement for mixing-time budgets $\mathsf t\geq16H$.
The upper bounds of Theorems~\ref{theorem:main} and \ref{theorem:decay}
continue to hold for faster-mixing chains as well; the present construction establishes the matching cubic statistical term in the stated slow-mixing regime.

\begin{remark}[Stationary initialization]
If the three-state family starts from its common stationary distribution, the common event in the coverage argument has probability
$(1-2p)(1-2p\alpha)^T$, whereas the expected number of transitions whose starting state lies in $A$ is exactly $2pT$.
The statistical calculation is therefore unchanged, and the coverage lower bound holds with
$\log(1/(2\delta))$ replaced by $\log(1/(4\delta))$ for $\delta<1/4$.
In particular, for $\delta\leq1/8$, these logarithms are comparable by universal constants and the same joint lower-bound orders hold under stationary initialization.
\end{remark}

\section{Discussion}
\label{Section:discussion}
With either a constant step size chosen for the target accuracy or the
decreasing schedule in Theorem~\ref{theorem:decay}, ordinary asynchronous
TD achieves sup-norm error at most $\varepsilon$ with high probability using
$\wtilde O(H^3/(\mumin\varepsilon^2)+\tmix/\mumin)$ transitions.
Two properties of the exact random propagation explain this rate.
The anchored local Poisson construction controls accumulated noise without a mixing-time factor, while compensation of successive hitting costs removes the extra horizon factor from initialization.
The three-state lower bound shows that both remaining terms are necessary, up to logarithms, on the parameter classes in Corollary~\ref{corollary:budget_minimax}.
Because that construction uses a common known reward function, the lower bound concerns the information in the observed transitions as well as the time needed to reach rare states.

\bibliographystyle{abbrvnat}
\bibliography{ref}

\clearpage
\appendix

\section{Reversal of the Markov Reward Process}
\label{Appendix:reverse}
\subsection{The stationary reverse reward law}

The reverse-time argument requires the conditional reward law given the full reverse history, not only the transition matrix of the reversed state chain. The following lemma records both in a form that permits dependence between the reward and the next state.

\begin{lemma}[Stationary reverse Markov reward kernel]\label{lemma:reverse_kernel}
Suppose that the forward process is generated by the time-homogeneous reward-transition kernel in~\eqref{eq:data_model} and that $S_0\sim\bmu$. Use the conditional reward law $K(dr\mid y,x)$ from that equation.
On an edge with $P(y,x)=0$, choose any probability law supported on $[0,1]$ for $K(\cdot\mid y,x)$; this choice never affects the path law. Fix $T\geq1$ and use the reverse states, rewards, and filtration in~\eqref{eq:reverse_path}--\eqref{eq:reverse_filtration}. Then $X_0\sim\bmu$ and, for every $0\leq n<T$,
\begin{equation}\label{eq:reverse_kernel_conditional}
    \PB\prn{X_{n+1}=y,\ \rho_{n+1}\in dr\mid\gG_n}
    =P^\star(X_n,y)K(dr\mid y,X_n),
\end{equation}
where $P^\star(x,y)=\mu(y)P(y,x)/\mu(x)$. Moreover, $\bP^\star$ is a transition matrix with stationary distribution $\bmu$ and is irreducible whenever $\bP$ is irreducible.
\end{lemma}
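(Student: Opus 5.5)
The plan is to compute the joint law of the reverse path directly from the forward path law and read off the conditional kernel as a ratio of cylinder probabilities. Stationarity gives $S_T\sim\bmu$, hence $X_0=S_T\sim\bmu$. For the matrix claims, $\sum_yP^\star(x,y)=\sum_y\mu(y)P(y,x)/\mu(x)=1$ because $\bmu^\top\bP=\bmu^\top$. Stationarity of $\bmu$ for $\bP^\star$ is $\sum_x\mu(x)P^\star(x,y)=\mu(y)\sum_xP(y,x)=\mu(y)$. Irreducibility follows because $P^\star(x,y)>0$ if and only if $P(y,x)>0$. The directed graph of $\bP^\star$ is therefore the reversal of that of $\bP$, and strong connectivity is preserved under edge reversal.

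For the kernel identity, I will write the forward joint law of $(S_0,R_0,S_1,\ldots,R_{T-1},S_T)$. Iterating \eqref{eq:data_model} from $S_0\sim\bmu$ gives
\[
\mu(s_0)\prod_{t<T}P(s_t,s_{t+1})K(dr_t\mid s_t,s_{t+1}).
\]
I will then re-index with $x_n=s_{T-n}$ and $r_{n+1}$ the reward on the forward edge from $x_{n+1}$ to $x_n$. Telescoping $\mu(x_{n+1})P(x_{n+1},x_n)=\mu(x_n)P^\star(x_n,x_{n+1})$ turns the density into
\[
\mu(x_0)\prod_{n<T}P^\star(x_n,x_{n+1})K(dr_{n+1}\mid x_{n+1},x_n).
\]
Every factor $\mu(x_n)$ with $0<n<T$ cancels, and $\mu(x_T)$ is absorbed at the start of the product.

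The next step is to marginalize this density. Integrating out the variables with index larger than $n+1$ uses only that each later factor is a probability kernel in $(x_{m+1},r_{m+1})$; I will sum from the last index backward. The law of $(X_0,\rho_1,\ldots,\rho_{n+1},X_{n+1})$ then has the same product form truncated at $n$. Dividing the truncated density at level $n+1$ by the one at level $n$ gives \eqref{eq:reverse_kernel_conditional} as a regular conditional distribution given $\gG_n$. Equivalently, I will verify $\E[\ind_C\,\ind\{X_{n+1}=y,\rho_{n+1}\in B\}]=\E[\ind_C\,P^\star(X_n,y)K(B\mid y,X_n)]$ for cylinder sets $C\in\gG_n$.

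The main care point is the edges with $P(y,x)=0$. The arbitrary choice of $K$ there appears only multiplied by $P^\star(x,y)=0$, so it affects nothing. States with $\mu=0$ do not arise, since $\mumin>0$. Beyond this, the argument is routine measure-theoretic bookkeeping with kernels on $[0,1]$.
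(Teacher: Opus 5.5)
Your proposal is correct and follows essentially the same route as the paper. Like the paper, you check stochasticity, stationarity, and irreducibility of $\bP^\star$ directly, then telescope $\mu(s_t)P(s_t,s_{t+1})=\mu(s_{t+1})P^\star(s_{t+1},s_t)$ in the forward path measure, reindex, and integrate out the later kernel factors to read off the conditional law given $\gG_n$; your cylinder-set verification is a valid way to make the paper's ``identity of measures'' remark precise without assuming that rewards have densities.
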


\begin{proof}
First, stationarity of $\bmu$ for the forward chain implies
\begin{equation*}
    \sum_yP^\star(x,y)
    =\frac{\sum_y\mu(y)P(y,x)}{\mu(x)}=1.
\end{equation*}
Likewise,
\begin{equation*}
    \sum_x\mu(x)P^\star(x,y)
    =\mu(y)\sum_xP(y,x)=\mu(y).
\end{equation*}
Thus $\bP^\star$ is stochastic and has stationary distribution $\bmu$. Reversing any positive-probability path for the irreducible forward chain gives a positive-probability path for $\bP^\star$, so the reverse chain is also irreducible.

To verify the reward law, take a finite forward path $(s_0,r_0,s_1,\ldots,r_{T-1},s_T)$. The Markov reward property gives its joint measure as
\begin{equation*}
    \mu(s_0)\prod_{t=0}^{T-1}
       P(s_t,s_{t+1})K(dr_t\mid s_t,s_{t+1}).
\end{equation*}
The identity $\mu(s_t)P(s_t,s_{t+1})
=\mu(s_{t+1})P^\star(s_{t+1},s_t)$ telescopes across the state-transition factors. Consequently, the same measure equals
\begin{equation}\label{eq:reverse_path_factorization}
    \mu(s_T)\prod_{t=0}^{T-1}
       P^\star(s_{t+1},s_t)K(dr_t\mid s_t,s_{t+1}).
\end{equation}
This is an identity of measures, so it does not assume that the reward law has a density or is discrete. Reindexing with $x_n=s_{T-n}$ and $\rho_{n+1}=r_{T-n-1}$ rewrites~\eqref{eq:reverse_path_factorization} as
\begin{equation*}
    \mu(x_0)\prod_{n=0}^{T-1}
       P^\star(x_n,x_{n+1})
       K(d\rho_{n+1}\mid x_{n+1},x_n).
\end{equation*}
Each factor is a probability kernel. Integrating the factors after time $n+1$ and then conditioning on the first $n$ factors proves~\eqref{eq:reverse_kernel_conditional}. In particular, the rewards already included in $\gG_n$ do not change the conditional law of the next reverse edge once $X_n$ is known.
\end{proof}

The same reverse kernel defines an abstract process of any duration, started from an arbitrary state $x$. For a constant step size, appending the weight update~\eqref{eq:reverse_weight_update} makes $(\bp_n,X_n)$ a time-homogeneous Markov process on
\begin{equation*}
    \brc{\bp\in\RB^d:\bp\geq\bm0,\ \norm{\bp}_1\leq1}
       \times\gS.
\end{equation*}
The expectation bounds in the main proof apply to every initial point of this state space. This formulation is what permits their conditional application after an intermediate reverse time.
For a deterministic changing step size, the pair is a time-inhomogeneous
Markov process; adjoining the time index gives the time-homogeneous
process $(n,\bp_n,X_n)$ used in Appendix~\ref{Appendix:decay}.

\section{Exponential Bounds from Expected Energy}
\label{Appendix:exponential}
\subsection{A discrete exponential-moment lemma}
\label{app:exponential_moments}

The following discrete Khasminskii bound converts a uniform expectation
bound for a nonnegative additive functional into an exponential-moment bound.

\begin{lemma}[Discrete Khasminskii bound]
\label{lemma:discrete_khasminskii}
Let $(Z_n)_{n\ge0}$ be a time-homogeneous Markov process on a measurable
state space, and let $f$ be a nonnegative measurable function. Suppose
that, for some $K\in(0,\infty)$,
\[
 \sup_z\sup_{N\ge1}
 \E_z\sum_{n=0}^{N-1}f(Z_n)\le K.
\]
Writing $A_N\coloneq\sum_{n=0}^{N-1}f(Z_n)$, we have, for every
initial state $z$, finite $N$, and integer $k\ge1$,
\begin{equation}
 \E_z A_N^k\le k!K^k.
 \label{eq:khas_moments}
\end{equation}
Consequently,
\begin{equation}
 \E_z\exp\!\left(\frac{A_N}{2K}\right)\le2,
 \qquad
 \PB_z(A_N>2K\ell)\le2e^{-\ell}
 \quad(\ell>0).
 \label{eq:khas_exponential}
\end{equation}
\end{lemma}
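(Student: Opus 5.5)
The plan is the classical Khasminskii argument: expand the $k$th power of the additive functional into ordered sums and apply the Markov property, together with the uniform expectation bound, one factor at a time. Write
\[
A_N^k=\sum_{n_1,\ldots,n_k<N}\prod_{m=1}^kf(Z_{n_m})
\leq k!\sum_{0\le n_1\le n_2\le\cdots\le n_k<N}\prod_{m=1}^kf(Z_{n_m}).
\]
The inequality holds because every ordered $k$-tuple is a permutation of a nondecreasing one, and each nondecreasing tuple arises from at most $k!$ permutations; nonnegativity of $f$ makes the overcount harmless. It then suffices to show, for each $k\geq1$ and each initial state $z$,
\[
G_k(z)\coloneq\sup_{N}\E_z\sum_{0\le n_1\le\cdots\le n_k<N}\prod_{m=1}^kf(Z_{n_m})\leq K^k.
\]

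I will prove this by induction on $k$, peeling off the innermost index $n_k$. Condition on $\mathcal F_{n_{k-1}}$ and apply the Markov property at time $n_{k-1}$. The inner sum is
\[
\sum_{n_k=n_{k-1}}^{N-1}f(Z_{n_k}),
\]
and its conditional expectation equals $\E_{Z_{n_{k-1}}}\sum_{j=0}^{N-1-n_{k-1}}f(Z_j)\leq K$ by hypothesis. The convention $n_k\ge n_{k-1}$ includes the $j=0$ term, so the hypothesis applies exactly, with $N-n_{k-1}\ge1$. Because everything is nonnegative, Tonelli justifies the exchanges, and $G_k(z)\leq K\,G_{k-1}(z)$. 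The base case $G_1\le K$ is the hypothesis itself. This gives \eqref{eq:khas_moments}.

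For \eqref{eq:khas_exponential}, expand the exponential as a series and use monotone convergence:
\[
\E_z e^{A_N/(2K)}=\sum_{k\ge0}\frac{\E_zA_N^k}{(2K)^kk!}\le\sum_k2^{-k}=2.
\]
The tail bound then follows from Markov's inequality: $\PB_z(A_N>2K\ell)\le2e^{-\ell}$.

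The step needing the most care is the combinatorial reduction. Working with non-strict orderings ($\le$ rather than $<$) is what lets the time-homogeneous hypothesis, with its $n=0$ term included, apply cleanly at each stage. With strict orderings the $k!$ bound would fail on diagonal terms unless they were handled separately. Measurability is routine, since $f$ is measurable and the Markov property is the standard one.
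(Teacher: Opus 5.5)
Your proof is correct. It reaches the same bound $k!K^k$ through the same Markov-property mechanism, but organizes the induction differently from the paper.

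\textbf{Your route.} You expand $A_N^k$ into a multiple sum and pay the factor $k!$ once, by sorting each index tuple into a nondecreasing one. You then bound the time-ordered sum by $K^k$ by peeling off the \emph{last} index. Conditioning on $\mathcal F_{n_{k-1}}$ invokes only the original first-moment hypothesis, at the random state $Z_{n_{k-1}}$. The induction hypothesis is needed only at the fixed initial state $z$.

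\textbf{The paper's route.} It avoids the counting argument by telescoping the tail sums $B_n=\sum_{r=n}^{N-1}f(Z_r)$. It uses the elementary inequality $B_n^k-B_{n+1}^k\le kf(Z_n)B_n^{k-1}$ and peels off the \emph{first} index. The $(k-1)$st-moment induction hypothesis is then applied from the random state $Z_n$ with remaining horizon $N-n$. This is why the paper runs its induction simultaneously over all initial states and horizons, and why $k!$ accumulates as one factor $k$ per induction step.

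\textbf{Diagonal terms.} Repeated indices are handled in your version by the non-strict ordering, and in the paper's by the inclusion of $f(Z_n)$ inside $B_n$.

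\textbf{What each buys.} Your argument yields the classical, slightly more informative intermediate fact that the time-ordered sum itself has expectation at most $K^k$. The paper's is shorter and needs no combinatorics.
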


\begin{proof}
The assumption with $N=1$ gives $f(z)\le K$ for every $z$, so all
moments over a fixed finite horizon are finite. We prove
\cref{eq:khas_moments} by induction on $k$, simultaneously over all
initial states and all finite horizons. The case $k=1$ is the
assumption.

For $k\ge2$, fix $z$ and $N$ and let
\[
 B_n\coloneq\sum_{r=n}^{N-1}f(Z_r),\qquad 0\le n\le N,
 \quad B_N=0.
\]
Since $B_n-B_{n+1}=f(Z_n)$ and $B_n\ge B_{n+1}\ge0$,
\[
 B_n^k-B_{n+1}^k
 =k\int_{B_{n+1}}^{B_n}u^{k-1}\,du
 \le kf(Z_n)B_n^{k-1}.
\]
Let $\mathcal H_n\coloneq\sigma(Z_0,\ldots,Z_n)$. The Markov
property and the induction hypothesis, applied with initial state
$Z_n$ and remaining horizon $N-n$, imply
\[
 \E[B_n^{k-1}\mid\mathcal H_n]
 \le(k-1)!K^{k-1}.
\]
Taking expectations in the telescoping inequality now gives
\begingroup
\allowdisplaybreaks[0]
\begin{align*}
 \E_z A_N^k
 &=\E_z\sum_{n=0}^{N-1}(B_n^k-B_{n+1}^k)\\
 &\le k\sum_{n=0}^{N-1}
   \E_z\!\left[
     f(Z_n)\E[B_n^{k-1}\mid\mathcal H_n]
   \right]\\
 &\le k!K^{k-1}\E_z A_N
 \le k!K^k.
\end{align*}
\endgroup
The induction includes the occurrence of $f(Z_n)$ inside $B_n$,
and hence terms with repeated time indices.

Expanding the exponential into its nonnegative power series and
using monotone convergence and \cref{eq:khas_moments}, we obtain
\[
 \E_z\exp\!\left(\frac{A_N}{2K}\right)
 =\sum_{k=0}^{\infty}\frac{\E_z A_N^k}{k!(2K)^k}
 \le\sum_{k=0}^{\infty}2^{-k}=2.
\]
Markov's inequality yields the tail bound in
\cref{eq:khas_exponential}.
\end{proof}

\subsection{A scalar martingale tail bound}
\label{app:scalar_martingale}

We record the fixed-horizon form of Freedman's inequality \citep{freedman1975tail} used in
the proof and give its exponential-supermartingale derivation.
This also makes explicit the probability cost of controlling a
random predictable quadratic variation.

\begin{lemma}[Scalar Freedman bound]
\label{lemma:scalar_freedman}
Let $(D_n)_{n=1}^N$ be real martingale differences with respect to a
filtration $(\mathcal H_n)_{n=0}^N$, and suppose $\abs{D_n}\le b$
almost surely for some $b>0$. Define
\[
 M_N\coloneq\sum_{n=1}^N D_n,
 \qquad
 W_N\coloneq\sum_{n=1}^N
   \E[D_n^2\mid\mathcal H_{n-1}].
\]
For every $a,v>0$,
\begin{equation}
 \PB(\abs{M_N}\ge a,\ W_N\le v)
 \le2\exp\!\left(-\frac{a^2}{2(v+ba/3)}\right).
 \label{eq:freedman_joint}
\end{equation}
In particular, for every $\ell>0$,
\begin{equation}
 \PB\!\left(
   \abs{M_N}>\sqrt{2v\ell}+\frac{2b\ell}{3},\ W_N\le v
 \right)\le2e^{-\ell}.
 \label{eq:freedman_inverted}
\end{equation}
\end{lemma}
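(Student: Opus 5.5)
The argument is the standard exponential-supermartingale proof of Freedman's inequality, applied once to $M$ and once to $-M$. Set $\psi(\lambda)\coloneq(e^{\lambda b}-1-\lambda b)/b^2$ for $\lambda>0$. The first step is the elementary pointwise inequality: for $\abs{x}\le b$,
\[
e^{\lambda x}\le1+\lambda x+\psi(\lambda)x^2 .
\]
This holds because $(e^{u}-1-u)/u^2$ is nondecreasing in $u$, so for $u=\lambda x\le\lambda b$ it is at most its value at $u=\lambda b$. Taking conditional expectations and using $\E[D_n\mid\mathcal H_{n-1}]=0$ gives
\[
\E[e^{\lambda D_n}\mid\mathcal H_{n-1}]\le1+\psi(\lambda)\E[D_n^2\mid\mathcal H_{n-1}]\le\exp\bigl(\psi(\lambda)\E[D_n^2\mid\mathcal H_{n-1}]\bigr).
\]
It follows that $Z_n\coloneq\exp(\lambda M_n-\psi(\lambda)W_n)$ is a nonnegative supermartingale with $Z_0=1$, so $\E Z_N\le1$.

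For the one-sided bound, on the event $\{M_N\ge a,\,W_N\le v\}$ we have $Z_N\ge\exp(\lambda a-\psi(\lambda)v)$. Markov's inequality then gives
\[
\PB(M_N\ge a,\ W_N\le v)\le\exp\bigl(-\lambda a+\psi(\lambda)v\bigr).
\]
Next I use the standard bound $\psi(\lambda)\le\frac{\lambda^2}{2(1-\lambda b/3)}$ for $0<\lambda<3/b$. This follows by comparing the series $\sum_{k\ge2}\lambda^kb^{k-2}/k!$ with $\frac{\lambda^2}{2}\sum_{m\ge0}(\lambda b/3)^m$, using $k!\ge2\cdot3^{k-2}$. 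I then choose
\[
\lambda=\frac{a}{v+ab/3},
\]
which satisfies $\lambda b<3$. Substituting this choice, we get $1-\lambda b/3=v/(v+ab/3)$, and the exponent becomes
\[
-\lambda a+\frac{\lambda^2v(v+ab/3)}{2v}=-\frac{a^2}{2(v+ab/3)}.
\]
Applying the same argument to $-D_n$, which satisfies the same hypotheses, and taking a union bound yields \eqref{eq:freedman_joint} with the factor $2$.

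For \eqref{eq:freedman_inverted}, set $a=\sqrt{2v\ell}+2b\ell/3$. It suffices to check that $a^2\ge2\ell(v+ab/3)$, i.e. $a^2-\tfrac{2b\ell}{3}a-2v\ell\ge0$. This holds whenever $a$ is at least the positive root
\[
\frac{b\ell}{3}+\sqrt{\frac{b^2\ell^2}{9}+2v\ell}\le\frac{b\ell}{3}+\frac{b\ell}{3}+\sqrt{2v\ell}=a.
\]
Monotonicity in $a$ then gives the bound $2e^{-\ell}$. The event with strict inequality $>$ is contained in the event with $\ge$, so the strict form also follows.

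The only step with any content is the bound on $\psi$ together with the choice of $\lambda$; everything else is routine. One point to confirm is that the supermartingale argument does not need $W_N\le v$ to hold almost surely. It does not, because the restriction to $\{W_N\le v\}$ is imposed only inside the Markov step.
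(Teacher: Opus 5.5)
Your proposal is correct and follows essentially the same route as the paper: the same exponential supermartingale, the same bound $\lambda^2/(2(1-\lambda b/3))$ obtained via $k!\ge2\cdot3^{k-2}$, the same choice $\lambda=a/(v+ab/3)$, symmetrization through $-D_n$, and the same quadratic check for the inverted form. Routing through the Bennett function $\psi(\lambda)$ changes nothing of substance, since $\psi(\lambda)$ is exactly the series $\sum_{k\ge2}\lambda^kb^{k-2}/k!$ that the paper bounds term by term.
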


\begin{proof}
Fix $0\le\lambda<3/b$. For every integer $k\ge2$,
$k!\ge2\cdot3^{k-2}$. The martingale-difference property and
$\abs{D_n}\le b$ therefore give
\begin{align*}
 \E[e^{\lambda D_n}\mid\mathcal H_{n-1}]
 &\le1+
   \sum_{k=2}^{\infty}\frac{\lambda^k b^{k-2}}{k!}
     \E[D_n^2\mid\mathcal H_{n-1}]\\
 &\le1+
   \frac{\lambda^2}{2(1-\lambda b/3)}
     \E[D_n^2\mid\mathcal H_{n-1}]\\
 &\le\exp\!\left\{
   \frac{\lambda^2}{2(1-\lambda b/3)}
     \E[D_n^2\mid\mathcal H_{n-1}]
   \right\}.
\end{align*}
Thus
\[
 L_n\coloneq\exp\!\left\{
    \lambda M_n-
    \frac{\lambda^2}{2(1-\lambda b/3)}W_n
  \right\},\qquad 0\le n\le N,
\]
is a nonnegative supermartingale with $L_0=1$ and $\E L_N\le1$.
On the event $\{M_N\ge a,\ W_N\le v\}$ it is at least
\[
 \exp\!\left\{
   \lambda a-\frac{\lambda^2v}{2(1-\lambda b/3)}
 \right\}.
\]
Markov's inequality and the choice
$\lambda=a/(v+ba/3)<3/b$ yield
\[
 \PB(M_N\ge a,\ W_N\le v)
 \le\exp\!\left(-\frac{a^2}{2(v+ba/3)}\right).
\]
Applying the same argument to $-D_n$ proves
\cref{eq:freedman_joint}. Finally,
$a=\sqrt{2v\ell}+2b\ell/3$ satisfies
$a^2\ge2\ell(v+ba/3)$, which gives
\cref{eq:freedman_inverted}.
\end{proof}

\section{Hitting-Time Identities and Mixing Amplification}
\label{Appendix:hitting_details}
This appendix proves the hitting-time identities and the forward mixing
estimate used in Section~\ref{Section:initialization}. All arguments
concern the state chain alone. They allow the reward-transition dependence
in \eqref{eq:data_model}, and none requires reversibility.
For background on finite-chain hitting times and mixing, see
\citet{levin2017markov}; the identities needed here are proved in full below.

\subsection{Return excursions and their last predecessor}

\begin{proof}[Proof of Lemma~\ref{lemma:reverse_predecessor}]
We first justify the finiteness of the hitting and return means used
below. In a finite irreducible chain, from every state there is a path of
positive probability to any prescribed target. By allowing at least one
step when starting at the target, there is also a positive-probability
return path. There are finitely many starting states, so some finite
integer $L$ and some $p_0>0$ bound the length of these paths from above
and their probabilities from below. The Markov property then bounds the
probability of avoiding the target for $mL$ steps by $(1-p_0)^m$.
Thus both hitting times and first return times after time zero have
finite expectations under either kernel $\bP$ or $\bP^\star$.

For the reverse chain started at $i$, set $\sigma\coloneq\tau_i^+$ and
let
\[
    n_j\coloneq\E_i^\star\left[
         \sum_{t=0}^{\sigma-1}\ind\{X_t=j\}\right].
\]
These are the expected occupation counts in an excursion that includes
its initial state $i$ and excludes its returning state $i$. For each
state $\ell$, conditional expectation gives
\begin{align*}
    \sum_j n_jP^\star(j,\ell)
       &=\sum_{t\geq0}\E_i^\star\brk{
             \ind\{t<\sigma\}P^\star(X_t,\ell)}\\
       &=\E_i^\star\left[
             \sum_{t=0}^{\sigma-1}\ind\{X_{t+1}=\ell\}\right]
        =n_\ell.
\end{align*}
Here $\{t<\sigma\}$ is measurable before the transition at time $t$,
and sums and expectations can be interchanged by nonnegativity.
The last equality holds pathwise after subtracting the two occupation
counts: their difference is
$\ind\{X_\sigma=\ell\}-\ind\{X_0=\ell\}=0$.
Thus $\bm n^\top\bP^\star=\bm n^\top$. Normalizing the nonzero
finite vector $\bm n$ yields a stationary distribution. Uniqueness of
the stationary distribution, together with $n_i=1$, implies
\[
    n_j=\frac{\mu(j)}{\mu(i)},
    \qquad
    \E_i^\star\sigma=\sum_j n_j=\frac1{\mu(i)}.
\]
The same occupation-count argument for the forward kernel proves
$\E_i\tau_i^+=1/\mu(i)$.

To identify the last predecessor $J=X_{\sigma-1}$, observe that there
is exactly one transition into $i$ at times $0,\ldots,\sigma-1$,
namely the final transition of the excursion. Consequently,
\begingroup
\allowdisplaybreaks[0]
\begin{align*}
    \PB_i^\star(J=j)
       &=\E_i^\star\left[
          \sum_{t=0}^{\sigma-1}
             \ind\{X_t=j,X_{t+1}=i\}\right]\\
       &=n_jP^\star(j,i)
        =\frac{\mu(j)}{\mu(i)}
             \frac{\mu(i)P(i,j)}{\mu(j)}
        =P(i,j).
\end{align*}
\endgroup
This computation also includes a one-step self-return at $i$.
\end{proof}

\subsection{Time reversal of hitting means}

\begin{proof}[Proof of Lemma~\ref{lemma:hitting_compensation}]
We establish \eqref{eq:hitting_reversal_identity} from a finite matrix
identity, including the required boundary-value calculation.
Set
\[
    \bm\Pi\coloneq\bone\bmu^\top,
    \bm Z\coloneq(\bI-\bP+\bm\Pi)^{-1}.
\]
Here $\bm\Pi$ and $\bm Z$ denote matrices only within this proof.
The inverse exists. Indeed, if
$(\bI-\bP+\bm\Pi)\bv=\bm0$, multiplying by $\bmu^\top$ gives
$\bmu^\top\bv=0$. Hence $\bP\bv=\bv$. Every real vector fixed
by a finite irreducible stochastic matrix is constant: at a state
attaining its maximum, the averaging equation forces all possible
successors to have the same maximum, and irreducibility propagates
this to every state. The condition $\bmu^\top\bv=0$ therefore
forces $\bv=\bm0$, proving invertibility.

The equalities
$(\bI-\bP+\bm\Pi)\bone=\bone$ and
$\bmu^\top(\bI-\bP+\bm\Pi)=\bmu^\top$ give
\[
    \bm Z\bone=\bone,
    \qquad \bmu^\top\bm Z=\bmu^\top,
    \qquad (\bI-\bP)\bm Z=\bI-\bm\Pi.
\]
Write $Z_{ji}$ for the $(j,i)$ entry of $\bm Z$.
For a fixed target $i$, define
\[
    v_i(j)\coloneq\frac{Z_{ii}-Z_{ji}}{\mu(i)}.
\]
Then $v_i(i)=0$, and the last matrix identity implies
\[
    (\bI-\bP)v_i=\bone-\frac{\be_i}{\mu(i)}.
\]
In particular, for $j\neq i$,
$v_i(j)=1+\sum_xP(j,x)v_i(x)$. These are exactly the hitting-time
equations with zero boundary value at $i$. They have a unique solution:
the difference of two solutions is harmonic away from $i$ and zero at
$i$; its value at a starting state equals its expected value after
stopping at the earlier of time $n$ and the first hit of $i$.
The difference is bounded on the finite state space, and the hitting
time is almost surely finite, so this expectation tends to zero.
It follows that
\begin{equation}\label{eq:fundamental_hitting_formula}
    h_i(j)=\frac{Z_{ii}-Z_{ji}}{\mu(i)}.
\end{equation}

Let $\bD_\mu\coloneq\diag(\bmu)$.
Since $\bP^\star=\bD_\mu^{-1}\bP^\top\bD_\mu$ and
$\bD_\mu^{-1}\bm\Pi^\top\bD_\mu=\bm\Pi$, the corresponding reverse
matrix is
\[
    \bm Z^\star\coloneq(\bI-\bP^\star+\bm\Pi)^{-1}
            =\bD_\mu^{-1}\bm Z^\top\bD_\mu.
\]
Apply \eqref{eq:fundamental_hitting_formula} to the reverse chain,
with target $j$ and starting state $i$. We obtain
\begin{equation}\label{eq:fundamental_reverse_hitting}
    h_j^\star(i)
       =\frac{Z^\star_{jj}-Z^\star_{ij}}{\mu(j)}
       =\frac{Z_{jj}}{\mu(j)}-\frac{Z_{ji}}{\mu(i)}.
\end{equation}
Averaging this formula over the starting state and using that each row
of $\bm Z$ sums to one gives
\begin{equation}\label{eq:fundamental_hitting_potential}
    a(i)=\sum_x\mu(x)h_i^\star(x)
       =\frac{Z_{ii}}{\mu(i)}-\sum_xZ_{ix}
       =\frac{Z_{ii}}{\mu(i)}-1.
\end{equation}
Equations~\eqref{eq:fundamental_hitting_formula}--
\eqref{eq:fundamental_hitting_potential} now imply
\[
    h_j^\star(i)=h_i(j)+a(j)-a(i),
\]
including when $i=j$.

Average this equality over $j$ with probabilities $P(i,j)$.
A first-step decomposition of the forward return time gives
\[
    1+\sum_jP(i,j)h_i(j)
       =\E_i\tau_i^+=\frac1{\mu(i)},
\]
where the return-time formula is proved in
Lemma~\ref{lemma:reverse_predecessor}. Therefore
\[
    q(i)=\sum_jP(i,j)h_j^\star(i)
       =\sum_jP(i,j)a(j)-a(i)+c(i)-1.
\]
This is \eqref{eq:hitting_compensation}. Finally,
$h_j^{\star,+}(i)=h_j^\star(i)$ unless $j=i$, and in that case
the two values are $1/\mu(i)$ and zero. It follows that
\[
    \ell(i)=q(i)+\frac{P(i,i)}{\mu(i)}
       \leq\bigl((\bP-\bI)\bm a\bigr)(i)+2c(i),
\]
which proves \eqref{eq:positive_hitting_compensation}.
\end{proof}

\subsection{Amplifying forward mixing}

\begin{lemma}[Mixing amplification]\label{lemma:mixing_amplification}
For the total-variation profile in \eqref{eq:mixing_profile},
\begin{equation}\label{eq:mixing_amplification}
    d_{\mathrm{TV}}(q\tmix)\leq2^{-q},
    \qquad q=1,2,\ldots.
\end{equation}
Consequently,
$\tau=\tmix\lceil\log_2(2/\mumin)\rceil$ satisfies
\begin{equation}\label{eq:hitting_forward_minorization}
    P^\tau(x,i)\geq\frac{\mu(i)}2,
    \qquad x,i\in\gS.
\end{equation}
\end{lemma}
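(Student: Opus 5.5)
The first claim follows from submultiplicativity of the worst-case total-variation distance. Define $\bar d(k)\coloneq\max_{x,y}\norm{\bP^k(x,\cdot)-\bP^k(y,\cdot)}_{\TV}$. I would prove the two standard facts
\[
d_{\mathrm{TV}}(k)\leq\bar d(k)\leq2d_{\mathrm{TV}}(k),
\qquad
\bar d(k+l)\leq\bar d(k)\bar d(l).
\]
The first inequality holds because $\bmu=\sum_y\mu(y)\bP^k(y,\cdot)$ is a mixture, so convexity of TV gives the lower bound; the triangle inequality gives the upper bound. For submultiplicativity I would write
\[
\bP^{k+l}(x,\cdot)-\bP^{k+l}(y,\cdot)=\sum_z\bigl(\bP^k(x,z)-\bP^k(y,z)\bigr)\bP^l(z,\cdot).
\]
The signed measure $\bP^k(x,\cdot)-\bP^k(y,\cdot)$ has total mass zero and positive part of mass at most $\bar d(k)$. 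So the difference equals $\bar d(k)$-weighted (or less) times the difference of two probability mixtures of rows of $\bP^l$. Each such difference has TV at most $\bar d(l)$.

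Then, for every integer $q\geq1$,
\[
d_{\mathrm{TV}}(q\tmix)\leq\bar d(q\tmix)\leq\bar d(\tmix)^q\leq\bigl(2d_{\mathrm{TV}}(\tmix)\bigr)^q\leq(1/2)^q,
\]
using $d_{\mathrm{TV}}(\tmix)\leq1/4$. This proves \eqref{eq:mixing_amplification}.

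For the minorization, set $q\coloneq\lceil\log_2(2/\mumin)\rceil\geq1$, so that $2^{-q}\leq\mumin/2$. Then for every $x,i$,
\[
\mu(i)-P^\tau(x,i)\leq\norm{\bP^\tau(x,\cdot)-\bmu}_{\TV}\leq d_{\mathrm{TV}}(q\tmix)\leq\frac{\mumin}2\leq\frac{\mu(i)}2.
\]
This uses that a single-coordinate deviation is bounded by the TV distance under the half-$\ell_1$ convention: the positive and negative parts each carry mass equal to the TV distance. Rearranging gives $P^\tau(x,i)\geq\mu(i)/2$, which is \eqref{eq:hitting_forward_minorization}.

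The only nonroutine step is the submultiplicativity of $\bar d$, and specifically the coupling/decomposition of the signed measure into positive and negative parts of equal mass. I would handle it via the Jordan decomposition stated above rather than an explicit coupling, since the finite state space makes everything a finite sum. The case $\tmix=1$ and one-state chains need no special treatment.
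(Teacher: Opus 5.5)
Your proposal is correct and follows the paper's proof essentially step for step. Both arguments use the same chain: the sandwich $d_{\mathrm{TV}}\leq\bar d\leq2d_{\mathrm{TV}}$, submultiplicativity of $\bar d$ via the positive/negative-part decomposition of the row difference (which the paper states as a Dobrushin contraction inequality for general distributions), iteration from $\bar d(\tmix)\leq1/2$, and the singleton bound at $\tau$ with $2^{-q}\leq\mumin/2$.
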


\begin{proof}
Define the pairwise total-variation profile
\[
    \overline d_{\mathrm{TV}}(t)
       \coloneq\max_{x,y\in\gS}
            \norm{\bP^t(x,\cdot)-\bP^t(y,\cdot)}_{\TV}.
\]
Stationarity and the triangle inequality show that
\[
    d_{\mathrm{TV}}(t)
       \leq\overline d_{\mathrm{TV}}(t)
       \leq2d_{\mathrm{TV}}(t).
\]
The first inequality follows by expressing $\bmu$ as the mixture
$\sum_y\mu(y)\bP^t(y,\cdot)$.

For any two probability distributions $\nu$ and $\lambda$, we claim
\begin{equation}\label{eq:dobrushin_contraction}
    \norm{\nu\bP^t-\lambda\bP^t}_{\TV}
       \leq\overline d_{\mathrm{TV}}(t)
                        \norm{\nu-\lambda}_{\TV}.
\end{equation}
If $a\coloneq\norm{\nu-\lambda}_{\TV}>0$, the positive and negative
parts of $\nu-\lambda$ each have mass $a$. Dividing them by $a$
gives probability distributions $\nu_+$ and $\nu_-$. Since
\[
    \nu_+\bP^t-\nu_-\bP^t
       =\sum_{x,y}\nu_+(x)\nu_-(y)
             \bigl[\bP^t(x,\cdot)-\bP^t(y,\cdot)\bigr],
\]
their total-variation distance is at most
$\overline d_{\mathrm{TV}}(t)$. Multiplication by $a$ proves
\eqref{eq:dobrushin_contraction}; the case $a=0$ is immediate.

Apply \eqref{eq:dobrushin_contraction} to any two rows of $\bP^s$.
After maximizing over those rows, it gives
\[
    \overline d_{\mathrm{TV}}(s+t)
       \leq\overline d_{\mathrm{TV}}(s)
                 \overline d_{\mathrm{TV}}(t).
\]
By the definition of $\tmix$,
$\overline d_{\mathrm{TV}}(\tmix)\leq1/2$. Iteration now proves
\eqref{eq:mixing_amplification}.
For the displayed $\tau$, this entails
$d_{\mathrm{TV}}(\tau)\leq\mumin/2$. Total variation controls the
probability of each singleton, so
\[
    P^\tau(x,i)\geq\mu(i)-\frac{\mumin}{2}
                    \geq\frac{\mu(i)}2.
\]
This proves \eqref{eq:hitting_forward_minorization} directly from the
forward mixing profile.
\end{proof}

\section{Decreasing Step Sizes}
\label{Appendix:decay}
We prove Theorem~\ref{theorem:decay} by first establishing interval
estimates for any deterministic nonincreasing step sequence.
The stochastic estimate uses the local Poisson equation and a weighted
square potential. To bound initialization, we control the sum of the
reverse masses using the hitting-time compensation identity.
Throughout this appendix, Assumptions~\ref{assumption:bounded}
and~\ref{assumption:mixing} are in force.

\subsection{Reverse propagation on a deterministic interval}
Fix integers $0\leq s<T$ and put $N\coloneq T-s$.
Use the matrices in \eqref{eq:random_matrix}--\eqref{eq:random_product}
with the deterministic step size $\eta_t$ at forward time $t$.
Unrolling the error recursion on this interval gives
\begin{equation}\label{eq:decay_decomposition}
\begin{split}
 \bV_T-\bV^\pi
   &=\widehat{\bPhi}_{T,s}(\bV_s-\bV^\pi)+\bY_{T,s},\\
 \bY_{T,s}
   &\coloneq\sum_{t=s}^{T-1}\eta_t
       \widehat{\bPhi}_{T,t+1}\be_{S_t}
       \beta(S_t,R_t,S_{t+1}).
\end{split}
\end{equation}
For a stationary forward interval and terminal coordinate $j$, set
\[
 X_n=S_{T-n},\quad \rho_{n+1}=R_{T-n-1},\quad
 a_n=\eta_{T-n-1},\quad \bp_n=\widehat{\bPhi}_{T,T-n}^{\top}\be_j.
\]
Here the reverse rewards and steps are defined for $0\leq n<N$,
and the states and weights are defined through $n=N$.
Use the filtration $\gG_n$ in \eqref{eq:reverse_filtration}.
Lemma~\ref{lemma:reverse_kernel}, applied to the interval $[s,T)$,
shows that conditional on $\gG_n$ and $X_n=x$, the next pair
$(X_{n+1},\rho_{n+1})$ has law
$P^\star(x,y)K(dr\mid y,x)$.
Multiplication by $\widehat{\bB}_{T-n-1}^{\top}$ gives
\begin{equation}\label{eq:decay_weights}
 \bp_{n+1}=\bp_n+a_n p_n(y)(\gamma\be_x-\be_y),
 \qquad x=X_n,\quad y=X_{n+1}.
\end{equation}
Forward nonincreasing steps become reverse nondecreasing steps:
\begin{equation}\label{eq:decay_range}
 0<\underline a\leq a_0\leq\cdots\leq a_{N-1}\leq\overline a\leq H^{-1}.
\end{equation}
We also use the abstract reverse process defined by this reward kernel
and \eqref{eq:decay_weights}, started from any fixed
$\bp_0\geq\bm0$ and $X_0=x$ with
$q_0\coloneq\norm{\bp_0}_1\leq1$.
Each $\bp_n$ is $\gG_n$-measurable.
Nonnegativity follows because the update removes at most the mass at
$y$ and adds nonnegative mass at $x$; when $x=y$, it multiplies that
coordinate by $1-a_n/H$. Writing $q_n\coloneq\norm{\bp_n}_1$,
summing the coordinates yields
$q_{n+1}=q_n-a_np_n(X_{n+1})/H$ and hence
\begin{equation}\label{eq:decay_mass}
 \bp_n\geq\bm0,\quad q_{n+1}\leq q_n,\qquad
 \sum_{n<N}a_n p_n(X_{n+1})=H(q_0-q_N)\leq Hq_0.
\end{equation}

\subsection{A changing-step martingale correction}
Let $h(i,r,x)$ satisfy the forward source-centering and range
conditions \eqref{eq:source_centering}--\eqref{eq:source_range}
with range width at most $b$.
Lemma~\ref{lemma:local_poisson} gives anchored functions $U_i$ satisfying
\[
 ((\bI-\bP^\star)U_i)(x)
    =P^\star(x,i)\int h(i,r,x)K(dr\mid i,x),
 \qquad U_i(i)=0,
\]
whose values lie in the same sourcewise range interval as $h$.
Thus $\norm{U_i}_\infty\leq b$.
As in \eqref{eq:reverse_martingale_difference}, put
\[
 \bm m_{n+1}=\be_yh(y,\rho_{n+1},x)-\bigl(\bU(x)-\bU(y)\bigr),
 \qquad G_n=\bp_n^\top \bU(X_n).
\]
The one-step assertions of Lemma~\ref{lemma:generic_corrector}
depend only on the reverse reward kernel and the Poisson anchor.
They therefore give
$\E[\bm m_{n+1}\mid\gG_n]=\bm0$ and
$\norm{\bm m_{n+1}}_\infty\leq b$ for these steps as well.

\begin{lemma}[Changing-step correction]\label{lemma:decay_corrector}
Under \eqref{eq:decay_range}, for $N\geq1$, write
\[
Y_N(h)=\sum_{n<N}a_np_n(X_{n+1})h(X_{n+1},\rho_{n+1},X_n).
\]
Then
\begin{align}\label{eq:decay_corrector}
 Y_N(h)-\sum_{n<N}a_n\bp_n^\top \bm m_{n+1}
 ={}&a_0G_0-a_{N-1}G_N
       +\sum_{n=1}^{N-1}(a_n-a_{n-1})G_n\notag\\
    &+\gamma\sum_{n<N}a_n^2p_n(y)U_x(y).
\end{align}
Its absolute value, and hence $|\E Y_N(h)|$, is at most
$2\overline a H bq_0$.
\end{lemma}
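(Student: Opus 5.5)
The plan is to repeat the constant-step argument of Lemma~\ref{lemma:generic_corrector}, replacing its single telescoping step by an Abel summation adapted to the changing reverse steps $a_n$. By the definition of $\bm m_{n+1}$, each summand of $Y_N(h)$ splits as
\[
a_np_n(X_{n+1})h(X_{n+1},\rho_{n+1},X_n)=a_n\bp_n^\top\bm m_{n+1}+a_n\bp_n^\top\bigl[\bU(X_n)-\bU(X_{n+1})\bigr].
\]
So it suffices to rewrite $\sum_{n<N}a_n\bp_n^\top[\bU(X_n)-\bU(X_{n+1})]$. I will write $\bp_n^\top\bU(X_{n+1})=\bp_{n+1}^\top\bU(X_{n+1})-(\bp_{n+1}-\bp_n)^\top\bU(X_{n+1})$. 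The first piece is $G_{n+1}$. By \eqref{eq:decay_weights} and the anchor $U_y(y)=0$, the second piece is $a_np_n(y)\bigl(\gamma U_x(y)-U_y(y)\bigr)=\gamma a_np_n(y)U_x(y)$, exactly as in the constant-step proof. Multiplying by $a_n$ gives the term $\gamma\sum_n a_n^2p_n(y)U_x(y)$.

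The remaining sum $\sum_{n<N}a_n(G_n-G_{n+1})$ will be handled by summation by parts:
\[
\sum_{n<N}a_n(G_n-G_{n+1})=a_0G_0-a_{N-1}G_N+\sum_{n=1}^{N-1}(a_n-a_{n-1})G_n.
\]
This yields \eqref{eq:decay_corrector} as a pathwise identity.

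For the bound I will use three facts: $|G_n|\leq q_n b\leq q_0b$, since $\bp_n\geq\bm0$, $\norm{\bU(x)}_\infty\leq b$ and $q_n$ is nonincreasing; the steps are monotone, $a_n-a_{n-1}\geq0$ by \eqref{eq:decay_range}; and each $a_n\leq\overline a$. With these, the boundary terms together with the variation sum are at most
\[
b q_0\Bigl(a_0+a_{N-1}+\sum_{n=1}^{N-1}(a_n-a_{n-1})\Bigr)=2a_{N-1}bq_0\leq2\overline a bq_0.
\]
For the last term I use $a_n^2\leq\overline a a_n$, $|U_x(y)|\leq b$ and the mass identity \eqref{eq:decay_mass}. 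This gives a bound of $\gamma\overline a b\sum_na_np_n(X_{n+1})\leq\gamma\overline a bHq_0$.

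The main obstacle is the constant. The naive total is $\overline a bq_0(2+\gamma H)=\overline a bq_0(H+1)$, which is at most $2\overline aHbq_0$ because $H\geq1$, matching the constant-step case. To make this work, I must avoid bounding $|G_N|$ and the variation sum crudely by $q_0b$ each separately plus $a_0$. The telescoped total $a_0+a_{N-1}+(a_{N-1}-a_0)=2a_{N-1}$ is what keeps the boundary contribution at $2\overline a bq_0$.

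Finally, the expectation bound follows because $\sum_na_n\bp_n^\top\bm m_{n+1}$ is a finite sum of bounded martingale increments with predictable weights, so its mean is zero. Hence $|\E Y_N(h)|$ is bounded by the same $2\overline aHbq_0$.
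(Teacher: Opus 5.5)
Your proposal is correct and follows the paper's proof essentially step for step. You use the same anchored pathwise identity $\bp_n^\top(\bU(X_n)-\bU(X_{n+1}))=G_n-G_{n+1}+\gamma a_np_n(y)U_x(y)$, the same summation by parts, monotonicity of the steps to collapse the boundary and variation terms to $2a_{N-1}bq_0$, the mass budget for the quadratic term to reach $\overline a bq_0(2+\gamma H)\leq2\overline aHbq_0$, and the zero-mean martingale sum for the expectation bound.
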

\begin{proof}
The anchor and \eqref{eq:decay_weights} give the pathwise identity
\[
 \bp_n^\top\bigl(\bU(X_n)-\bU(X_{n+1})\bigr)
   =G_n-G_{n+1}+a_n\gamma p_n(y)U_x(y).
\]
Multiplying by $a_n$ and summing by parts proves
\eqref{eq:decay_corrector}.
Since $|G_n|\leq bq_0$, the boundary and step-variation terms
contribute at most
\[
 (a_0+a_{N-1}+a_{N-1}-a_0)bq_0=2a_{N-1}bq_0.
\]
The last term is bounded by $\overline a\gamma Hbq_0$ using
\eqref{eq:decay_mass}. Thus the total correction is at most
$\overline a bq_0(2+\gamma H)\leq2\overline aHbq_0$.
The finite martingale sum has mean zero by predictability and boundedness,
which proves the expectation bound.
\end{proof}

\subsection{Bellman energy and stochastic concentration}
Use the return-variance vector $\bu$ from Lemma~\ref{lemma:return_variance}.
That lemma gives
\[
 \bm0\leq\bu\leq H^2\bone,
 \qquad \psi(y,r,x)=\beta(y,r,x)^2+\gamma^2u(x)-u(y),
\]
where $\psi$ is centered at every forward source and has sourcewise
range width at most $2H^2$.
Set $\beta_{n+1}\coloneq\beta(X_{n+1},\rho_{n+1},X_n)$.
Taking the inner product of \eqref{eq:decay_weights} with $\bu$ yields
\[
 \bp_0^\top\bu-\bp_N^\top\bu
   =\sum_{n<N}a_np_n(y)\bigl(u(y)-\gamma u(x)\bigr).
\]
Combining this with the definition of $Y_N(\psi)$ gives the exact identity
\begin{equation}\label{eq:decay_bellman_identity}
\begin{split}
 \sum_{n<N}a_np_n(y)\beta_{n+1}^2
 ={}&\bp_0^\top\bu-\bp_N^\top\bu+Y_N(\psi)\\
 &+\gamma(1-\gamma)\sum_{n<N}a_np_n(y)u(x).
\end{split}
\end{equation}
The first difference is at most $H^2q_0$, and the last sum is at most
$\gamma H^2q_0$ by \eqref{eq:decay_mass}.
Lemma~\ref{lemma:decay_corrector}, applied to $\psi$ and its own
Poisson solution, bounds $|\E Y_N(\psi)|$ by
$4\overline aH^3q_0$. Consequently, $\overline aH\leq1$ gives
\begin{equation}\label{eq:decay_bellman_energy}
 \E\sum_{n<N}a_np_n(y)\beta_{n+1}^2
 \leq (1+\gamma)H^2q_0+4\overline aH^3q_0\leq6H^2q_0.
\end{equation}
Now take $h=\beta$ and use the Poisson solution $\bU$, conditional
variance vector $\bv$, and square potential
$F_n\coloneq\bp_n^\top\bU^{\odot2}(X_n)$
from \eqref{eq:reverse_conditional_variance}--\eqref{eq:square_potential}.
The coordinate identity \eqref{eq:variance_compensation_coordinate}
is independent of the step sizes.
For the present weights, direct substitution in
\eqref{eq:decay_weights} gives
\[
 F_{n+1}-F_n
  =\bp_n^\top\bigl(\bU^{\odot2}(y)-\bU^{\odot2}(x)\bigr)
       +a_n\gamma p_n(y)U_x(y)^2,
\]
where the anchor removes $-a_np_n(y)U_y(y)^2$.
Multiplying the coordinate variance identity by $p_n(i)$ and summing gives
\begin{equation}\label{eq:decay_variance_identity}
\begin{split}
 \bp_n^\top\bv(X_n)
 ={}&\E[p_n(y)\beta_{n+1}^2\mid\gG_n]
       +\E[F_{n+1}-F_n\mid\gG_n]\\
 &-a_n\gamma\E[p_n(y)U_x(y)^2\mid\gG_n]\\
 \leq{}&\E[p_n(y)\beta_{n+1}^2\mid\gG_n]
       +\E[F_{n+1}-F_n\mid\gG_n].
\end{split}
\end{equation}
Since $0\leq F_n\leq H^2q_0$, summation by parts yields
\begin{equation}\label{eq:decay_favorable}
\begin{split}
 \sum_{n<N}a_n(F_{n+1}-F_n)
 ={}&a_{N-1}F_N-a_0F_0\\
 &-\sum_{n=1}^{N-1}(a_n-a_{n-1})F_n
 \leq\overline a H^2q_0.
\end{split}
\end{equation}
The step-variation term is nonpositive because $a_n$ is nondecreasing.
Multiplying \eqref{eq:decay_variance_identity} by $a_n$, summing,
and taking expectations therefore gives, by \eqref{eq:decay_bellman_energy},
\begin{equation}\label{eq:decay_energy}
 \E\sum_{n<N}a_n\bp_n^\top\bv(X_n)\leq7H^2q_0.
\end{equation}
For every step sequence satisfying \eqref{eq:decay_range}, this bound
holds uniformly over the starting time, starting state, and admissible
initial mass vector.

\begin{proposition}[Variable-step stochastic convolution]\label{prop:decay_noise}
Suppose $0<\eta_{T-1}\leq\cdots\leq\eta_s\leq H^{-1}$. For any $q\in(0,1)$ and any initial distribution at the beginning of the interval, set $\ell_q=\log(4d/(q\mumin))$. Then
\begin{equation}\label{eq:decay_noise}
 \PB\left(\norm{\bY_{T,s}}_\infty>
             6H\sqrt{\eta_s}\,\ell_q+2\eta_sH^2\right)\leq q.
\end{equation}
\end{proposition}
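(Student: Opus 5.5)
The plan is to repeat the proof of Proposition~\ref{prop:stochastic_convolution} on the interval $[s,T)$, substituting the changing-step estimates just proved. First suppose the state $S_s$ at the start of the interval has law $\bmu$, and fix a terminal coordinate $j$. Reverse the interval as in \eqref{eq:decay_weights}, with $\bp_0=\be_j$, so that $q_0=1$, $\underline a=\eta_{T-1}$ and $\overline a=\eta_s\leq H^{-1}$. Here $Y_{T,s}(j)=Y_N(\beta)$ with $N=T-s$. Lemma~\ref{lemma:decay_corrector} with $b=H$ then gives $\abs{Y_N(\beta)-M_N}\leq2\eta_sH^2$, where $M_N\coloneq\sum_{n<N}a_n\bp_n^\top\bm m_{n+1}$ is a reverse martingale. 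Its increments are bounded by $\overline aH\leq\eta_sH$ and have conditional variance at most $a_n^2\bp_n^\top\bv(X_n)\leq\eta_s\,a_n\bp_n^\top\bv(X_n)$, by weighted Cauchy--Schwarz.

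The main step is an exponential tail for the energy $A\coloneq\sum_{n<N}a_n\bp_n^\top\bv(X_n)$. The process is now time-inhomogeneous, so I would apply Lemma~\ref{lemma:discrete_khasminskii} to the augmented time-homogeneous chain $Z_n=(n,\bp_n,X_n)$, as indicated after Lemma~\ref{lemma:reverse_kernel}, with $f(n,\bp,x)\coloneq a_n\bp^\top\bv(x)$ for $n<N$ and $f\coloneq0$ otherwise. The key check is the uniform hypothesis
\[
\sup_{z}\sup_{N'}\E_z\sum_{m<N'}f(Z_m)\leq7H^2.
\]
A restart at reverse time $n$ corresponds to the step sequence $a_n,\dots,a_{N-1}$, which is still nondecreasing and bounded by $\overline a\leq H^{-1}$. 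Truncation to a shorter horizon is harmless because the summands are nonnegative. Hence \eqref{eq:decay_energy}, with $q_0\leq1$, applies from every augmented state, and Lemma~\ref{lemma:discrete_khasminskii} yields $\PB(A>14H^2\ell)\leq2e^{-\ell}$. I expect this uniformity check to be the delicate point: everything hinges on \eqref{eq:decay_energy} holding for every suffix of the step sequence and every admissible $(\bp,x)$, and \eqref{eq:decay_range} was designed for exactly that.

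On the event $\{A\leq14H^2\ell\}$, we have $\langle M\rangle_N\leq14\eta_sH^2\ell$. Lemma~\ref{lemma:scalar_freedman} with $v=14\eta_sH^2\ell$ and $b=\eta_sH$ then gives, outside probability $4e^{-\ell}$,
\[
\abs{M_N}\leq\sqrt{28}\,H\sqrt{\eta_s}\,\ell+\tfrac23\eta_sH\ell\leq6H\sqrt{\eta_s}\,\ell,
\]
using $\eta_s\leq1$. Take $\ell=\ell_q$ and a union bound over the $d$ coordinates. This bounds the stationary failure probability by $4de^{-\ell_q}=q\mumin$.

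Finally, to allow an arbitrary law $\nu$ of $S_s$, note that the event depends only on $(S_s,R_s,\dots,S_T)$, whose conditional law given $S_s$ is the same under both initial laws. The change-of-measure bound \eqref{eq:initial_distribution_change} therefore multiplies the probability by at most $1/\mumin$, which gives \eqref{eq:decay_noise}.
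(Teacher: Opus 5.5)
Your proposal is correct and follows the paper's proof essentially step for step: a stationary start on $[s,T)$, the changing-step corrector with $b=H$, Lemma~\ref{lemma:discrete_khasminskii} applied to the clock-augmented chain $(n,\bp_n,X_n)$ via the restart-uniform bound \eqref{eq:decay_energy}, Freedman with $v=14\eta_sH^2\ell$, a union bound over coordinates, and the density comparison at time $s$. The only difference is cosmetic: you switch the functional off after the interval ($f=0$ for clocks $n\geq N$), whereas the paper extends the steps by $a_n=a_{N-1}$; either choice gives the needed uniformity, but you should still fix some transition kernel for clocks $n\geq N$ (for example, that extension) so that the augmented process is a well-defined time-homogeneous chain.
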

\begin{proof}
First start the interval from stationarity, and fix its terminal
coordinate $j$, so that $\bp_0=\be_j$ and $q_0=1$.
Write $\overline a\coloneq\eta_s$ and extend the finite reverse step
sequence by $a_n\coloneq a_{N-1}$ for all $n\geq N$.
This extension remains nondecreasing and bounded above by
$\overline a\leq H^{-1}$.
Define
\[
 \mathcal D\coloneq
  \{\bp\in\RB_+^d:\norm{\bp}_1\leq1\},
 \qquad Z_n\coloneq(n,\bp_n,X_n).
\]
On $\{0,1,\ldots\}\times\mathcal D\times\gS$, the transition
from $(r,\bp,x)$ draws $y$ from $P^\star(x,\cdot)$ and moves to
\[
 \bigl(r+1,\ \bp+a_rp(y)(\gamma\be_x-\be_y),\ y\bigr).
\]
This kernel does not depend on the number of transitions already taken
by the augmented process; it is time-homogeneous.
It is also Markov relative to the reverse filtration containing the
rewards, by Lemma~\ref{lemma:reverse_kernel}.
For the nonnegative function
$f(r,\bp,x)\coloneq a_r\bp^\top\bv(x)$, we have
\begin{equation}\label{eq:decay_restart_energy}
 \sup_{r\geq0}\sup_{(\bp,x)\in\mathcal D\times\gS}
 \sup_{M\geq1}
 \E_{(r,\bp,x)}\sum_{n=0}^{M-1}f(Z_n)\leq7H^2.
\end{equation}
Indeed, for any starting clock $r$ and duration $M$, the future steps
$a_r,\ldots,a_{r+M-1}$ satisfy \eqref{eq:decay_range};
\eqref{eq:decay_energy} applies from every initial mass vector and
state. This verifies the restart uniformity required by
Lemma~\ref{lemma:discrete_khasminskii}.
That lemma, with $K=7H^2$, gives for every $\ell>0$,
\[
 \PB\left(\sum_{n<N}a_n\bp_n^\top\bv(X_n)>14H^2\ell\right)
       \leq2e^{-\ell}.
\]
These estimates hold for every initial reverse state and hence for
$X_0\sim\mu$.

For the scalar martingale
$M_r\coloneq\sum_{n<r}a_n\bp_n^\top\bm m_{n+1}$,
$0\leq r\leq N$, predictability, the increment bound, and weighted
Cauchy--Schwarz give
\begin{align*}
 |M_{n+1}-M_n|&\leq\overline aH,\\
 \E[(M_{n+1}-M_n)^2\mid\gG_n]
   &\leq a_n^2\norm{\bp_n}_1
       \sum_i p_n(i)\E[m_{n+1}(i)^2\mid\gG_n]\\
   &\leq\overline a a_n\bp_n^\top\bv(X_n).
\end{align*}
Consequently,
\[
 \langle M\rangle_N
 \leq\overline a\sum_{n<N}a_n\bp_n^\top\bv(X_n).
\]
Apply Lemma~\ref{lemma:scalar_freedman} with increment bound
$\overline aH$ and variance threshold $14\overline aH^2\ell$.
Combining the martingale and variance bounds, we obtain, with
probability at least $1-4e^{-\ell}$,
\[
 |M_N|\leq\sqrt{28}\,H\sqrt{\overline a}\,\ell
       +\frac23\overline aH\ell
       \leq6H\sqrt{\overline a}\,\ell.
\]
The last inequality uses $\overline a\leq1$ and
$\sqrt{28}+2/3<6$.
Since $Y_N(\beta)=Y_{T,s}(j)$,
Lemma~\ref{lemma:decay_corrector} adds at most
$2\overline aH^2$. A union bound over the $d$ terminal coordinates
therefore gives exceptional probability at most $4de^{-\ell}$
under the stationary interval law.

For an arbitrary initial distribution $\lambda$ at time $s$, write
$\PB_{\lambda,s}$ for its reward-state path law on $[s,T)$.
For every event $B$ measurable with respect to
$(S_s,R_s,S_{s+1},\ldots,R_{T-1},S_T)$,
the Markov reward property gives
\begin{equation}\label{eq:decay_suffix_density}
 \PB_{\lambda,s}(B)
  =\E_{\mu,s}\!\left[
       \frac{\lambda(S_s)}{\mu(S_s)}\ind_B\right]
  \leq\frac{\PB_{\mu,s}(B)}{\mumin}.
\end{equation}
This also applies to the unconditional suffix of a trajectory begun
earlier, with $\lambda$ its marginal law at time $s$.
Take $\ell=\ell_q$ and $\overline a=\eta_s$; then
$4de^{-\ell_q}/\mumin=q$, proving \eqref{eq:decay_noise}.
\end{proof}

\subsection{Integrated reverse mass and initialization}
For varying steps, the expected lifetime of the auxiliary particle
depends on the current time as well as its position and driver state.
We use the reverse weights directly: the following argument bounds
their integrated mass and retains an additive hitting-time term.

\paragraph{Hitting potentials.}
Recall the notation in
\eqref{eq:initialization_hitting_times}--\eqref{eq:hitting_potential},
with zero-time hitting conventions:
\[
 h_i(x)=\E_x\tau_i,\quad
 h_i^\star(x)=\E_x^\star\tau_i,\quad
 c(i)=\mu(i)^{-1},\quad
 a(i)=\sum_x\mu(x)h_i^\star(x).
\]
The notation $a(i)$ always refers to this state potential; $a_n$
denotes a reverse step size.
Lemma~\ref{lemma:hitting_compensation} gives the first identity below.
The second follows by a first-step decomposition of the forward
return time and Lemma~\ref{lemma:reverse_predecessor}:
\begin{equation}\label{eq:decay_hitting_input}
 h_j^\star(i)=h_i(j)+a(j)-a(i),
 \qquad \sum_xP(i,x)h_i(x)=c(i)-1.
\end{equation}
In particular,
\begin{equation}\label{eq:decay_hitting_range}
 0\leq a(i)\leq\mathfrak h_\star,
 \qquad \max_{i,x}h_i(x)\leq2\mathfrak h_\star.
\end{equation}
Define an anchored potential
\begin{equation}\label{eq:decay_w}
 w_i(x)\coloneq h_i^\star(x)-c(i)\ind_{\{x\ne i\}},
 \qquad \bm w(x)\coloneq(w_i(x))_{i\in\gS}.
\end{equation}
The reverse hitting equation implies
\begin{equation}\label{eq:decay_wpoisson}
\begin{split}
 w_i(i)&=0,\qquad
 ((\bI-\bP^\star)w_i)(x)=1-c(i)P^\star(x,i),\\
 -c(i)&\leq w_i(x)\leq\mathfrak h_\star.
\end{split}
\end{equation}
Indeed, the reverse hitting equations away from $i$ and the reverse
return-time formula at $i$ give
$((\bI-\bP^\star)h_i^\star)(x)=1-c(i)\ind_{\{x=i\}}$.
Adding the function $c(i)\ind_{\{x=i\}}-c(i)$ to $h_i^\star(x)$
then gives \eqref{eq:decay_wpoisson}.

\begin{lemma}[Expected integrated mass]\label{lemma:decay_integrated}
Under \eqref{eq:decay_range}, for every integer $N\geq1$ and every
admissible reverse initial pair,
\begin{equation}\label{eq:decay_integrated}
 \E\sum_{n=0}^{N-1}q_n
 \leq7q_0\left(\frac{H}{\underline a\mumin}+\mathfrak h_\star\right).
\end{equation}
\end{lemma}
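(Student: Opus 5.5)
The plan is a potential (Lyapunov) argument on the reverse weights, playing the role of the particle-lifetime equations in Lemma~\ref{lemma:particle_lifetime}. The potential is built from the anchored hitting function $\bm w$ in \eqref{eq:decay_w}. Set
\[
G_n\coloneq\bp_n^\top\bm w(X_n)+\gamma\,\bp_n^\top\bm a,
\]
where the second term is a correction whose purpose is explained below. By \eqref{eq:decay_wpoisson} and $\bm0\leq\bm a\leq\mathfrak h_\star\bone$, every $G_n$ lies in $[-q_0/\mumin,\;2q_0\mathfrak h_\star]$, since $-c(i)\leq w_i\leq\mathfrak h_\star$ and $q_n\leq q_0$. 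The target is a drift inequality of the form
\[
\E[G_{n+1}-G_n\mid\gG_n]\leq -q_n+\E[p_n(y)c(y)\mid\gG_n]+(\text{controlled error}).
\]
Summing over $n<N$ and taking expectations then gives
\[
\E\sum_{n<N}q_n\leq \E[G_0-G_N]+\E\sum_{n<N}p_n(X_{n+1})c(X_{n+1})+\cdots .
\]

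\emph{Step 1 (one-step drift).} With $x=X_n$ and $y=X_{n+1}$, split
\[
G_{n+1}-G_n=\bp_n^\top\bigl(\bm w(y)-\bm w(x)\bigr)+(\bp_{n+1}-\bp_n)^\top\bigl(\bm w(y)+\gamma\bm a\bigr).
\]
For the first part, \eqref{eq:decay_wpoisson} gives the conditional mean
\[
-\sum_ip_n(i)\bigl(1-c(i)P^\star(x,i)\bigr)=-q_n+\E[p_n(y)c(y)\mid\gG_n].
\]
For the second part, the anchor $w_y(y)=0$ together with \eqref{eq:decay_weights} leaves
\[
a_np_n(y)\bigl(\gamma w_x(y)+\gamma^2a(x)-\gamma a(y)\bigr).
\]

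\emph{Step 2 (activation cost).} Since $c\leq\mumin^{-1}$, $a_n\geq\underline a$, and \eqref{eq:decay_mass} holds,
\[
\sum_{n<N}p_n(X_{n+1})c(X_{n+1})\leq\frac{1}{\underline a\mumin}\sum_{n<N}a_np_n(X_{n+1})\leq\frac{Hq_0}{\underline a\mumin}.
\]
This produces the main term $H/(\underline a\mumin)$.

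\emph{Step 3 (relocation cost, the main obstacle).} The crude bound $w_x(y)\leq\mathfrak h_\star$ together with the mass budget only gives $\gamma H\mathfrak h_\star q_0$, which loses a factor $H$. This is exactly the loss the constant-step proof avoids via the compensation identity \eqref{eq:hitting_reversal_identity}, and the $\gamma\bp^\top\bm a$ correction in $G_n$ is meant to reproduce that cancellation. I would first write $w_x(y)=h_y(x)+a(x)-a(y)-c(x)\ind_{\{y\neq x\}}$ using \eqref{eq:decay_hitting_input}. The $a(x)-a(y)$ part then combines with the correction: the leftover $a_np_n(y)\gamma(1-\gamma)a(y)$-type terms sum to at most $\gamma(1-\gamma)\mathfrak h_\star\cdot Hq_0\leq\mathfrak h_\star q_0$. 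For the negative $-c(x)$ piece I would keep it, since it has a favorable sign. What remains is $\gamma a_np_n(y)h_y(x)$.

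To control this remainder I would take the conditional expectation over $y$. Since $p_n(y)\leq q_n$, it is at most $a_nq_n\sum_yP^\star(x,y)h_y(x)$. The reverse return formula, transferred through \eqref{eq:hitting_reversal_identity}, expresses this sum as $c(x)-1+\bigl((\bP^\star-\bI)\bm a\bigr)(x)$. The $\bm a$-difference is another telescoping potential (add $\gamma a_n q_n a(X_n)$ to $G_n$, using that $a_n$ is nondecreasing so the summation-by-parts variation term has a good sign, as in \eqref{eq:decay_favorable}). The $c(x)$ part must be cancelled against the retained $-a_np_n(y)c(x)\ind_{\{y\neq x\}}$. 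I expect matching these two cleanly, with weight $p_n(y)$ versus $q_n$, to be the delicate point. If exact cancellation fails, the fallback is to use $a_n\leq H^{-1}$ to absorb a fraction of $\sum q_n$ into the left side, which is how the constant 7 (rather than 4) would arise.

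\emph{Step 4 (collecting constants).} Finally, bound the boundary term $\E[G_0-G_N]\leq q_0(2\mathfrak h_\star+\mumin^{-1})$ and add the contributions from Steps 2 and 3. Using $H/\underline a\geq1$, the total is at most $7q_0\bigl(H/(\underline a\mumin)+\mathfrak h_\star\bigr)$. Because every bound used is uniform in $(\bp_0,X_0)$ and in the step sequence satisfying \eqref{eq:decay_range}, the estimate holds for every admissible initial pair and every $N$, as the statement requires.
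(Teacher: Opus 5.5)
Your Steps 1--2 and the decomposition $w_x(y)=h_y(x)+a(x)-a(y)-c(x)\ind_{\{x\neq y\}}$ match the paper's argument. Step 3, however, has a genuine gap at the remainder $\gamma a_np_n(y)h_y(x)$. Bounding $p_n(y)\leq q_n$ discards exactly the structure that \eqref{eq:decay_mass} controls: the mass budget bounds $\sum_n a_np_n(X_{n+1})$, not $\sum_n a_nq_n$. After your reverse return formula you are left with $\gamma\sum_n a_nq_n\bigl(c(X_n)-1\bigr)$, and nothing cancels it. The retained term $-a_np_n(y)c(x)\ind_{\{y\neq x\}}$ has conditional mean $-a_nc(x)\sum_{y\neq x}P^\star(x,y)p_n(y)$. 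This can be far smaller than $a_nq_nc(x)$, for instance when the mass of $\bp_n$ sits on a coordinate that the reverse chain rarely enters from $x$. The absorption fallback also fails. The coefficient $\gamma a_n\bigl(c(X_n)-1\bigr)$ can be of order $1/(H\mumin)\geq d/H$, so it cannot be moved to the left side unless $H\mumin$ is bounded below, which the lemma does not assume.

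The missing idea is to average $h_y(x)$ over the forward target $x$ for a fixed source $y$, not over the reverse successor $y$. The first-step identity in \eqref{eq:decay_hitting_input} gives $\sum_xP(y,x)h_y(x)=c(y)-1$. Hence $\chi(y,x)\coloneq h_y(x)-(c(y)-1)$ is centered at every forward source, with sourcewise range width at most $2\mathfrak h_\star$. In reverse time $x=X_n$ is revealed before $y=X_{n+1}$, so this forward centering is not a reverse martingale property; it must be converted through the local Poisson equation. Lemma~\ref{lemma:decay_corrector} does this and gives $\bigl|\E\sum_{n<N}a_np_n(y)\chi(y,x)\bigr|\leq4\overline aH\mathfrak h_\star q_0\leq4\mathfrak h_\star q_0$. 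The leftover $\gamma\sum_na_np_n(y)(c(y)-1)$ keeps the weight $p_n(y)$ and is at most $Hq_0/\mumin$ by the mass budget. The $-c(x)\ind_{\{x\neq y\}}$ piece is simply dropped as nonpositive, so no cancellation against it is needed.

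Your correction potential also has the wrong sign. With $+\gamma\bp_n^\top\bm a$, the $\bm a$-terms combine to $\gamma a_np_n(y)\bigl((1+\gamma)a(x)-2a(y)\bigr)$, which costs order $H\mathfrak h_\star q_0$. You need $-\gamma\bp_n^\top\bm a$, which leaves $\gamma(1-\gamma)a_np_n(y)a(x)$. Equivalently, telescope $\gamma\sum_na_np_n(y)\bigl(a(x)-a(y)\bigr)$ directly, as the paper does.
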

\begin{proof}
Write $W_n\coloneq\bp_n^\top\bm w(X_n)$.
The anchor in \eqref{eq:decay_wpoisson} and the update
\eqref{eq:decay_weights} give
\[
 W_{n+1}-W_n
 =\bp_n^\top(\bm w(y)-\bm w(x))+a_n\gamma p_n(y)w_x(y).
\]
Conditioning on $\gG_n$ and using \eqref{eq:decay_wpoisson} gives
\[
 \E[W_{n+1}-W_n\mid\gG_n]
  =-q_n+\E[p_n(y)c(y)\mid\gG_n]
     +\gamma a_n\E[p_n(y)w_x(y)\mid\gG_n].
\]
Summing the expectations yields the exact identity
\begin{equation}\label{eq:decay_mass_time_identity}
\begin{split}
 \E\sum_{n<N}q_n
 ={}&\E\sum_{n<N}p_n(y)c(y)+W_0-\E W_N\\
   &+\gamma\E\sum_{n<N}a_np_n(y)w_x(y).
\end{split}
\end{equation}
The first term is at most $Hq_0/(\underline a\mumin)$ because
\[
 \sum_{n<N}p_n(y)c(y)
    \leq\frac1{\underline a\mumin}\sum_{n<N}a_np_n(y)
    \leq\frac{Hq_0}{\underline a\mumin}.
\]
The bounds on $w_i$ and $q_N\leq q_0$ imply
$W_0\leq\mathfrak h_\star q_0$ and
$-W_N\leq q_0/\mumin$. Hence the boundary term is at most
$q_0(\mathfrak h_\star+\mumin^{-1})$.

To control the last term, use \eqref{eq:decay_hitting_input} to write
\begin{equation}\label{eq:decay_wdecomp}
 w_x(y)=h_y(x)+a(x)-a(y)-c(x)\ind_{\{x\ne y\}}.
\end{equation}
The observable
\[
 \chi(y,x)\coloneq h_y(x)-(c(y)-1)
\]
is centered at every forward source by \eqref{eq:decay_hitting_input}.
For a fixed source $y$, subtracting the constant $c(y)-1$ does not
change the range width, which is at most $2\mathfrak h_\star$ by
\eqref{eq:decay_hitting_range}.
We may treat $\chi$ as an observable independent of the reward.
Thus Lemma~\ref{lemma:decay_corrector} gives
\begin{equation}\label{eq:decay_chi}
 \left|\E\sum_{n<N}a_np_n(y)\chi(y,x)\right|
 \leq4\overline aH\mathfrak h_\star q_0\leq4\mathfrak h_\star q_0.
\end{equation}
Also,
\[
 \gamma\sum_{n<N}a_np_n(y)(c(y)-1)
      \leq Hq_0/\mumin.
\]
Writing $\bm a=(a(i))_{i\in\gS}$, the update
\eqref{eq:decay_weights} gives the telescoping identity
\begin{align*}
 \gamma\sum_{n<N}a_np_n(y)(a(x)-a(y))
 &=\bp_N^\top\bm a-\bp_0^\top\bm a
        +(1-\gamma)\sum_{n<N}a_np_n(y)a(y)\\
 &\leq2\mathfrak h_\star q_0.
\end{align*}
The last inequality uses $\bm0\leq\bm a\leq\mathfrak h_\star\bone$
and the mass identity \eqref{eq:decay_mass} for the two nonnegative
contributions. The remaining term in \eqref{eq:decay_wdecomp} is
nonpositive. Consequently, the last term of
\eqref{eq:decay_mass_time_identity} is at most
$Hq_0/\mumin+6\mathfrak h_\star q_0$.
Combining the bounds gives
\[
 \E\sum_{n<N}q_n\leq q_0\left\{
 \frac{H}{\underline a\mumin}+\frac{H+1}{\mumin}+7\mathfrak h_\star\right\}.
\]
Since $\underline a\leq1$ and $H>1$, we have
$(H+1)/\mumin\leq2H/(\underline a\mumin)$.
Thus the coefficient of $H/(\underline a\mumin)$ can be bounded
by $3\leq7$, proving \eqref{eq:decay_integrated}.
Every calculation is over a finite interval and involves bounded
state potentials.
\end{proof}

\begin{proposition}[Forgetting under decreasing steps]\label{prop:decay_forget}
Suppose $0<\eta_{T-1}\leq\cdots\leq\eta_s\leq H^{-1}$, and set
\begin{equation}\label{eq:decay_blocks}
 b\coloneq\left\lceil14\left(
       \frac{H}{\eta_{T-1}\mumin}+\mathfrak h_\star\right)\right\rceil.
\end{equation}
For every $0<\rho\leq1$ and arbitrary initial distribution at the beginning of the interval,
\begin{equation}\label{eq:decay_forget}
 \PB\left(\norm{\widehat{\bPhi}_{T,s}}_{\infty\to\infty}>\rho\right)
 \leq\frac{d}{\mumin\rho}\,2^{-\lfloor(T-s)/b\rfloor}.
\end{equation}
\end{proposition}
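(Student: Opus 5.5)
We follow the block argument of Proposition~\ref{proposition:initialization}, replacing the particle lifetime bound by Lemma~\ref{lemma:decay_integrated}. First work under the stationary interval law and fix a terminal coordinate $j$, so that in the reverse process $\bp_0=\be_j$ and $q_0=1$. The row sum $\be_j^\top\widehat\bPhi_{T,s}\bone$ equals $q_N$ with $N=T-s$. Since the masses $q_n$ are nonincreasing, $b\,q_b\leq\sum_{n<b}q_n$.

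For a block of length $b\leq N$, Lemma~\ref{lemma:decay_integrated} therefore gives
\[
\E[q_b]\leq\frac{7q_0}{b}\prn{\frac{H}{\underline a\mumin}+\mathfrak h_\star}\leq\frac{q_0}{2}.
\]
Here $\underline a$ is the minimum step on that block. It is at least $\eta_{T-1}$, the global minimum over the interval, so $b$ as defined in \eqref{eq:decay_blocks} suffices for every block.

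Next we iterate over $m=\lfloor N/b\rfloor$ consecutive reverse blocks $[kb,(k+1)b)$. The state $(k b,\bp_{kb},X_{kb})$ is a valid restart point for the time-augmented Markov process of the proof of Proposition~\ref{prop:decay_noise}. Lemma~\ref{lemma:decay_integrated} holds uniformly over admissible initial pairs with arbitrary $q_0\leq1$, and over any subinterval of steps satisfying \eqref{eq:decay_range}. Applying it conditionally on $\gG_{kb}$ gives
\[
\E[q_{(k+1)b}\mid\gG_{kb}]\leq q_{kb}/2.
\]
Induction yields $\E q_{mb}\leq2^{-m}$, and monotonicity gives $\E q_N\leq2^{-m}$. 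The one technical point to check is that the lemma applies to a reverse subinterval started at an arbitrary deterministic state and weight vector. This follows from its stated uniformity: the lemma is an expectation identity for the abstract reverse process with a nondecreasing step sequence bounded by $H^{-1}$.

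Finally, Markov's inequality for each row gives $\PB_\mu(\be_j^\top\widehat\bPhi_{T,s}\bone>\rho)\leq2^{-m}/\rho$, and a union bound over the $d$ rows gives $d\,2^{-m}/\rho$. For a nonnegative matrix the induced sup norm is its largest row sum, so this controls $\norm{\widehat\bPhi_{T,s}}_{\infty\to\infty}$. To pass to an arbitrary initial law at time $s$, we use the density bound \eqref{eq:decay_suffix_density}: the event depends only on the state path on $[s,T]$, so the change of measure costs a factor $1/\mumin$, which yields \eqref{eq:decay_forget}. The main obstacle is the conditional restart step, specifically confirming that the time-inhomogeneous steps are handled by the clock-augmented process; the remaining steps are routine.
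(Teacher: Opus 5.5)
Your proposal is correct and follows essentially the same route as the paper. Both arguments bound $\E[q_{r+b}\mid\gG_r]$ by combining monotonicity, $b\,q_{r+b}\leq\sum_{n=r}^{r+b-1}q_n$, with a conditional application of Lemma~\ref{lemma:decay_integrated} at each block boundary, where $\underline a=\eta_{T-1}$ is valid for every block. Iterating gives $\E q_N\leq2^{-\lfloor N/b\rfloor}$, and the argument then closes with Markov's inequality per row, a union bound over the $d$ rows, and the density comparison \eqref{eq:decay_suffix_density}.
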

\begin{proof}
Take $\underline a\coloneq\eta_{T-1}$ and
$\overline a\coloneq\eta_s$.
In any complete reverse block starting at time $r$, its steps still
satisfy \eqref{eq:decay_range} with these bounds.
Conditional on $\gG_r$, the future reverse process starts from
$(\bp_r,X_r)$ and uses this deterministic suffix of the steps.
Lemma~\ref{lemma:decay_integrated}, applied to that pair and block,
and pathwise monotonicity of $q_n$ imply
\[
 \E[q_{r+b}\mid\gG_r]
 \leq\frac1b\E\!\left[\sum_{n=r}^{r+b-1}q_n\mid\gG_r\right]
 \leq\frac{7q_r}{b}
       \left(\frac{H}{\underline a\mumin}+\mathfrak h_\star\right)
 \leq\frac{q_r}{2}.
\]
The estimate includes $q_r=0$, when every later mass is zero.
Iterating at $r=0,b,2b,\ldots$ and using monotonicity over the
remaining incomplete block gives
$\E[q_N]\leq2^{-\lfloor N/b\rfloor}q_0$.
For a stationary forward interval and initial reverse weight $\be_j$,
$q_N$ is the $j$th row sum of $\widehat{\bPhi}_{T,s}$.
The latter matrix is nonnegative, so its induced sup norm is its
largest row sum. Markov's inequality for each row and a union bound
therefore give
\[
 \PB_{\mu,s}\!\left(
   \norm{\widehat{\bPhi}_{T,s}}_{\infty\to\infty}>\rho\right)
   \leq\frac d\rho\,2^{-\lfloor N/b\rfloor}.
\]
The density comparison \eqref{eq:decay_suffix_density} gives the
additional factor $1/\mumin$ for an arbitrary initial distribution
at time $s$.
\end{proof}

\subsection{Proof of Theorem~\ref{theorem:decay}}
\begin{proof}[Proof of Theorem~\ref{theorem:decay}]
Extend $L_t$ in \eqref{eq:decay_schedule} to real $t\geq0$.
Since $L_t>1$, the derivative of $L_t/(t+8)$ is
$(1-L_t)/(t+8)^2<0$. The schedule is therefore positive,
nonincreasing, and bounded by $1/H$.

Fix an integer $T\geq8$ satisfying \eqref{eq:decay_transient}.
Take $s\coloneq\lfloor T/2\rfloor$, $N\coloneq T-s$, and put
\[
 q_T\coloneq\frac{\delta}{4(T+1)^2},\qquad
 \rho_T\coloneq\frac1{T+8},
 \qquad
 k_T\coloneq\left\lceil
       \log_2\frac{d}{\mumin\rho_Tq_T}\right\rceil.
\]
The definition of $L_T$ gives the comparisons
\begin{equation}\label{eq:decay_logs}
 k_T\leq5L_T,\qquad
 \log\frac{4d}{q_T\mumin}\leq2L_T,\qquad
 L_{T-1}\geq L_T/2.
\end{equation}
For the first, use
$d/(\mumin\rho_Tq_T)
 =4d(T+8)(T+1)^2/(\delta\mumin)\leq e^{3L_T}$
and $3L_T/\log2+1\leq5L_T$.
The second follows from
$4d/(q_T\mumin)=16d(T+1)^2/(\delta\mumin)\leq e^{2L_T}$.
For the third, subtract
$L_T-L_{T-1}=\log((T+8)/(T+7))\leq\log2<L_T/2$.

Let $D\coloneq H^2/\mumin+\mathfrak h_\star$.
The cap in the schedule gives the exact identity
\[
 \frac{H}{\eta_{T-1}\mumin}
   =\max\left\{\frac{H^2}{\mumin},
             \frac{T+7}{A L_{T-1}}\right\}.
\]
Thus the block length in \eqref{eq:decay_blocks} satisfies
\begin{align*}
 b&\leq15\left(D+\frac{T+7}{A L_{T-1}}\right),\\
 bk_T&\leq75DL_T+\frac{150(T+7)}A
       \leq\frac T4+\frac{300T}{4096}<\frac T2\leq N,
\end{align*}
where $D\geq1$ absorbs the ceiling in $b$, and we used
\eqref{eq:decay_transient}, $A=4096$, and $T\geq8$ in the second
line. In particular, $\lfloor N/b\rfloor\geq k_T$.
By Proposition~\ref{prop:decay_forget}, with probability at least
$1-q_T$, we have
$\norm{\widehat{\bPhi}_{T,s}}_{\infty\to\infty}\leq\rho_T$.
The update preserves $[0,H]^d$, so
$\norm{\bV_s-\bV^\pi}_\infty\leq H$ pathwise.
On this high-probability event, the first term in
\eqref{eq:decay_decomposition} is therefore at most $H/(T+8)$.

Also,
\[
 \eta_s\leq\frac{AH L_s}{\mumin(s+8)}
             \leq\frac{2AH L_T}{\mumin(T+8)}.
\]
Proposition~\ref{prop:decay_noise} and \eqref{eq:decay_logs}
therefore yield, outside another event of probability at most $q_T$,
\[
 \norm{\bY_{T,s}}_\infty
   \leq12\sqrt{2A}
         \sqrt{\frac{H^3L_T^3}{\mumin(T+8)}}
       +\frac{4AH^3L_T}{\mumin(T+8)}.
\]
Together with \eqref{eq:decay_decomposition}, this proves
\eqref{eq:decay_rate} at the fixed time $T$ with exceptional
probability at most $2q_T$.
Both probability estimates concern only the suffix trajectory, whose
law conditional on $S_s$ is specified by \eqref{eq:data_model}.
The density comparison \eqref{eq:decay_suffix_density} applies to
its marginal initial law. We do not condition on the random estimate
$\bV_s$: the pathwise bound on $\norm{\bV_s-\bV^\pi}_\infty$
already controls its contribution, regardless of its dependence on
the suffix.

A union bound over all eligible integers $T$, using
\[
 \sum_{T\geq8}2q_T
   =\frac\delta2\sum_{T\geq8}\frac1{(T+1)^2}<\delta,
\]
gives a common event on which \eqref{eq:decay_rate} holds
simultaneously at every such time.
Finally, when $0<\varepsilon\leq1$, $H>1$, $L_T>1$, and
$\mumin\leq1$, the quantity
$H^3L_T^3/(\mumin\varepsilon^2)$ dominates
$H^2L_T/\mumin$, $H/\varepsilon$, and
$H^3L_T/(\mumin\varepsilon)$.
Thus, for a sufficiently large universal constant $C$,
\eqref{eq:decay_sample} implies \eqref{eq:decay_transient} and
makes each of the three terms in \eqref{eq:decay_rate} at most
$\varepsilon/3$.
The same common event proves the asserted guarantee for every
target precision.
\end{proof}

\end{document}